\documentclass{article} %
\usepackage{iclr2026_conference,times}

\usepackage{hyperref}
\usepackage{url}
\usepackage{mathtools}
\usepackage{float}
\usepackage{booktabs}  
\usepackage{adjustbox}
\usepackage[inline]{enumitem} %
\usepackage{wrapfig}
\usepackage{amsthm}
\usepackage{graphicx}  %
\usepackage{blkarray} 
\usepackage{dsfont} %
\usepackage{etoolbox} %
\usepackage{arydshln}
\usepackage{mathdots}  %
\usepackage{cleveref}
\usepackage{xcolor}
\usepackage{tikz}
\usepackage[textwidth=3.5cm]{todonotes}
\usetikzlibrary{arrows, arrows.meta, positioning, shapes, calc, fit, shapes.geometric, decorations.pathreplacing, patterns, shapes.symbols, shapes.multipart, shapes.misc, tikzmark, decorations.markings, decorations.pathmorphing, matrix, backgrounds}

\definecolor{sfill}{rgb}{0.95, 0.76, 0.76}       %
\definecolor{yfill}{rgb}{0.76, 0.87, 0.95}       %
\definecolor{order1color}{rgb}{0.0, 0.0, 0.0} %
\definecolor{order2color}{rgb}{0.0, 0.0, 0.0} %
\newtoggle{showcomments}
\togglefalse{showcomments} %
\definecolor{franamcolor}{HTML}{264653}      %
\definecolor{franotecolor}{HTML}{2A9D8F}       %
\definecolor{fraquestioncolor}{HTML}{ffb703}   %
\definecolor{fratodocolor}{HTML}{F4A261}       %
\definecolor{fraissuecolor}{HTML}{d62828}      %

\DeclareRobustCommand{\Fra}[2][note]{%
  \iftoggle{showcomments}{%
    {%
      \sffamily%
      \textcolor{franamcolor}{\textbf{Fra}}%
      \ifstrequal{#1}{todo}{%
        \textcolor{fratodocolor}{\textbf{ [TODO]:} #2}%
      }{%
        \ifstrequal{#1}{question}{%
          \textcolor{fraquestioncolor}{\textbf{ [QUESTION]:} #2}%
        }{%
          \ifstrequal{#1}{issue}{%
            \textcolor{fraissuecolor}{\textbf{ [ISSUE]:} #2}%
          }{%
            \textcolor{franotecolor}{\textbf{ [NOTE]:} #2}%
          }%
        }%
      }%
    }%
  }{}%
}

\DeclareRobustCommand{\fraNote}[2][note]{%
  \iftoggle{showcomments}{%
    \ifstrequal{#1}{todo}{%
      \todo[color=fratodocolor, author=\textcolor{franamcolor}{\textbf{Fra}}]{\textbf{[TODO]:} #2}%
    }{%
      \ifstrequal{#1}{question}{%
        \todo[color=fraquestioncolor, author=\textcolor{franamcolor}{\textbf{Fra}}]{\textbf{[QUESTION]:} #2}%
      }{%
        \ifstrequal{#1}{issue}{%
          \todo[color=fraissuecolor, author=\textcolor{franamcolor}{\textbf{Fra}}]{\textbf{[ISSUE]:} #2}%
        }{%
          \todo[color=franotecolor, author=\textcolor{franamcolor}{\textbf{Fra}}]{\textbf{[NOTE]:} #2}%
        }%
      }%
    }%
  }{%
  }%
}

\newcommand{\rboxxx}[2][white]{\tikz[baseline=(n.base)]\node[rounded corners,fill=#1,inner sep=2pt, font=\bfseries\small] (n) {#2};}

\newcommand{\rbox}[2][gray!20]{\tikz[baseline=(n.base)]\node[rounded corners,fill=#1,inner sep=1.5pt, outer sep=0, minimum size=1.2em] (n) {#2};}
\title{Transformers Learn Latent Mixture Models In-Context via Mirror Descent}

\author{Francesco D'Angelo\thanks{Correspondence to: \texttt{francesco.dangelo@epfl.ch}} \\
TML Lab, EPFL \\
Switzerland \\
\And
Nicolas Flammarion \\
TML Lab, EPFL \\
Switzerland \\
}

\iclrfinalcopy %

\usepackage{amsmath,amsfonts,bm}

\DeclareMathOperator{\Concat}{Concat}

\def\eqref#1{equation~\ref{#1}}

\def\1{\bm{1}}

\DeclareMathAlphabet{\mathsfit}{\encodingdefault}{\sfdefault}{m}{sl}
\SetMathAlphabet{\mathsfit}{bold}{\encodingdefault}{\sfdefault}{bx}{n}

\newcommand{\E}{\mathbb{E}}

\newcommand{\softmax}{\mathrm{softmax}}

\newcommand{\Cov}{\mathrm{Cov}}

\DeclareMathOperator{\Proj}{Proj}

\newtheorem{prop}{Proposition}
\newtheorem{theorem}{Theorem}
\newtheorem{lemma}{Lemma}

\newtheorem{Def}{Definition}

\definecolor{mgreen}{HTML}{52796f}
\definecolor{mred}{HTML}{c1121f}
\definecolor{mblue}{HTML}{003566}
\definecolor{mgold}{HTML}{ffb703}
\definecolor{mlightblue}{HTML}{5B99C2}
\definecolor{mpale}{HTML}{e9d8a6}
\definecolor{teal}{HTML}{008080} %

\begin{document}

\maketitle

\begin{abstract}

  Sequence modelling requires determining which past tokens are causally relevant from the context and their importance: a process inherent to the attention layers in transformers, yet whose underlying learned mechanisms remain poorly understood. In this work, we formalize the task of estimating token importance as an in-context learning problem by introducing a framework based on Mixture of Transition Distributions, where a latent variable determines the influence of past tokens on the next. The distribution over this latent variable is parameterized by unobserved mixture weights that transformers must learn in-context. We demonstrate that transformers can implement Mirror Descent to learn these weights from the context. Specifically, we give an explicit construction of a three-layer transformer that exactly implements one step of Mirror Descent and prove that the resulting estimator is a first-order approximation of the Bayes-optimal predictor. Corroborating our construction and its learnability via gradient descent, we empirically show that transformers trained from scratch learn solutions consistent with our theory: their predictive distributions, attention patterns, and learned transition matrix closely match the construction, while deeper models achieve performance comparable to multi-step Mirror Descent.
\end{abstract}

\section{Introduction}
\vspace{-2mm}
In recent years, Machine Learning has been transformed by large language models (LLMs).
These massive, complex models achieve unprecedented performance across diverse tasks beyond text generation~\citep{bubeck2023sparks}. A striking example is in-context learning (ICL)~\citep{brown2020language,min2022rethinking}, where models adapt to new tasks using only examples in the prompt without parameter updates. 
Mechanistic interpretability has made significant strides in explaining this phenomenon, revealing that transformers can implement computational circuits \citep{elhage2021mathematical,olsson2022context,dangelo2025selective} that mimic known algorithms. 
For instance, in settings like linear regression, they learn to implement gradient-based optimization \citep{garg2022can, akyurek2022learning,von2023transformers,von2023uncovering,zhang2023trained,ahn2023transformers,mahankali2023one}, while for Markov Chains, they implement counting-based estimators for the transition probabilities \citep{nichani2024how,edelman2024evolution,bietti2023birth,rajaraman2024transformers,ildiz2024self,svete2024transformers, chen2024unveiling}.
However, these successes are confined to problems where all the sequences rely on the same, fixed causal structure: the relationship between tokens is static. For instance, in the case of regression, the model only needs to learn that every even token depends on the previous odd token and in Markov chains, it learns that the next token depends only on the previous one. 

Real-world sequential data, particularly language, defies such simplicity. The meaning of a sentence arises not from the fixed sequence of word meanings, but from the dynamic causal links between the words that must be inferred from the context. These underlying structures, which are fundamental to language, are latent variables hidden from direct observation.
\begin{figure}[t]
    \centering
    \vspace{-5mm}
    \begin{tikzpicture}[
    remember picture,
    word/.style={font=\bfseries},
    high-arrow/.style={-Latex, very thick, shorten >=2pt, shorten <=2pt},
    low-arrow/.style={-Latex, thin, dashed, shorten >=2pt, shorten <=2pt},
    label/.style={above, midway, font=\scriptsize, sloped, yshift=1pt}
]
    \node[text width=\linewidth, align=left] (sentence) {The \tikzmarknode[word]{dog}{{\rboxxx[mblue!20]{dog}}} that saw the \tikzmarknode[word]{bird}{{\rboxxx[mred!20]{bird}}} threw the \tikzmarknode[word]{ball}{\rboxxx[mgold!20]{ball}}, and then ran to \tikzmarknode{blank}{\_\_\_\_\_\_}.};

    \node[anchor=south, yshift=-1mm] at (blank.north) (fetch) {{\textcolor{red}{fetch}}};

    \draw[high-arrow, mblue, bend right=10] (dog.south) 
        to node[below, midway, font=\scriptsize, sloped, yshift=15pt, color=mblue]{High $\lambda_{g}$} (fetch.south);

    \draw[high-arrow, mgold, bend left=20] (ball.north) 
        to node[label, color=mgold,  yshift=-15pt]{High $\lambda_{g'}$} (fetch.north);

    \draw[low-arrow, mred, bend left=25] (bird.north) 
        to node[label, color=mred,  yshift=-15pt]{Low $\lambda_{g''}$} (fetch.north);

    \matrix (kb_matrix) [
        matrix of nodes,
        right=1.2cm of blank, yshift=0.3cm,
        nodes={font=\scriptsize, minimum size=1.3em, inner sep=1pt, anchor=base},
        column sep=0.1pt, row sep=4pt,
        column 1/.style={nodes={ anchor=base east, font=\bfseries\scriptsize}}, %
        row 1/.style={nodes={anchor=center, font=\itshape\scriptsize}}, %
    ]
    {
                    & fetch & chase & roll  & $\cdots$ & fly \\
    \textcolor{mblue}{dog}      & 0.6   & 0.3   & 0.0   & $\cdots$ & 0.0   \\
    $\cdot$ & $\cdot$   & $\cdot$   & $\cdot$   & $\cdots$ & $\cdot$ \\
    \textcolor{mgold}{ball}     & 0.5   & 0.0   & 0.4   & $\cdots$ & 0.0   \\
    \textcolor{mred}{bird} & 0.0   & 0.1   & 0.0   & $\cdots$ & 0.8   \\
    };

    \node[left=0.05cm of kb_matrix] {$\pi^\star = $};
    \begin{scope}[on background layer]
        \node [fit=(kb_matrix-2-1)(kb_matrix-2-6), draw=mblue, thick, rounded corners=2pt, inner sep=1pt] {};
        \node [fit=(kb_matrix-4-1)(kb_matrix-4-6), draw=mgold, thick, rounded corners=2pt, inner sep=1pt] {};
        \node [fit=(kb_matrix-5-1)(kb_matrix-5-6), draw=mred, thick, rounded corners=2pt, inner sep=1pt] {};
    \end{scope} 
    
\end{tikzpicture}
    \vspace{-8mm}
    \caption{\footnotesize To predict the final word, the model must infer the causal relevance of past tokens. Our MTD framework models this by separating a static, context-free unigram ($\pi^\star$) from dynamic, context-dependent weights ($\bm{\lambda}$) that are inferred in-context. The model learns to assign high weights to the causally relevant positions ('dog') and ('ball'), activating their respective slices of the unigram (e.g., $\pi^\star(\text{dog}, \cdot)$ favouring verbs like \textit{fetch} or \textit{chase}). Conversely, a low weight is assigned to the distractor ('bird'), suppressing its influence.}
    \label{fig:intro_example}
    \vspace{-4mm}
\end{figure}
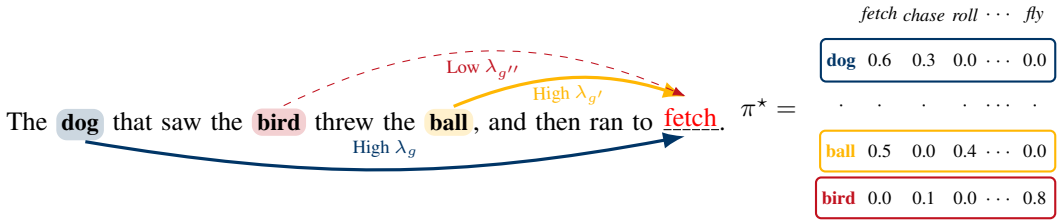
Consider the sentence in Figure~\ref{fig:intro_example}; to predict the final token, a model cannot rely on simple recency. It must infer a latent structure: that ``dog'' is the agent, ``ball'' is the relevant object, and ``bird'' is a distractor. The influence of a past token is not merely a function of its position, but of its inferred role. This ability to infer and reason over unobserved variables, be it syntactic roles, speaker intent, or the causal topic of a discourse, is a hallmark of intelligence. A robust model must therefore move beyond fixed dependencies and learn to infer this latent structure, dynamically identifying which tokens are causally relevant in a given context. These considerations give rise to the core question of this paper:
\begin{center}
    \vspace{-2mm}
    \textit{Can transformers infer latent structures in-context, and what algorithm do they learn?}
\end{center}
To investigate this question, we introduce a synthetic task based on the Mixture of Transition Distributions (MTD) model~\citep{raftery1985model, BerchtoldRaftery2002}. 
This framework recasts token-importance estimation as learning latent mixture weights in-context.
While prior work has used mixture models in regressions~\citep{pathak2024transformers} or HMMs~\citep{xie2022explanation} to study ICL, they did not unveil the mechanism through which transformers learn to infer the latent mixture weights.
We make the following contributions:
\begin{enumerate}
\item We introduce a framework for ICL based on latent variables using MTD: we create a synthetic task that frames the estimation of the influence of past tokens as learning latent mixture weights in-context.
\item We show that transformers can solve this task by implementing Mirror Descent and give an explicit construction of a 3-layer transformer that exactly implements one step of this algorithm.
\item We prove that the resulting one-step estimator yields a first-order approximation of the Bayes-optimal predictor.
\item We empirically validate that transformers trained from scratch with Adam learn solutions consistent with this construction matching the predictive distributions and attention patterns while deeper models achieve performance comparable to multi-step Mirror Descent.
\end{enumerate}
We show that transformers implement Mirror Descent to dynamically infer which past tokens are relevant, a capability fundamental to sequence modeling and, in particular, to capturing higher-order dependencies in language. This provides a gradient-based explanation of ICL in sequence modeling and offers a new algorithmic lens for understanding latent-variable inference in attention-based models.
We defer to the Appendix~\ref{app:related_works} a more detailed discussion of related work.

\paragraph{Why MTD?} Our choice of the MTD model is motivated by the desire to capture both in-weight and in-context learning in a single, controlled setting. While prior ICL work has focused on settings and tasks where nearly all useful structure must be inferred from the prompt (e.g., gradient-descent ICL in linear regression or counting estimators for simple Markov chains), our setup explicitly assumes that some statistical structure, namely the transition matrix \(\pi^\star\), can be stored in the model's weights during pretraining and reused at inference time to infer in-context the mixture weights \(\bm{\lambda}\) from a single sequence. This mirrors a plausible regime for large language models, which cannot memorize full high-order \(n\)-grams due to their sample complexity growing exponentially with \(n\) but can realistically encode lower-order \(n\)-grams in their weights and dynamically reweight the influence of different past tokens depending on the context. Our in-context learning framework explicitly couples an in-weight component (learning \(\pi^\star\)) with an in-context component (inferring \(\bm{\lambda}\)), providing a natural testbed for studying the interplay between in-weight and in-context learning in LLMs.

\section{Disentangled Transformers}
\label{sec:models}
\vspace{-2mm}
The \textit{disentangled transformer} \citep{friedman2023learning} is a modification of a standard Transformer with relative positional encodings (RPE) \citep{Shaw2018rpe}, designed for enhanced interpretability by: (1) removing MLPs; (2) replacing residual connections with concatenation, creating an explicit residual stream that preserves computations from all previous layers; and (3) simplifying the attention mechanism to use a single attention matrix instead of separate query and key matrices, and absorbing the value projection into the output matrix. \citet{nichani2024how} demonstrated that disentangled transformers are equivalent to standard transformers using only attention layers. The model maps a sequence of tokens $\bm{s} = (s_1, \dots, s_T)$ from a finite alphabet $\mathcal{S}$ to a sequence of vectors. Each token $s_i$ is represented by its one-hot vector $\bm{e}_{s_i} \in \{0, 1\}^{|\mathcal{S}|}$, yielding the initial representation $\bm{h}_i^{(0)} = \bm{e}_{s_i} \in \mathbb{R}^{d_0}$ with $d_0=|\mathcal{S}|$. The model consists of $L$ layers. We denote by $\bm{h}_i^{(l)} \in \mathbb{R}^{d_l}$ the full hidden state of token $i$ after layer $l$, and by $\hat{\bm{h}}_i^{(l,h)}$ the output of attention head $h$ in layer $l$. For each head $h \in \{1, \dots, H_l\}$, the pre-softmax attention score $e_{ij}^{(l,h)}$ between query $i$ and key $j$ is:
\begin{equation}
    \label{eq:disentangled_attention_score}
    e_{ij}^{(l,h)} = (\bm{h}_i^{(l-1)})^\top \bm{W}_A^{(l,h)} \bm{h}_j^{(l-1)} + (\bm{h}_i^{(l-1)})^\top \bm{r}_{ij}^{(l,h)}.
\end{equation}
Here, $\bm{W}_A^{(l,h)} \in \mathbb{R}^{d_{l-1} \times d_{l-1}}$ is a learnable attention matrix and $\bm{r}_{ij}^{(l,h)} = (\bm{R}_A^{(l,h)})_{i-j+1, :}$ is a relative positional encoding vector retrieved from a learnable lookup table $\bm{R}_A^{(l,h)} \in \mathbb{R}^{T \times d_{l-1}}$. 
The attention weights are computed via a causally masked softmax: $\mathcal{A}_{ij}^{(l,h)} = [\text{softmax}(\bm{e}_i^{(l,h)})]_j$, where $\bm{e}_i^{(l,h)}$ is the vector of scores. The output of a single attention head is formed by concatenating the full hidden state $\bm{h}_j^{(l-1)}$ with a positional value embedding ${\bm{r}'}_{ij}^{(l,h)}$ before the weighted sum. The full hidden state is then updated by concatenating the input with all head outputs:
\begin{equation*}
    \hat{\bm{h}}_{i}^{(l,h)} = \sum\nolimits_{j=1}^T \mathcal{A}_{ij}^{(l,h)} \Concat\left(\bm{h}_j^{(l-1)}, {\bm{r}'}_{ij}^{(l,h)}\right) \quad
    \bm{h}_i^{(l)} = \Concat\left(\bm{h}_i^{(l-1)}, \hat{\bm{h}}_i^{(l, 1)}, \ldots, \hat{\bm{h}}_i^{(l, H_l)}\right).
\end{equation*}
The positional value embeddings ${\bm{r}'}_{ij}^{(l,h)}$ are retrieved from a second lookup table $\bm{R}_V^{(l,h)} \in \mathbb{R}^{T \times d_{R}}$, where $d_R$ is a fixed hyperparameter. The dimension of the representation thus grows according to the recurrence $d_l = d_{l-1} + H_l \cdot (d_{l-1} + d_R)$. After $L$ layers, a final linear layer with matrix $\bm{W}_O \in \mathbb{R}^{|\mathcal{S}| \times d_L}$ maps $\bm{h}_i^{(L)}$ to logit predictions.

\section{Mixture of Transition Distributions for In-Context Learning}
\vspace{-2mm}
The Mixture of Transition Distributions (MTD) model \citep{raftery1985model}, is a higher-order Markov chain that offers a parsimonious representation of long-range dependencies. The idea is to model the probability of the next state as a mixture over lags, where a transition matrix is applied to the token at each lag, and the mixture weights determine the relative influence of each past position.
\par\noindent
\textbf{Model Definition:} Let $\bm{Y} = (Y_1, \dots, Y_T)$ be a sequence of random variables taking values in a finite alphabet $\mathcal{Y} = \{1, \dots, q\}$. The MTD model of order $m$ explains this sequence by positing a corresponding sequence of unobserved latent variables $\bm{Z} = (Z_{m+1}, \dots, Z_T)$, where each $Z_t \in \{1, \dots, m\}$ acts as a switch, selecting which of the $m$ previous tokens, $Y_{t-1}, \dots, Y_{t-m}$, will influence the current token $Y_t$. 
\begin{wrapfigure}[9]{r}{0.27\textwidth}
    \centering %
    \vspace{-3mm}
    \begin{adjustbox}{width=0.27\textwidth} %
        \begin{tikzpicture}[->, >=Stealth, node distance=2cm,
    every node/.style={circle, draw, line width=0.3mm, minimum size=10mm, font=\sffamily\small},
    every edge/.style={draw, thick},
    scale=1, transform shape
    ]

\node[fill=sfill] (s_t2) at (3, 2) {$Z_{t-2}$};
\node[fill=yfill] (y_t2) at (3, 0) {$Y_{t-2}$};
\node[fill=sfill] (s_t1) at (6, 2) {$Z_{t-1}$};
\node[fill=yfill] (y_t1) at (6, 0) {$Y_{t-1}$};
\node[fill=sfill] (s_t) at (9, 2) {$Z_t$};
\node[fill=yfill] (y_t) at (9, 0) {$Y_t$};

\draw[thick, color=order1color] (y_t2) -- (y_t1);
\draw[thick, color=order1color] (y_t1) -- (y_t);

\draw[thick, dashed, color=order2color] (y_t2) to[bend left=30] (y_t);

\coordinate (midpoint) at ($(s_t2)!0.5!(y_t2)$);
\draw[thick, dashed, color=order2color] (midpoint) to[bend left=20] (y_t1);

\draw[thick, color=gray!70] (s_t2) -- (y_t2);
\draw[thick, color=gray!70] (s_t1) -- (y_t1);
\draw[thick, color=gray!70] (s_t) -- (y_t);

\end{tikzpicture} %
    \end{adjustbox}
    \caption{\footnotesize MTD for m=2.}
    \label{fig:tikz-diagram} %
\end{wrapfigure}
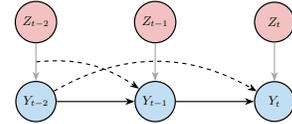
The selection of this lag is a random event, governed by the mixture weights $\bm{\lambda} = (\lambda_1, \dots, \lambda_m)$ with $\lambda_g \geq 0$ for all $g$ and $\sum_{g=1}^m \lambda_g = 1$,  such that the probability of choosing lag $g$ is given by $\mathbb{P}(Z_t = g) = \lambda_g$.  Once the lag $Z_t=g$ is sampled, the next token $Y_t$ is generated from a first-order transition that depends only on the state at the sampled position, $Y_{t-g}$. This is captured by a transition matrix $\bm{\pi} \in \mathcal{P}$ with $\mathcal{P}$ the set of $q \times q$ row-stochastic matrices defining the conditional probabilities $\pi(i,j) = \mathbb{P}(Y_t=j \mid Y_{t-g}=i, Z_t=g)$. Marginalizing over the unobserved latent variable $Z_t$, we obtain the predictive distribution:
\begin{Def}[Mixture Transition Distribution]
A sequence of random variables $\bm{Y}$ follows an $m$-th order MTD model if for all $t > m$ and any history $\bm{y}_{1}^{t-1}$, the conditional probability of $Y_t$ is:
\begin{equation}
\label{eq:mtd_definition}
\mathbb{P}(Y_t = y_t \mid \bm{Y}_{1}^{t-1} = \bm{y}_{1}^{t-1}, \bm{\lambda}) = \sum\nolimits_{g=1}^{m} \lambda_g \, \pi(y_{t-g}, y_t),
\end{equation}
where the parameters $\bm{\lambda} = (\lambda_1, \dots, \lambda_m)$ are the mixture weights and $\bm{\pi}$ the transition matrix.
\end{Def}
This structure allows one to model different effective contexts, making it more flexible than a first-order Markov chain while preserving tractability by requiring only $m-1 + q(q-1)$ parameters compared to the $q^m(q-1)$ of a full $m$-th order Markov chain.
\par\noindent
\textbf{The In-Context Learning Task:}
We design an in-context learning task based on the MTD model structured as follows: Given a transition matrix $\bm{\pi}$ we generate sequences $\bm{y}$, by sampling for each a new vector of mixture weights $\bm{\lambda}$ from a prior $\bm{\lambda} \sim \text{Dirichlet}(\bm{\alpha} = \bm{1})$. This $\bm{\lambda}$ vector is the hidden \textit{task} that defines the statistical structure of that particular sequence. The sequence $\bm{y} = (y_1, \dots, y_T)$ is then generated according to the MTD model in \eqref{eq:mtd_definition}.
The learning objective for a model $f \in \mathcal{F}$, such as a transformer, is to predict the next token $y_t$ given the context $\bm{y}_{1}^{t-1}$:
\begin{equation}
\inf_{f \in \mathcal{F}} \mathbb{E}_{\bm{\lambda} \sim \text{Dirichlet}(\bm{\alpha})} \mathbb{E}_{\bm{y} \sim \text{MTD}(\bm{\lambda}, \bm{\pi}^\star)} \left[ \text{KL}\left( p(Y_t | \bm{y}_1^{t-1}, \bm{\lambda}) \,\|\, f(\bm{y}_1^{t-1}) \right) \right] \, .
\label{eq:icl_optimization}
\end{equation}
To perform this prediction optimally, the model must effectively learn the mixture weights $\bm{\lambda}$ of the latent variables $Z_t$, which are not directly observable in the sequence and differ across sequences:
\begin{center}
    \textit{\textbf{In-Context Task}: Learn the unknown mixture weights $\bm{\lambda}$ given a single sequence $\bm{y}$.}
\end{center}
\paragraph{The Bayes Optimal Solution:}
The solution to the ICL task (Eq. \ref{eq:icl_optimization}) is the Bayesian predictive distribution, $p(Y_{t+1} \mid \bm{y}_1^t, \bm{\alpha})$.
\begin{prop}[Bayes-Optimal MTD Predictor]
\label{prop:bayes_predictor}
Given the MTD model with a known transition matrix $\bm{\pi}$, a prior $p(\bm{\lambda}|\bm{\alpha})$, and an observed sequence $\bm{y}_1^t$, the Bayesian predictive distribution for $Y_{t+1}=j$ is a convex combination of the lag-specific transition probabilities, weighted by the posterior mean of the mixture weights $\bm{\lambda}$:
\begin{equation}
    p(Y_{t+1}=j \mid \bm{y}_1^t, \bm{\alpha}) = \sum_{g=1}^m  \hat{\lambda}^{\text{Bayes}}_g  \cdot \pi(y_{t+1-g}, j)  \quad \hat{\lambda}^{\text{Bayes}}_g \coloneqq \mathbb{E}[\lambda_g \mid \bm{y}_1^t, \bm{\alpha}] = \int_{\Delta_{m-1}} \hspace{-5mm} \lambda_g \cdot p(\bm{\lambda} \mid \bm{y}_1^t, \bm{\alpha}) \, d\bm{\lambda},
    \label{eq:bayes_optimal_predictor_main}
\end{equation}
where $p(\bm{\lambda} \mid \bm{y}_1^t, \bm{\alpha}) \propto p(\bm{y}_1^t \mid \bm{\lambda}) p(\bm{\lambda} \mid \bm{\alpha})$ is the posterior distribution with $p(\bm{y}_1^t \mid \bm{\lambda}) = \prod_{k=m+1}^t \left( \sum_{h=1}^m \lambda_h \pi(y_{k-h}, y_k) \right)$ being the likelihood.
\end{prop}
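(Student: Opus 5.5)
The plan is to derive the predictive distribution by marginalizing over $\bm{\lambda}$ with respect to its posterior, and then to use the fact that the MTD conditional in \eqref{eq:mtd_definition} is \emph{linear} in $\bm{\lambda}$.

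First, we fix the likelihood. Apply the chain rule to $p(\bm{y}_1^t \mid \bm{\lambda})$, treating the first $m$ tokens as given; their distribution does not depend on $\bm{\lambda}$, so it is absorbed into the proportionality constant. Each factor for $k>m$ is, by \eqref{eq:mtd_definition}, $\sum_{h}\lambda_h \pi(y_{k-h},y_k)$. Multiplying these factors gives the stated likelihood, and Bayes' rule gives $p(\bm{\lambda}\mid \bm{y}_1^t,\bm{\alpha}) \propto p(\bm{y}_1^t\mid\bm{\lambda})\,p(\bm{\lambda}\mid\bm{\alpha})$. The normalizer is finite and positive whenever the observed sequence has positive marginal probability. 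This holds because the likelihood is a bounded polynomial in $\bm{\lambda}$ on the compact simplex $\Delta_{m-1}$, and the Dirichlet prior is a proper density.

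Next, we write the predictive distribution by the law of total probability over $\bm{\lambda}$:
\[
p(Y_{t+1}=j\mid \bm{y}_1^t,\bm{\alpha}) = \int_{\Delta_{m-1}} p(Y_{t+1}=j\mid \bm{y}_1^t,\bm{\lambda})\, p(\bm{\lambda}\mid \bm{y}_1^t,\bm{\alpha})\, d\bm{\lambda}.
\]
This step uses the generative structure: given $\bm{\lambda}$ and the history, $Y_{t+1}$ is conditionally independent of the prior hyperparameter $\bm{\alpha}$. Substituting \eqref{eq:mtd_definition} gives $\int \sum_g \lambda_g \pi(y_{t+1-g},j)\,p(\bm{\lambda}\mid\cdot)\,d\bm{\lambda}$. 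The sum over $g$ is finite and the integrand is bounded, so sum and integral can be exchanged. Since $\pi(y_{t+1-g},j)$ does not depend on $\bm{\lambda}$, it comes out of the integral, leaving $\sum_g \pi(y_{t+1-g},j)\,\mathbb{E}[\lambda_g\mid \bm{y}_1^t,\bm{\alpha}]$, which is \eqref{eq:bayes_optimal_predictor_main}.

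Finally, we check convexity. The posterior is supported on $\Delta_{m-1}$, so the posterior means satisfy $\hat\lambda^{\text{Bayes}}_g\ge 0$ and $\sum_g \hat\lambda^{\text{Bayes}}_g = \mathbb{E}[\sum_g\lambda_g\mid\cdot]=1$. The predictor is therefore a convex combination of the rows $\pi(y_{t+1-g},\cdot)$.

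Optionally, we also justify why this predictor solves \eqref{eq:icl_optimization}. The expected KL decomposes as the expected cross-entropy of $f$ minus a term independent of $f$. Conditioning on $\bm{y}_1^{t-1}$, the expected cross-entropy is minimized pointwise by the conditional law of $Y_t$ given the history with $\bm{\lambda}$ integrated out. By Gibbs' inequality, this is exactly the posterior predictive distribution above.

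The main point requiring care is not computational but measure-theoretic bookkeeping. One must make sure the conditional independence used in the total-probability step is valid, and that conditioning on the initial $m$ tokens does not introduce $\bm{\lambda}$-dependence. Both follow directly from the generative description of the task.
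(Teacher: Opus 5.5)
Your proposal is correct and follows essentially the same route as the paper's derivation: write the predictive distribution as the integral of the one-step MTD conditional against the posterior, exchange the finite sum over lags with the integral, and read off $\hat{\lambda}^{\text{Bayes}}_g = \mathbb{E}[\lambda_g \mid \bm{y}_1^t, \bm{\alpha}]$. Your extra checks go beyond what the paper writes out but are correct. These are that the posterior means lie in the simplex, which justifies the convex-combination claim; that the normalizer is well defined; and that the posterior predictive minimizes the expected-KL objective.
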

The derivation is provided in Appendix~\ref{app:bayes_predictor_derivation}. 
While elegant, the Bayes-optimal predictor is analytically intractable. The core issue is that the Dirichlet prior is not conjugate to the MTD likelihood, meaning the posterior distribution does not have a closed form. Consequently, the integral defining the posterior mean $\hat{\lambda}^{\text{Bayes}}_g $ cannot be computed directly, necessitating the use of approximation methods.
\par\noindent
\textbf{Mirror Descent (MD):} The intractability of the posterior mean motivates considering simpler point estimates, such as the Maximum Likelihood Estimate (MLE) or the Maximum A Posteriori (MAP) estimate which are equivalent under uniform Dirichlet prior.
However, analytical computation of either the MLE or MAP is intractable, necessitating iterative optimization methods. 
Methods such as Expectation–Maximization (EM) or Mirror Descent (MD) are preferred over standard gradient descent, as they are naturally adapted to the geometry of the simplex.
For an extended discussion, see Appendix~\ref{app:em_algorithm}.
Furthermore, even if the MAP estimate could be found, it represents the mode of the posterior distribution, which does not coincide with the posterior mean in the Bayes-optimal predictor, therefore leading to suboptimal predictions.

Mirror Descent, is a first-order method particularly well-suited for optimizing over the probability simplex $\Delta_{m-1}$ \cite{nemirovskij1983problem, beck2003mirror}. By using a Bregman divergence based on the negative entropy potential $\Psi(\bm{\lambda}) = -H(\bm{\lambda})$, where $H(\bm{\lambda}) = -\sum_{g=1}^m \lambda_g \log \lambda_g$ is the Shannon entropy, MD results in the \textbf{Exponentiated Gradient (EG)} algorithm, which has a simple multiplicative update rule (see Appendix~\ref{app:md_derivation} for details):
\begin{equation}
\lambda_g^{(k+1)} = \frac{\lambda_g^{(k)} \exp(\eta \cdot \nabla_\lambda \ell(\bm{\lambda}^{(k)})_g)}{\sum_{h=1}^m \lambda_h^{(k)} \exp(\eta \cdot \nabla_\lambda \ell(\bm{\lambda}^{(k)})_h)},
\label{eq:eg_update}
\end{equation}
where $\eta > 0$ is the learning rate and $\nabla_\lambda \ell(\bm{\lambda}^{(k)})$ the gradient of the log-likelihood evaluated at $\bm{\lambda}^{(k)}$. Instead of iterating the EG algorithm to convergence, we analyze a non-iterative estimator derived from a single update step, initialized at the center of the probability simplex ($\bm{\lambda}^{(0)}=(1/m, \dots, 1/m)$). This approach yields a computationally efficient, regularized approximation of the MLE, which we demonstrate serves as an effective proxy for the posterior mean.
\begin{prop}[One-Step MD Estimator]
\label{prop:one_step_md}
Initializing the EG algorithm (Eq. \ref{eq:eg_update}) at $\bm{\lambda}^{(0)} = (1/m, \dots, 1/m)$ and applying a single update step for the MTD model yields the estimator:
\begin{equation}
\hat{\lambda}^{\text{MD}}_g \coloneqq \frac{\exp\left( \eta \cdot m \sum_{k=m+1}^{t} \gamma_k(g) \right)}{\sum_{j=1}^m \exp\left( \eta \cdot m \sum_{k=m+1}^{t} \gamma_k(j) \right)} \quad \gamma_k(g) := p(Z_k=g \mid \bm{y}_1^k, \bm{\lambda}^{(0)}) = \frac{\pi(y_{k-g}, y_k)}{\sum_{h=1}^m \pi(y_{k-h}, y_k)} \, ,
\label{eq:md_one_step_mtd}
\end{equation}
where $\gamma_k(g)$ is the posterior responsibility of lag $g$ at step $k>m$, under the uniform prior.
\end{prop}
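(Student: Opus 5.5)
The proof is a direct computation: we write down the log-likelihood, evaluate its gradient at the uniform initialization, and substitute into the EG update (Eq.~\ref{eq:eg_update}).

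First, take the log of the likelihood from Proposition~\ref{prop:bayes_predictor}:
\begin{equation*}
\ell(\bm{\lambda}) = \sum_{k=m+1}^{t} \log\Big(\sum_{h=1}^m \lambda_h \pi(y_{k-h}, y_k)\Big).
\end{equation*}
Its partial derivatives are
\begin{equation*}
\nabla_\lambda \ell(\bm{\lambda})_g = \sum_{k=m+1}^t \frac{\pi(y_{k-g}, y_k)}{\sum_{h} \lambda_h \pi(y_{k-h}, y_k)}.
\end{equation*}
At $\bm{\lambda}^{(0)} = (1/m,\dots,1/m)$ the denominator equals $\frac{1}{m}\sum_h \pi(y_{k-h},y_k)$. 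Hence
\begin{equation*}
\nabla_\lambda \ell(\bm{\lambda}^{(0)})_g = m\sum_{k} \gamma_k(g), \qquad \gamma_k(g) = \frac{\pi(y_{k-g}, y_k)}{\sum_h \pi(y_{k-h}, y_k)}.
\end{equation*}

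Second, we justify calling $\gamma_k(g)$ the posterior responsibility. Conditional on $\bm{\lambda}^{(0)}$ and the history, the lag $Z_k$ has prior mass $1/m$ on each value, and $\mathbb{P}(Y_k = y_k \mid Z_k=g, \bm{y}_1^{k-1}) = \pi(y_{k-g}, y_k)$. Applying Bayes' rule, the uniform prior factors $1/m$ cancel between numerator and denominator, which leaves exactly $\gamma_k(g)$.

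Finally, substitute the gradient into Eq.~\ref{eq:eg_update}. The numerator is $\frac{1}{m}\exp\big(\eta m \sum_k \gamma_k(g)\big)$, and each term in the denominator carries the same factor $1/m$, so it cancels. The result is exactly the softmax form in Eq.~\ref{eq:md_one_step_mtd}.

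No step is a real obstacle. The one point needing care is well-definedness: we need $\sum_h \pi(y_{k-h}, y_k) > 0$ for every $k$. This holds almost surely for sequences generated by the MTD model, since any observed transition has positive probability under at least one lag.
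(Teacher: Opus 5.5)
Your proposal is correct and follows the paper's own derivation essentially step for step. You compute the gradient of the MTD log-likelihood, evaluate it at the uniform point to get $m\sum_{k}\gamma_k(g)$, identify $\gamma_k(g)$ as a posterior responsibility via Bayes' rule with the uniform prior, and substitute into the EG update so that the $1/m$ factors cancel. Your remark on well-definedness, that $\sum_h \pi(y_{k-h},y_k)>0$ almost surely because the realized lag contributes positive mass, is a small extra point that the paper leaves implicit.
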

The derivation is provided in Appendix~\ref{app:md_derivation}. This one-step estimator computes, for each step $k$ in the sequence, the posterior probability that lag $g$ was responsible for generating $y_k$ (assuming all lags equally likely a priori). These responsibilities are summed across the sequence and fed into a softmax function to produce the estimate $\hat{\bm{\lambda}}^{\text{MD}}$ with the learning rate $\eta$ controlling its sharpness.

\section{Transformers implement One-Step of Mirror Descent}
\vspace{-2mm}
We present our main theoretical result: a constructive proof that the one-step MD estimator can be implemented by a Transformer. The crucial mechanism relies on relative position encodings to correctly route information, allowing the self-attention layer to compute the posterior responsibilities. 
\begin{prop}[Transformer Implementation of the One-Step MD Estimator]
\label{prop:transformer_implementation}
Given an MTD model of order $m$ with a known transition matrix $\bm{\pi}^\star \in \mathcal{P}$, for any sequence $\bm{y}_{1:T}$ of length $T \geq m$, there exists a three-layer disentangled Transformer $\mathcal{\tilde{T}}$ with single head, $d_0 = q$ and $d_R \geq m$ that implements the one-step MD estimator. The Transformer produces at position $T$ the predictive distribution for token $Y_T$ given context $\bm{y}_1^{T-1}$:
\begin{equation}
\mathcal{\tilde{T}}(\bm{y}_{1:T})_T = \sum_{g=1}^{m} \tilde{\lambda}_g(\bm{y}_{1:T}) \cdot \bm{\pi}^\star(y_{T-g}, :) \qquad \tilde{\lambda}_g(\bm{y}_{1:T}) = \frac{\exp\left(\frac{\beta}{T-m} \sum_{i=m+1}^{T} \gamma_i(g)\right)}{\sum_{h=1}^{m} \exp\left(\frac{\beta}{T-m} \sum_{i=m+1}^{T} \gamma_i(h)\right)},
\end{equation}
with the weights $\tilde{\bm{\lambda}}(\bm{y}_{1:T})$ computed exactly as the one-step MD estimate and  $\beta$ is a learnable parameter corresponding to the scaled learning rate of the MD algorithm.
\end{prop}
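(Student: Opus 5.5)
The plan is to build the three layers so that they compute, in order, the responsibilities $\gamma_i(g)$, their average over the sequence, and a softmax over lags that reads out the mixture. Throughout, relative positional encodings do the routing and the attention matrices carry $\log \bm{\pi}^\star$. I write $-\infty$ for a large negative constant $-C$. This is exact if we allow infinite logits, and otherwise exact in the limit $C\to\infty$.

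\textbf{Layer 1 (responsibilities).} Take $\bm{W}_A^{(1)} = (\log \bm{\pi}^\star)^\top$, so that $e_{ij}^{(1)} = \bm{e}_{y_i}^\top (\log\bm{\pi}^\star)^\top \bm{e}_{y_j} = \log \pi^\star(y_j,y_i)$. Choose the positional score table $\bm{R}_A^{(1)}$ so that every row equals $-\infty\cdot\bm{1}$ except the rows for offsets $i-j\in\{1,\dots,m\}$, which are $\bm{0}$. Because the query is one-hot, this adds $0$ on lags $1,\dots,m$ and $-\infty$ elsewhere. The causal softmax then gives, for $i>m$,
\[
\mathcal{A}^{(1)}_{i,i-g} = \frac{\pi^\star(y_{i-g},y_i)}{\sum_{h=1}^m \pi^\star(y_{i-h},y_i)} = \gamma_i(g).
\]
Set the value embeddings to ${\bm{r}'}_{ij} = \bm{e}_{i-j}\in\mathbb{R}^{d_R}$ for $1\le i-j\le m$ and $0$ otherwise; this is where $d_R\ge m$ is used. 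The positional part of $\hat{\bm{h}}_i^{(1)}$ is then exactly the vector $\bm{\gamma}_i = (\gamma_i(1),\dots,\gamma_i(m),0,\dots)$. The token part of $\hat{\bm{h}}_i^{(1)}$ is a by-product that later layers simply ignore. By concatenation, $\bm{h}_i^{(1)}$ still contains $\bm{e}_{y_i}$.

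\textbf{Layer 2 (averaging).} For the query at position $T$ we want uniform attention over $j\in\{m+1,\dots,T\}$. Set $\bm{W}_A^{(2)}=0$. Since the construction is for a fixed length $T$, we may put the positional row for offset $T-j$ equal to $\bm{0}$ when $0\le T-j\le T-m-1$ and to $-\infty$ for larger offsets; this again acts through the one-hot coordinates of $\bm{h}_T^{(1)}$. The copied $\bm{\gamma}_j$ coordinates then average to
\[
\bar{\bm{\gamma}}_T = \frac{1}{T-m}\sum_{i=m+1}^T \bm{\gamma}_i .
\]
This average includes $i=T$, matching the statement. All value embeddings in this layer are zero.

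\textbf{Layer 3 (softmax over lags and readout).} Set $\bm{W}_A^{(3)}=0$ and use the positional score term. For offset $T-j=g\in\{1,\dots,m\}$, let $\bm{r}^{(3)}_{T,T-g}$ equal $\beta\,\bm{e}_g$ placed in the coordinates of $\bm{h}^{(2)}$ that hold $\bar{\bm{\gamma}}_T$. For all other offsets, put $-\infty$ on the one-hot coordinates. Then $e^{(3)}_{T,T-g} = \frac{\beta}{T-m}\sum_{i=m+1}^T\gamma_i(g)$, and the softmax gives exactly $\mathcal{A}^{(3)}_{T,T-g}=\tilde\lambda_g(\bm{y}_{1:T})$. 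The head outputs $\sum_g \tilde\lambda_g \bm{h}^{(2)}_{T-g}$, whose one-hot block is $\sum_g\tilde\lambda_g \bm{e}_{y_{T-g}}$. Finally, take $\bm{W}_O$ equal to $(\bm{\pi}^\star)^\top$ on that block and zero elsewhere, which yields $\sum_g \tilde\lambda_g\,\bm{\pi}^\star(y_{T-g},:)$. Comparing with Proposition~\ref{prop:one_step_md}, this is the one-step MD estimator with $\beta = \eta\, m\,(T-m)$. The construction uses one head per layer.

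\textbf{Expected obstacles.} I expect the main difficulty to be bookkeeping rather than conceptual. First, layer 1 needs $\log\pi^\star$, which is undefined when $\pi^\star$ has zero entries. I would first assume $\pi^\star$ has positive entries, and then treat zeros as the limit of large negative logits, with the $0/0$ case excluded. Second, early positions $i\le m$ have fewer than $m$ valid lags; layer 2 never attends to them, so they are harmless. Third, the ``$-\infty$'' masking through RPEs depends on the one-hot coordinates summing to one, which holds for every hidden state because $\bm{e}_{y_i}$ is preserved in the residual concatenation.
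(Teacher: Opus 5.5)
Your construction is correct and is essentially the paper's. Layer 1 uses $\bm{W}_A^{(1)}=(\log\bm{\pi}^\star)^\top$ with an RPE mask on lags $1,\dots,m$ and one-hot value RPEs to store $\gamma_i(g)$; layer 2 averages over $j\in[m+1,T]$ by masking through the one-hot coordinates; and layer 3 reads out $\beta\,\bm{e}_g$ on the averaged-responsibility block before $\bm{W}_O$ applies $\bm{\pi}^{\star\top}$. The only differences are cosmetic: you use $0/{-C}$ masks where the paper uses $\pm\delta_l$, and you make the identification $\beta=\eta m(T-m)$ explicit. If anything, your layer-3 choice of offsets $T-j\in\{1,\dots,m\}$ is the indexing that actually yields $\bm{\pi}^\star(y_{T-g},:)$ as stated. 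By contrast, the paper's rows $(\bm{R}_A^{(3)})_{k,:}$ for $k\in[m]$, with $k=i-j+1$, correspond to offsets $0,\dots,m-1$; its own figure accordingly places $\tilde{\lambda}_1$ at position $T$.
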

In the following we prove Proposition \ref{prop:transformer_implementation} by explicitly constructing a 3-layer disentangled Transformer that implements the one-step MD estimator. The first layer computes the posterior responsibilities $\gamma_i(g)$, the second layer computes the logits $\sum_{i=m+1}^{T} \gamma_i(g)$, and the third layer produces the final estimate vector $\tilde{\bm{\lambda}}(\bm{y}_{1:T})$ within its attention weights\footnote{The attention-based computation of responsibilities in the first layer is shared with the construction of \citet{dangelo2025selective}. However, our aggregation and prediction layers leverage RPE value embeddings to store responsibilities in dedicated dimensions, yielding a single-head construction.}.
\par\noindent
\textbf{Layer 1, Posterior Responsibilities:}
The first layer uses the attention matrix $\bm{W}_A^{(1)}$ to compute the posterior responsibilities and the relative positional encoding $\bm{r}'_{ij}$ to store it in the residual stream:
\begin{equation*}
\bm{W}_A^{(1)} = (\log \bm{\pi}^\star)^\top, \, (\bm{R}_A^{(1)})_{k,:} = \begin{cases} + \delta_1 \cdot \bm{1}^\top & {2 \le k \le m+1} \\ -\delta_1 \cdot \bm{1}^\top & \text{otherwise} \end{cases}, \, (\bm{R}_V^{(1)})_{k,:} = \begin{cases} \bm{e}_{k-1}^\top & {2 \le k \le m+1} \\ \bm{0}^\top & \text{otherwise} \end{cases}
\end{equation*}
{\small\begin{equation*}
{ %
\begin{gathered}
\bm{R}_A^{(1)\top} = \delta_1 \:
\begin{blockarray}{c@{\,}c@{\,}c@{\,}c@{\,}c@{\,}c@{\,}c} 
\text{\scriptsize 1} & \text{\scriptsize 2} & \text{\scriptsize \dots} & \text{\scriptsize m+1} & \text{\scriptsize m+2} & \text{\scriptsize \dots} & \text{\scriptsize T}\\
\begin{block}{(c@{\,}|@{\,}c@{\,}c@{\,}c@{\,}|@{\,}c@{\,}c@{\,}c)}
\rbox[red!30]{-1} & \rbox[mlightblue!50]{1} & \dots & \rbox[mlightblue!50]{1} & \rbox[red!30]{-1} & \dots & \rbox[red!30]{-1} \\
\rbox[red!30]{-1} & \rbox[mlightblue!50]{1} & \dots & \rbox[mlightblue!50]{1} & \rbox[red!30]{-1} & \dots & \rbox[red!30]{-1} \\
\vdots & \vdots & \ddots & \vdots & \vdots & \vdots & \vdots \\
\rbox[red!30]{-1} & \rbox[mlightblue!50]{1} & \dots & \rbox[mlightblue!50]{1} & \rbox[red!30]{-1} & \dots & \rbox[red!30]{-1} \\
\end{block}
\end{blockarray}
\qquad
\bm{R}_V^{(1)\top} = 
\begin{blockarray}{c@{\,}c@{\,}c@{\,}c@{\,}c@{\,}c@{\,}c@{\,}c} 
\text{\scriptsize 1} & \text{\scriptsize 2} & \text{\scriptsize \dots} & \text{\scriptsize \dots} & \text{\scriptsize m+1} & \text{\scriptsize m+2} & \text{\scriptsize \dots} & \text{\scriptsize T}\\
\begin{block}{(c@{\,}|@{\,}c@{\,}c@{\,}c@{\,}c@{\,}|@{\,}c@{\,}c@{\,}c)} 
\rbox[gray!30]{0} & \rbox[teal!50]{1} & \rbox[gray!30]{0} & \dots & \rbox[gray!30]{0} & \rbox[gray!30]{0} & \dots & \rbox[gray!30]{0} \\
\rbox[gray!30]{0} & \rbox[gray!30]{0} & \rbox[teal!50]{1} & \dots & \rbox[gray!30]{0} & \rbox[gray!30]{0} & \dots & \rbox[gray!30]{0} \\
\vdots & \vdots & \ddots & \vdots & \vdots & \vdots & \vdots & \vdots \\
\rbox[gray!30]{0} & \rbox[gray!30]{0} & \dots & \rbox[gray!30]{0} & \rbox[teal!50]{1} & \rbox[gray!30]{0} & \dots & \rbox[gray!30]{0} \\
\end{block}
\end{blockarray}
\end{gathered}
} %
\end{equation*}}
where $\log$ is applied element-wise, $k=i-j+1$ is the relative position between token $i$ and $j$ shifted by $1$. The lookup table $\bm{R}_A^{(1)}$ biases attention to focus only on the first $m$ relative positions (the lags). The lookup table $\bm{R}_V^{(1)}$ uses one-hot vectors $\bm{e}_k$ to copy the computed attention weight for a specific lag into the corresponding dimension of the output vector, thereby storing the responsibilities for different lags in distinct positions. The attention score $e_{ij}$ for this layer is computed as per Equation \ref{eq:disentangled_attention_score}. Given that the input is one-hot encoded, i.e., $\bm{h}_i^{(0)} = \bm{e}_{y_i}$, the score becomes:
\begin{equation*}
e_{ij} = \bm{e}_{y_i}^\top (\log \bm{\pi}^\star)^\top \bm{e}_{y_j} + \bm{e}_{y_i}^\top \bm{r}^{(1)}_{ij} = \log \pi^\star(y_j, y_i) + \begin{cases} +\delta_1 & \text{if } 1 \le i-j \le m \\ -\delta_1 & \text{otherwise} \end{cases} \, ,
\end{equation*}
where we used that the RPE vector $\bm{r}_{ij}$ is constant with respect to the token values $y_i$. For a large $\delta_1$, the softmax  only attends to keys $j$ such that their relative position $k=i-j$ is within the range $[1, m]$. The causal attention weights $\mathcal{A}^{(1)}_{ij}$ for $j \le i$ are given by:
$$
\mathcal{A}^{(1)}_{ij} = \frac{\exp(e_{ij})}{\sum_{j'=1}^{i} \exp(e_{ij'})} = \frac{\pi^\star(y_j, y_i) \exp(\bm{e}_{y_i}^\top \bm{r}^{(1)}_{ij})}{\sum_{j'=1}^{i} \pi^\star(y_{j'}, y_i) \exp(\bm{e}_{y_i}^\top \bm{r}^{(1)}_{ij'})} \, .
$$
In the limit $\delta_1 \to \infty$, the behavior of the softmax changes based on the position $i$. For the first token ($i=1$), the only valid key is itself ($j'=1$), which means the attention is entirely self-contained, resulting in $\mathcal{A}^{(1)}_{11}=1$. For any subsequent token ($i>1$), the terms in the denominator corresponding to lags $k \in [1, m]$ are scaled by $\exp(\delta_1)$, while all other terms, including the diagonal ($k=0$), are scaled by $\exp(-\delta_1)$ and vanish. This yields the following limit for $i>1$:
\renewcommand{\rbox}[2][white]{\tikz[baseline=(n.base)]\node[rounded corners,fill=#1,inner sep=1.5pt, minimum size=1.2em] (n) {#2};}
\begin{equation*} 
\mathcal{A}^{(1)}_{ij} = \begin{cases} \frac{\pi^\star(y_j, y_i)}{\sum\limits_{k=1}^{\hat{m}} \pi^\star(y_{i-k}, y_i)} & i-j \in [m] \\ \hfill 0 \hfill & \text{otherwise} \end{cases} \mathcal{A}^{(1)} = \begin{adjustbox}{angle=0,origin=c,scale=0.8}
  \setlength{\arraycolsep}{3pt}      %
\renewcommand{\arraystretch}{1.1}  %
  $\begin{pmatrix}
  \rbox[gray!30]{$1$} & \rbox[gray!30]{$0$} & \rbox[gray!30]{$0$} & \rbox[gray!30]{$0$} & \rbox[gray!30]{$0$} & \rbox[gray!30]{$0$} & \rbox[gray!30]{$0$} & \dots\\
  \rbox[mgold!50]{\small$\ast$} & \rbox[gray!30]{$0$} & \rbox[gray!30]{$0$} & \rbox[gray!30]{$0$} & \rbox[gray!30]{$0$} & \rbox[gray!30]{$0$} & \rbox[gray!30]{$0$} & \dots\\
  \rbox[mred!50]{\small$\ast$} & \rbox[mgold!50]{\small$\ast$} & \rbox[gray!30]{$0$} & \rbox[gray!30]{$0$} & \rbox[gray!30]{$0$} & \rbox[gray!30]{$0$} & \rbox[gray!30]{$0$} & \dots\\
  \vdots & \vdots & \vdots & \ddots & \vdots & \vdots & \vdots & \vdots\\
  \rbox[gray!30]{$0$} & \dots & \rbox[mblue!50]{\small$\gamma_{T-1}(m)$} & \dots & \rbox[mred!50]{\small$\gamma_{T-1}(2)$} & \rbox[mgold!50]{\small$\gamma_{T-1}(1)$} & \rbox[gray!30]{$0$} & \dots\\
  \rbox[gray!30]{$0$} & \dots & \rbox[gray!30]{$0$} & \rbox[mblue!50]{\small$\gamma_{T}(m)$} & \dots & \rbox[mred!50]{\small$\gamma_{T}(2)$} & \rbox[mgold!50]{\small$\gamma_{T}(1)$} & \rbox[gray!30]{$0$}\\
  \end{pmatrix}
  $ %
  \end{adjustbox}
\end{equation*}
where $\hat{m} = \min(i-1, m)$. For positions $i > m$, where the MTD model is defined, the attention mechanism thus computes $\mathcal{A}^{(1)}_{ij} = \gamma_i(i-j)$ for the active lags ($1 \le i-j \le m$). This expression is precisely the posterior responsibility $\gamma_i(g)$ from Equation \ref{eq:md_one_step_mtd}, where the lag is $g=i-j$. For earlier steps ($1 < i \le m$), the attention mechanism computes the natural counterpart by normalizing over the $i-1$ available lags. The resulting attention matrix $\mathcal{A}^{(1)}$ has a specific banded lower-triangular structure. The output of this layer, $\hat{\bm{h}}_i^{(1)}$, is the attention-weighted sum over the concatenated value vectors. Given that the attention weights $\mathcal{A}_{ij}$ compute the posterior responsibilities $\gamma_i(g)$ (for $i>m$), and $\bm{R}_V^{(1)}$ embeds the lag $g=i-j$ as a one-hot vector $\bm{e}_g$, the output for a position $i > m$ is:
\begin{equation}
\hat{\bm{h}}_i^{(1)} = \sum_{j=1}^{i-1} \mathcal{A}_{ij} \Concat(\bm{e}_{y_j}, \bm{r'}^{(1)}_{ij}) = \sum_{g=1}^{m} \gamma_i(g) \Concat(\bm{e}_{y_{i-g}}, \bm{e}_g) \,.
\end{equation}
This output is a convex combination, weighted by the posterior responsibilities, of vectors that each concatenate two pieces of information: the one-hot encoding of the token at a given lag ($\bm{e}_{y_{i-g}}$) and the one-hot encoding of $\bm{e}_g$ of the lag itself ($g$):
{\small\begin{equation*}
\hat{\bm{h}}_i^{(1)} = \underbrace{\rbox[mblue!50]{\small$\gamma_i(m)$}
\begin{pmatrix} \bm{e}_{y_{i-m}} \\ \hdashline 0 \\ \vdots \\ 1 \end{pmatrix}
}_{\text{Lag } m}
+ \dots +
\underbrace{\rbox[mred!50]{$\gamma_i(2)$}
\begin{pmatrix} \bm{e}_{y_{i-2}} \\ \hdashline 0 \\ 1 \\ \vdots \end{pmatrix}
}_{\text{Lag } 2}
+
\underbrace{\rbox[mgold!50]{\small$\gamma_i(1)$}
\begin{pmatrix} \bm{e}_{y_{i-1}} \\ \hdashline 1 \\ 0 \\ \vdots \end{pmatrix}
}_{\text{Lag } 1}
=
\begin{adjustbox}{angle=0,origin=c,scale=0.9} $
\left(\begin{array}{c}
    \sum_{g=1}^{m} \gamma_i(g) \bm{e}_{y_{i-g}} \\
    \hdashline
    \rbox[mgold!50]{\small$\gamma_i(1)$} \\
    \rbox[mred!50]{\small$\gamma_i(2)$} \\
    \vdots \\
    \rbox[mblue!50]{\small$\gamma_i(m)$}
\end{array}\right) $
\end{adjustbox} 
\end{equation*}}
In essence, the top part of $\hat{\bm{h}}_i^{(1)}$ contains a weighted sum of past tokens (which will not be used), while its bottom part explicitly stores the vector of posterior responsibilities with the value $\gamma_i(g)$ for lag $g$ stored at the $g$-th position within this second block (i.e., $\gamma_i(1)$ is first, followed by $\gamma_i(2)$, etc.).
\par\noindent
\textbf{Layer 2, Summing Responsibilities:}
The second layer sums along the sequence the responsibility vectors computed in Layer 1, for tokens at positions $i > m$. This is achieved by setting the content-based attention to zero ($\bm{W}_A^{(2)}=\mathbf{0}$) and the value-rpe matrix to zero ($\bm{R}_V^{(2)}=\mathbf{0}$). The mechanism relies on the content-position interaction $(\bm{h}_i^{(1)})^\top \bm{r}_{ij}^{(2)}$. Crucially, the input vector $\bm{h}_i^{(1)} = \Concat(\bm{e}_{y_i}, \hat{\bm{h}}_i^{(1)})$ retains in the residual stream the one-hot embedding $\bm{e}_{y_i}$ of the current token in its first $q$ dimensions. The RPE table $\bm{R}_A^{(2)}$ is structured to interact only with this part of the vector, turning the dot product into a fixed bias:
\begin{equation*}
\bm{W}_A^{(2)} = \mathbf{0}_{d_1 \times d_1}, \quad (\bm{R}_A^{(2)})_{k,:} = \begin{cases}  \hfill \mathbf{0}_{1 \times d_1}  \hfill & \text{if } 1 \le k \le T-m \\ [-\delta_2 \cdot \mathbf{1}_{1 \times q}, \mathbf{0}_{1 \times (q+m)}] & \text{otherwise} \end{cases}, \quad \bm{R}_V^{(2)} = \mathbf{0}_{T \times m}
\end{equation*}
The attention score for the final query at $i=T$ therefore simplifies to:
\begin{equation*}
e_{Tj} = (\bm{h}_T^{(1)})^\top \bm{r}_{Tj}^{(2)} = 
\begin{cases}
    (\bm{h}_T^{(1)})^\top \begin{pmatrix} -\delta_2 \cdot \mathbf{1}_q \\ \mathbf{0}_{q+m} \end{pmatrix} = -\delta_2 \cdot (\bm{e}_{y_T}^\top \mathbf{1}_q) = -\delta_2 & \text{if } 1 \le j \le m \\[2ex]
    (\bm{h}_T^{(1)})^\top \mathbf{0}_{d_1} = 0 & \text{otherwise}
\end{cases}
\end{equation*}
In the limit $\delta_2 \to \infty$, the softmax places uniform attention only on the keys where the score is not $-\infty$. This results in uniform attention weights $\mathcal{A}_{Tj} = 1/(T-m)$ for $j \in [m+1, T]$, and zero otherwise. The layer's output for the final token is therefore the exact average of the desired vectors:
{\small\begin{equation*}
\hat{\bm{h}}_T^{(2)} = \sum_{j=1}^{T} \mathcal{A}_{Tj} \Concat(\bm{h}_j^{(1)}, \bm{r}'_{Tj}) =
\frac{1}{T-m} \sum_{j=m+1}^{T} \Concat(\bm{h}_j^{(1)}, \mathbf{0}) = 
\frac{1}{T-m} \begin{adjustbox}{angle=0,origin=c,scale=0.7} $\left(\begin{array}{c}
    \sum_{j=m+1}^{T} \bm{e}_{y_j} \\
    \sum_{j=m+1}^{T} \left( \sum_{g=1}^{m} \gamma_j(g) \bm{e}_{y_{j-g}} \right) \\
    \hdashline
    \rbox[mgold!50]{$\sum_{j=m+1}^{T} \gamma_j(1)$} \\
    \rbox[mred!50]{$\sum_{j=m+1}^{T} \gamma_j(2)$} \\
    \cdot \\
    \rbox[mblue!50]{$\sum_{j=m+1}^{T} \gamma_j(m)$} \\
    \hdashline
    \mathbf{0}
\end{array}\right)$ \end{adjustbox}
\end{equation*}}
Defining $\bm{\Gamma}_j = (\gamma_j(1), \gamma_j(2), \dots, \gamma_j(m))^\top$, the $m$-dimensional sub-block at positions $2q{+}1$ to $2q{+}m$ of $\hat{\bm{h}}_T^{(2)}$ contains the averaged responsibility vector $\frac{1}{T-m} \sum_{j=m+1}^{T} \bm{\Gamma}_j$, which is exactly the quantity needed to implement the one-step MD estimator in Prop.~\ref{prop:one_step_md}.
\par\noindent
\textbf{Layer 3, Final Predictive Weights:}
The third and final layer uses the averaged responsibilities computed in Layer 2 to produce the final predictive weights, $\tilde{\bm{\lambda}}$, which correspond to the one-step MD estimate from Prop.~\ref{prop:one_step_md}. This is accomplished by using the RPE table, $\bm{R}_A^{(3)}$, to perform a selective dot product. The query vector for the final token, $\bm{h}_T^{(2)}$, contains the vector of averaged responsibilities in the $\bm{\Gamma}$ sub-block of $\hat{\bm{h}}_T^{(2)}$. The RPE vectors in $\bm{R}_A^{(3)}$ are constructed as scaled one-hot vectors that align with this sub-block, effectively using the dot product to "read out" the corresponding averaged responsibility. Similarly to layer 2, to only have non-zero attention at the positions corresponding to the $m$ lags, we use the one-hot embedding of the current token $e_{y_i}$ from the residual stream $\bm{h}_i^{(2)} = \Concat(\bm{e}_{y_i}, \hat{\bm{h}}_i^{(1)}, \hat{\bm{h}}_i^{(2)})$ to add a fixed bias $\delta_3$ which drives the softmax to zero in the limit. The content-based attention and value-rpe are again disabled:
\vspace{1em}
\begin{equation*}
\begin{array}{l}
  \bm{W}_A^{(3)} = \mathbf{0}_{d_2 \times d_2} \\[6pt]
  \bm{R}_V^{(3)} = \mathbf{0}_{T \times m}
  \end{array}
, \quad (\bm{R}_A^{(3)})_{k,:} = \begin{cases}  
  \smash[t]{[\overbrace{+\delta_3 \cdot \mathbf{1}_{q}^\top}^{\text{on }\bm{h}^{(0)}_{\phantom{q}}}, \overbrace{\mathbf{0}_{q+m}^\top}^{\text{on }\hat{\bm{h}}^{(1)}_{\phantom{q}}}, \overbrace{\mathbf{0}_{2q}^\top}^{\text{on }\hat{\bm{h}}_{1:2q}^{(2)}}, \overbrace{\beta \cdot \bm{e}_{k}^\top}^{\text{on } \bm{\Gamma}_{\phantom{q_2}}}, \overbrace{\mathbf{0}_{m}^\top}^{\text{on }\hat{\bm{h}}_{2q+m+1:2q+2m}^{(2)}}]}
  \vphantom{\hat{\bm{h}}_T^{(1)}} %
  & \text{if } k \in [m] \\ 
  [-\delta_3 \cdot \mathbf{1}_{q}^\top, \mathbf{0}_{q+m}^\top, \mathbf{0}_{2q}^\top, \mathbf{0}_{m}^\top , \mathbf{0}_{m}^\top]
 & \small\text{otherwise} 
\end{cases}
\end{equation*}
Here, $\bm{e}_{k}$ is a one-hot vector in $\mathbb{R}^{m}$ that selects the coordinate corresponding to the $k$-th responsibility in the $\bm{\Gamma}$ sub-block of $\bm{h}_T^{(2)}$, and $\beta$ is the learnable scaled learning rate. Visually, the RPE table $\bm{R}_A^{(3)}$ is a sparse matrix of scaled one-hot vectors:
\begin{equation*}
  \bm{R}_A^{(3)\top} =
  \begin{blockarray}{c c|c|c|c} 
    & [1, \cdots, m] & \cdots &  \cdots & T \\
    \begin{block}{l(c|c|c|c)}
      \bm{h}^{(0)} & +\delta_3 \cdot \mathbf{1}_q, \cdots, +\delta_3 \cdot \mathbf{1}_q & -\delta_3 \cdot \mathbf{1}_q &  -\delta_3 \cdot \mathbf{1}_q & -\delta_3 \cdot \mathbf{1}_q \\
      \hat{\bm{h}}^{(1)}  & \mathbf{0} & \mathbf{0} & \mathbf{0} & \mathbf{0} \\
      \hat{\bm{h}}^{(2)}_{1:2q} & \mathbf{0}     & \mathbf{0} & \mathbf{0} & \mathbf{0} \\
      \bm{\Gamma} & 
        {
          \renewcommand{\arraystretch}{0.1} %
          \setlength{\arraycolsep}{1pt}    %
          \left[
            \begin{array}{cccc}
              \rbox[mgold!50]{$\beta \bm{e}_{1}$  } & \rbox[mred!50]{$\beta \bm{e}_{2}$  } & \cdots \rbox[mblue!50]{$\beta \bm{e}_{m}$  } \\
            \end{array}
          \right]
          } & \mathbf{0} & \mathbf{0} & \mathbf{0} \\
      \hat{\bm{h}}^{(2)}_{2q+m+1:2q+2m} & \mathbf{0} & \mathbf{0} & \mathbf{0} &  \mathbf{0} \\
    \end{block}
  \end{blockarray}
  \end{equation*}
Since $\bm{W}_A^{(3)}=\mathbf{0}_{d_2 \times d_2}$, the score simplifies to the content-position interaction $(\bm{h}_T^{(2)})^\top \bm{r}_{Tj}^{(3)}$ and $\bm{r}_{Tj}^{(3)}$ is constructed to be non-zero only for relative positions $g=T-j+1 \in [1, m]$. For these lags, the RPE is a scaled one-hot vector that acts as a selector, using the dot product to extract the corresponding averaged responsibility stored in $\bm{h}_T^{(2)}$:
\begin{align*}
e_{Tj} = (\bm{h}_T^{(2)})^\top \bm{r}_{Tj}^{(3)} &= 
\begin{cases}
    (\bm{h}_T^{(0)})^\top(\delta_3 \mathbf{1}_q) + (\bm{\Gamma})_T^\top(\beta \bm{e}_g)\\
    \hfill -\delta_3 \cdot (\bm{e}_{y_T}^\top \mathbf{1}_q) \hfill 
\end{cases} 
=
\begin{cases}
    +\delta_3 + \beta \cdot \frac{\sum_{i=m+1}^{T} \gamma_i(g)}{T-m} & \text{if } g \in [m] \\
    \hfill -\delta_3 \hfill & \text{otherwise.}
\end{cases}
\end{align*}
The attention scores for the final token, $e_{T,:}$, becomes the scaled averaged responsibility:
{\setlength{\arraycolsep}{2pt}
\begin{equation*}
\mathbf{e}_{T,:} = \Big( -\delta_3, \dots, -\delta_3, \underbrace{\rbox[mblue!50]{$+\delta_3 + \beta \frac{\sum_{i=m+1}^{T}\gamma_i(m)}{T-m}$}}_{\text{pos } T-m+1}, \dots, \underbrace{\rbox[mred!50]{$+\delta_3 + \beta \frac{\sum_{i=m+1}^{T}\gamma_i(2)}{T-m}$}}_{\text{pos } T-1}, \underbrace{\rbox[mgold!50]{$+\delta_3 + \beta \frac{\sum_{i=m+1}^{T}\gamma_i(1)}{T-m}$}}_{\text{pos } T} \Big)
\end{equation*}}
After applying the softmax and in the limit of large $\delta_3$, the attention weights for the last token compute the MD estimate of the mixture weights $\tilde{\lambda}_g$ and place them at the correct positions:
{\setlength{\arraycolsep}{2pt}
\begin{equation*}
\lim_{\delta_3 \to \infty} \mathcal{A}_{T, T-g} = \frac{\exp\left(\beta \frac{\sum_{i=m+1}^{T}\gamma_i(g)}{T-m}\right)}{\sum_{k=1}^{m} \exp\left(\beta \frac{\sum_{i=m+1}^{T}\gamma_i(k)}{T-m}\right)} \quad
\lim_{\delta_3 \to \infty} \mathcal{A}_{T,:} = \Big( \dots, 0, \underbrace{\rbox[mblue!50]{$\tilde{\lambda}_m$}}_{T-m+1}, \dots, \underbrace{\rbox[mred!50]{$\tilde{\lambda}_2$}}_{T-1}, \underbrace{\rbox[mgold!50]{$\tilde{\lambda}_1$}}_{T} \Big) \, .
\end{equation*}}
This final attention operation is the core of the estimation process. It computes the mixture weights $\tilde{\bm{\lambda}}$, which serve as the in-context estimates of token importance, thus realizing the mechanism of in-context learning this work seeks to understand.
\par\noindent
\textbf{Output Layer:}
The final step of the construction is to apply the output matrix $\widetilde{\bm{W}}_O$ to the final hidden state $\bm{h}_T^{(3)}$ to produce the predictive distribution over the next token. The matrix  $\widetilde{\bm{W}}_O$ learns the known transition matrix $\bm{\pi}^\star$ and selectively applies it to the embedding of the last token. The output of the third attention layer, $\hat{\bm{h}}_T^{(3)}$, is a weighted sum of the hidden states from the second layer, where the weights are the estimated mixture weights $\tilde{\lambda}_g$: $\hat{\bm{h}}_T^{(3)} = \sum_{j=1}^{T} \mathcal{A}^{(3)}_{Tj} \bm{h}_j^{(2)} = \sum_{g=1}^{m} \tilde{\lambda}_g \bm{h}_{T-g}^{(2)}$ (where we drop the zero value-RPE block since $\bm{R}_V^{(3)} = \mathbf{0}_{T \times m}$).
The first $q$ components of any hidden state $\bm{h}_j^{(k)}$, due to the residual stream, simply contain the original input $\bm{h}_j^{(0)} = \bm{e}_{y_j}$. Consequently, the first $q$ components of $\hat{\bm{h}}_T^{(3)}$ are a $\tilde{\lambda}$-weighted combination of the one-hot embeddings of the relevant past tokens: $(\hat{\bm{h}}_T^{(3)})_{1:q} = \sum_{g=1}^{m} \tilde{\lambda}_g (\bm{h}_{T-g}^{(2)})_{1:q} = \sum_{g=1}^{m} \tilde{\lambda}_g \bm{e}_{y_{T-g}}.$
The full hidden state is $\bm{h}_T^{(3)} = \Concat(\bm{h}_T^{(0)}, \hat{\bm{h}}_T^{(1)}, \hat{\bm{h}}_T^{(2)}, \hat{\bm{h}}_T^{(3)})$ and the output matrix $\widetilde{\bm{W}}_O \in \mathbb{R}^{q \times d_3}$ is structured to ignore all preceding blocks and operate only on the first $q$ components of the final block, $\hat{\bm{h}}_T^{(3)}$. This is achieved by storing the transition matrix, $\bm{\pi}^{\star\top}$, in the corresponding sub-block:
\begin{equation*}
\widetilde{\bm{W}}_O = \begin{blockarray}{cccc}
{\scriptsize \text{from }\bm{h}_T^{(0)}} & {\scriptsize \text{from } \hat{\bm{h}}_T^{(1)}} & {\scriptsize \text{from }\hat{\bm{h}}_T^{(2)}} & {\scriptsize \text{from }\hat{\bm{h}}_T^{(3)}} \\
\begin{block}{(c|c|c|c)}
  \mathbf{0}_{q \times q} & \mathbf{0}_{q \times (q+m)} & \mathbf{0}_{q \times (2q+2m)} & [\bm{\pi}^{\star\top}_{q \times q} \quad \mathbf{0}_{q \times (3q+3m)}] \\
\end{block}
\end{blockarray} \, .
\end{equation*}
Applying this matrix to the fully expanded final hidden state yields the predictive distribution:
\begin{align*}
\mathcal{\tilde{T}}(\bm{y}_{1:T})_T = [\bm{\pi}^{\star\top} \quad \mathbf{0}] \: \hat{\bm{h}}_T^{(3)} = \bm{\pi}^{\star\top} (\hat{\bm{h}}_T^{(3)})_{1:q} = \bm{\pi}^{\star\top} \left( \sum_{g=1}^{m} \tilde{\lambda}_g \bm{e}_{y_{T-g}} \right) = \sum_{g=1}^{m} \tilde{\lambda}_g \bm{\pi}^\star(y_{T-g}, :)\,.
\end{align*}
This final vector is exactly the predictive distribution from Proposition~\ref{prop:transformer_implementation}, completing the proof.

\vspace{-2mm}
\section{Why One-Step Mirror Descent Works}
\label{sec:theoretical_analysis}
\vspace{-2mm}
We now turn to a theoretical analysis of the one-step MD estimator. We prove that a single mirror descent update, initialized at the uniform prior, recovers a first-order approximation of the Bayesian posterior mean. This result provides a formal justification for the effectiveness of the non-iterative estimator implemented by our construction.

\textbf{One-Step MD as a First-Order Bayesian Approximation:}
We establish a theoretical connection between the one-step Mirror Descent (MD) estimator and the Bayesian posterior mean. We show that their first-order Taylor expansions around the state of no evidence coincide up to a scalar constant. This result justifies interpreting the one-step MD estimator as a principled approximation to the Bayes-optimal predictor, especially in low-data regimes.
The analysis hinges on treating both estimators as functions of the log-likelihood gradient evaluated at the center of the simplex, $\bm{g} \coloneqq \nabla_{\bm{\lambda}} \ell(\bm{\lambda}^{(0)})$, and expanding them around the point of no evidence, $\bm{g}=\bm{0}$. 
\begin{theorem}[First-Order Equivalence of the Estimators]
\label{thm:first_order_equivalence}
Let $\hat{\bm{\lambda}}^{\text{MD}}(\bm{g}; \eta)$ be the one-step MD estimator with learning rate $\eta$, and let $\hat{\bm{\lambda}}^{\text{Bayes}}(\bm{g})$ be the Bayesian posterior mean under the linearized likelihood. 
The two estimators are first-order equivalent at $\bm{g}=\bm{0}$ for $\eta = \tfrac{1}{m+1}$.
\end{theorem}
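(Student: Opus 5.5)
The plan is to write both estimators as explicit smooth functions of $\bm{g}=\nabla_{\bm\lambda}\ell(\bm\lambda^{(0)})$ and compare their Jacobians at $\bm g=\bm 0$.

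\emph{MD side.} By Proposition~\ref{prop:one_step_md}, $\hat\lambda^{\text{MD}}_g(\bm g;\eta)=\exp(\eta g_g)/\sum_h \exp(\eta g_h)$, since $\bm\lambda^{(0)}$ is uniform. Expanding the softmax at $\bm g=\bm 0$ gives
\[
\hat\lambda^{\text{MD}}_g=\tfrac1m+\tfrac{\eta}{m}\Big(g_g-\tfrac1m\textstyle\sum_h g_h\Big)+O(\|\bm g\|^2),
\]
so the Jacobian is $\tfrac{\eta}{m}\big(I-\tfrac1m\bm 1\bm 1^\top\big)$.

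\emph{Bayes side.} I would first make ``linearized likelihood'' precise. Take $\ell(\bm\lambda)\approx \ell(\bm\lambda^{(0)})+\bm g^\top(\bm\lambda-\bm\lambda^{(0)})$, so that $p(\bm y\mid\bm\lambda)\propto\exp(\bm g^\top\bm\lambda)$. Under the uniform Dirichlet($\bm 1$) prior, the posterior is then the exponential tilt $p(\bm\lambda\mid\bm g)\propto \exp(\bm g^\top\bm\lambda)$ on $\Delta_{m-1}$. The posterior mean is $\nabla_{\bm g}\log Z(\bm g)$ with $Z(\bm g)=\int_{\Delta}e^{\bm g^\top\bm\lambda}d\bm\lambda$. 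Its Jacobian at $\bm 0$ is therefore the covariance of $\bm\lambda$ under the flat Dirichlet($\bm 1$) prior.

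\emph{Computing that covariance.} For Dirichlet($\bm\alpha$) with $\alpha_0=\sum_h\alpha_h$, the covariance is $\frac{1}{\alpha_0+1}\big(\operatorname{diag}(\bar{\bm\lambda})-\bar{\bm\lambda}\bar{\bm\lambda}^\top\big)$, where $\bar{\bm\lambda}$ is the mean. With $\bm\alpha=\bm 1$ we have $\alpha_0=m$ and $\bar{\bm\lambda}=\tfrac1m\bm 1$, so the covariance is $\frac{1}{m(m+1)}\big(I-\tfrac1m\bm 1\bm 1^\top\big)$. Hence
\[
\hat\lambda^{\text{Bayes}}(\bm g)=\tfrac1m\bm 1+\tfrac{1}{m(m+1)}\big(I-\tfrac1m\bm 1\bm 1^\top\big)\bm g+O(\|\bm g\|^2).
\]

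\emph{Matching.} Both zeroth-order terms equal $\tfrac1m\bm 1$. Both Jacobians are multiples of the same centering projector, with coefficients $\eta/m$ and $1/(m(m+1))$. These agree exactly when $\eta=\tfrac1{m+1}$, which proves the theorem.

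\emph{Expected obstacles.} The first is justifying differentiation under the integral to get $\nabla\log Z$ and its Hessian; this is routine because the simplex is compact and the integrand is smooth. The second is fixing the convention for ``linearized likelihood'': the tilt must be $\exp(\bm g^\top\bm\lambda)$, and not, say, $\prod_k(1+\cdot)$. Any additive constant in the exponent cancels in normalization. Note that the paper's Proposition~\ref{prop:one_step_md} writes the gradient as $m\sum_k\gamma_k$; this factor is absorbed into $\bm g$ identically on both sides, so it does not affect the comparison.
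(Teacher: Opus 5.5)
Your proposal is correct and follows essentially the same route as the paper. You expand the softmax to get the Jacobian $\tfrac{\eta}{m}\bigl(\bm{I}-\tfrac{1}{m}\bm{1}\bm{1}^\top\bigr)$, identify the Jacobian of the exponentially tilted posterior mean at $\bm{g}=\bm{0}$ with the Dirichlet($\bm{1}$) prior covariance $\tfrac{1}{m(m+1)}\bigl(\bm{I}-\tfrac{1}{m}\bm{1}\bm{1}^\top\bigr)$, and equate coefficients to get $\eta=\tfrac{1}{m+1}$. The only cosmetic difference is that you obtain the covariance as the Hessian of $\log Z(\bm{g})$, whereas the paper differentiates $N_k/Z$ with the quotient rule; this is the same computation.
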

\textbf{Learning-rate scaling via a Lipschitz (smoothness) constant:}
We established a first-order equivalence between the one-step MD estimator and the Bayesian posterior mean at $\bm{g}=\bm{0}$ (the ``no-evidence'' regime). For a sequence of length $T$, however, the gradient norm $|\bm{g}|$ scales with $T$ (see App.~\ref{app:asymptotic_properties_of_the_likelihood_gradient}), raising the question of how to scale the learning rate of $\hat{\bm{\lambda}}^{\text{MD}}=\softmax(\eta\bm{g})$ with $T$. Mirror Descent theory suggests choosing the learning rate inversely proportional to the relative smoothness constant $L_{\text{rel}}$~\citep{bauschke2017descent}. Because our MD update is derived from the negative entropy potential, smoothness is defined w.r.t. the KL divergence. Convergence requires $\eta \le 1/L_{\text{rel}}$. We now bound this constant and find exactly the scaling implemented in the Transformer in Prop.~\ref{prop:transformer_implementation}.
\begin{theorem}[Relative Smoothness and $\eta$ scaling]
\label{thm:L_and_eta_uniform}
At the center of the simplex $\bm\lambda=(1/m,\dots,1/m)$, the loss $f(\bm\lambda)=-\ell(\bm\lambda)$ is $L_{\text{rel}}$-smooth relative to the KL-divergence, with $L_{\text{rel}} \;\le\; (T-m)\,m^2$. Consequently the stable step-size rule $\eta \le \frac{1}{L_{\text{rel}}}$ yields the asymptotic scaling $\eta \;=\; \Theta\!\big(\frac{1}{T}\big)$ for fixed~$m$.
\end{theorem}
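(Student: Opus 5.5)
To bound the relative smoothness constant, I will use the standard second-order characterization. A twice-differentiable convex $f$ is $L$-smooth relative to a Legendre potential $\Psi$ if $L\,\nabla^2\Psi(\bm\lambda) - \nabla^2 f(\bm\lambda) \succeq 0$ on the relevant set. This is the condition of \citet{bauschke2017descent}.

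For the negative entropy we have $\nabla^2\Psi(\bm\lambda)=\mathrm{diag}(1/\lambda_1,\dots,1/\lambda_m)$. At the center of the simplex this equals $m\,\bm I_m$. The task therefore reduces to bounding the largest eigenvalue of the Hessian of $f=-\ell$ at $\bm\lambda^{(0)}$. It suffices to show
\[
\nabla^2 f(\bm\lambda^{(0)}) \preceq (T-m)\,m^3\,\bm I_m .
\]
Then $L_{\text{rel}}\,m \ge (T-m)m^3$ holds with $L_{\text{rel}}=(T-m)m^2$.

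I will write $\ell(\bm\lambda)=\sum_{k=m+1}^{T}\log\big(\bm\lambda^\top \bm p_k\big)$, where $\bm p_k=(\pi(y_{k-1},y_k),\dots,\pi(y_{k-m},y_k))^\top$ is the likelihood of Prop.~\ref{prop:bayes_predictor}. Differentiating twice gives
\[
\nabla^2 f(\bm\lambda)=\sum_{k=m+1}^T \frac{\bm p_k\bm p_k^\top}{(\bm\lambda^\top\bm p_k)^2}.
\]
At the center, $\bm\lambda^{(0)\top}\bm p_k=\frac1m\sum_h p_{k,h}$. So each summand equals $m^2\,\bm\gamma_k\bm\gamma_k^\top$, where $\bm\gamma_k=\bm p_k/\sum_h p_{k,h}$ is exactly the responsibility vector $\gamma_k(\cdot)$ of Prop.~\ref{prop:one_step_md}.

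Each $\bm\gamma_k$ lies in the simplex, so its largest eigenvalue is $\|\bm\gamma_k\|_2^2\le\|\bm\gamma_k\|_1^2=1$, and hence $m^2\bm\gamma_k\bm\gamma_k^\top\preceq m^2\bm I_m$. Summing the $T-m$ terms gives $\nabla^2 f(\bm\lambda^{(0)})\preceq (T-m)m^2\bm I_m$. Dividing by the entropy curvature $m$ gives $L_{\text{rel}}\le (T-m)m$, which is even sharper than the stated $(T-m)m^2$. I will present the bound in the paper's form, noting that it holds a fortiori.

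Depending on the convention used for the Bregman divergence, the paper may compare directly against the Hessian, without dividing by $\lambda_{\min}$-type factors. Either way, the bound $(T-m)m^2$ follows from the same eigenvalue estimate. The conclusion is then immediate: $\eta\le 1/L_{\text{rel}}$ allows $\eta=1/((T-m)m^2)=\Theta(1/T)$ for fixed $m$. This matches the normalization $\beta/(T-m)$ in Prop.~\ref{prop:transformer_implementation}.

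The main subtlety is that relative smoothness is properly a condition over a region rather than at a single point. The statement is explicitly localized at the center, so I will state the result as a local (Hessian-level) bound there.

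The other point needing care is zero entries of $\bm\pi$. If $\sum_h p_{k,h}=0$, the log-likelihood is $-\infty$. This is excluded whenever the sequence is generated by the model, because the observed transition then has positive probability.
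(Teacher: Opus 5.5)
Your proof is correct and follows essentially the same route as the paper. Both use the rank-one Hessian decomposition $\sum_t \bm c_t\bm c_t^\top/s_t^2$, bound each term by $m^2\|\bm c_t\|_2^2/S_t^2\le m^2$ (you phrase this as $\|\bm\gamma_k\|_2\le\|\bm\gamma_k\|_1=1$), sum over the $T-m$ terms, and compare with $\nabla^2\Psi(\bm\lambda^\ast)=m\bm I$; the only difference is that the paper discards the factor $m$ via $m\bm I\succeq\bm I$ to reach $(T-m)m^2$, whereas you divide by it and get the sharper $(T-m)m$, which implies the stated bound a fortiori.
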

\textbf{Beyond the Local Approximation, The Implicit Regularization of Mirror Descent:}
\begin{wrapfigure}[16]{r}{0.28\textwidth}
    \centering %
    \includegraphics[width=0.30\textwidth]{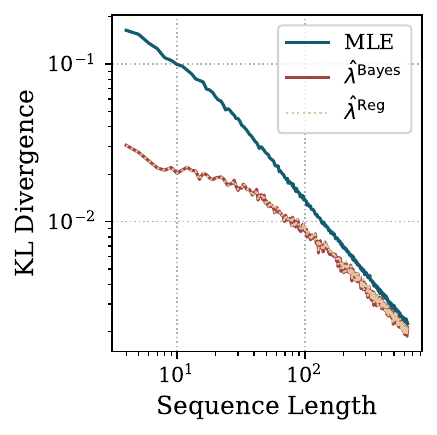} 
    \caption{\small\textbf{Regularized Estimator.} Comparison with Bayes and MLE estimators.}
    \label{fig:kl_reg}
\end{wrapfigure}
The first-order equivalence in Theorem~\ref{thm:first_order_equivalence} holds only for short sequences, where the log-likelihood gradient is small. 
For longer sequences, neglected higher-order terms become significant, and the one-step estimator diverges from the Bayesian mean. 
Empirically, however, a few additional Mirror Descent steps substantially reduce this gap (see Figure~\ref{fig:kl_multi_step}).
In this regime, iterating MD to convergence yields the suboptimal MLE, while early stopping provides a much closer match to the Bayesian mean. We propose that this effect arises from implicit regularization. Specifically, early-stopped MD approximately solves an entropy-regularized optimization problem of the form $\min_\lambda -\ell(\lambda) + \gamma H(\lambda)$ 
with the iterates tracking the corresponding regularization path~\citep{suggala2018connecting}. 
Along this path, performance on par with the Bayes-optimal estimator is achieved for an appropriate choice of regularization $\gamma$ (see Figure~\ref{fig:kl_reg}).
Thus, early stopping is effectively equivalent to selecting this favorable point on the entropy-regularized path, avoiding the suboptimal MLE.

\vspace{-2mm}
\section{Experiments}
\vspace{-2mm}
\label{sec:experiments}
\begin{figure}[t]
    \centering
    \begin{minipage}[t]{0.58\textwidth}
        \includegraphics[width=\textwidth]{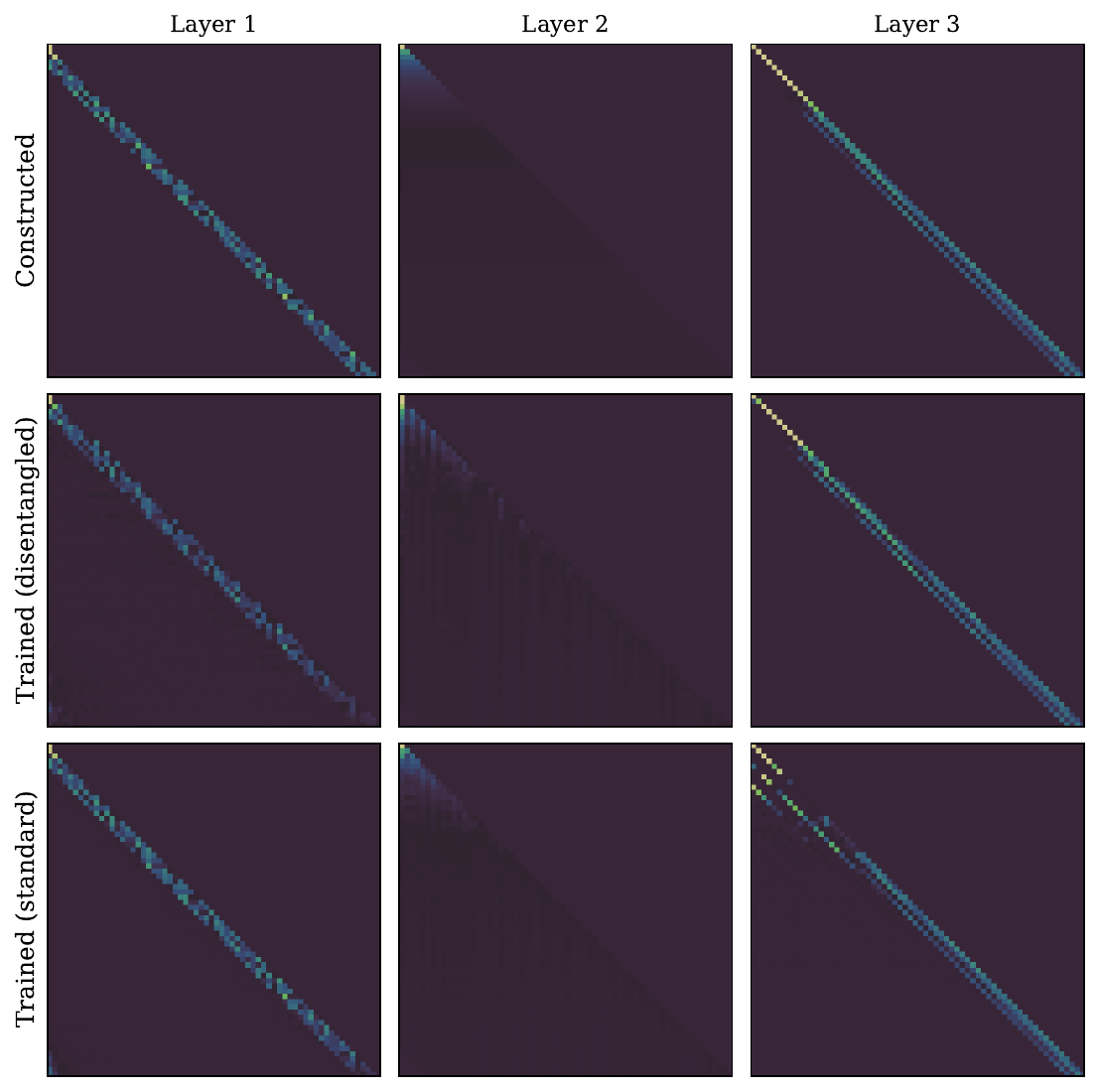}
    \end{minipage}\hfill
    \begin{minipage}[t]{0.38\textwidth}
        \vspace{-78mm}
        \includegraphics[width=\textwidth]{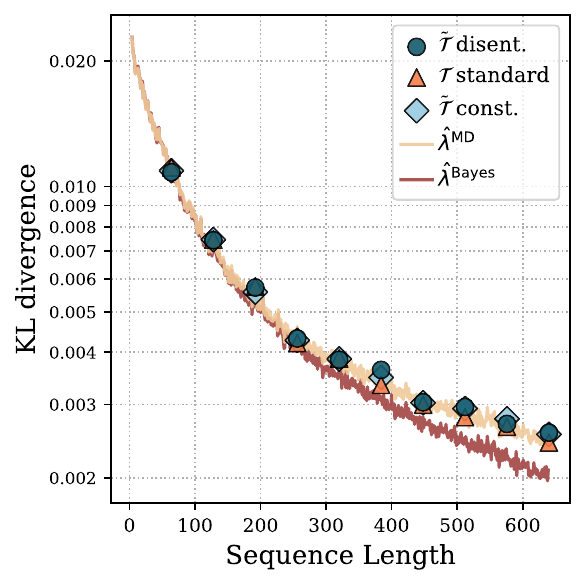}
        \vspace{2mm}
        \includegraphics[width=\textwidth]{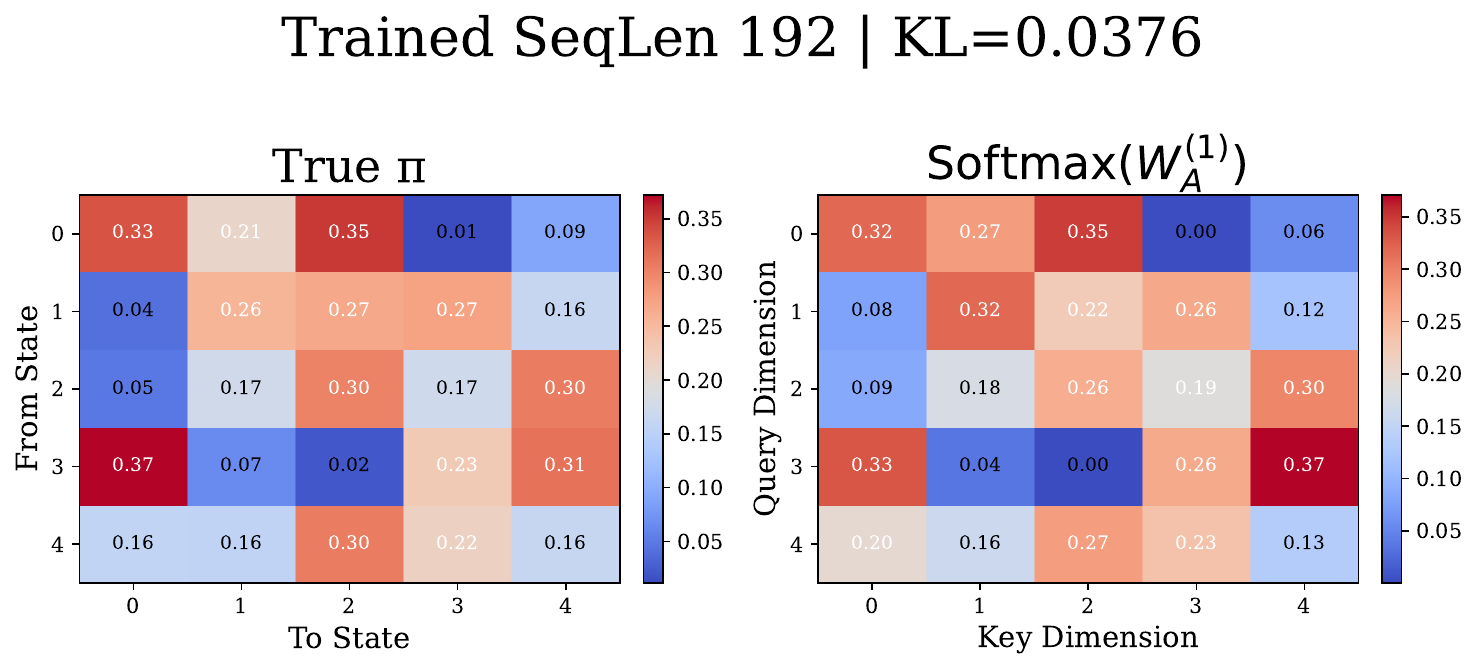}
    \end{minipage}
    \caption{\small \textbf{Comparison of Trained and Constructed Transformers.} \textbf{Left:} Attention maps of the trained transformer (disentangled and standard) versus our theoretical construction (seq. length 64). \textbf{Right (top):} KL divergence to the ground truth transition probabilities for the trained transformers, the constructed transformer, and the one-step MD estimator across sequence lengths. \textbf{Right (bottom):} First-layer attention softmax $\text{Softmax}(\bm{W}_1^{A\top})$ vs. true transitions matrix $\pi^\star$ for a trained model.}
    \label{fig:attention_maps}
    \vspace{-4mm}
\end{figure}
We validate our main claims below; further details and additional experiments are in App.~\ref{app:experimental_details} and~\ref{app:additional_experiments}.
\textbf{Setup:} We train 3-layer disentangled transformers $\tilde{\mathcal{T}}_{\text{disent}}$ with single head with learned relative positional and one-hot semantic embeddings as well as 3-layer standard transformers $\mathcal{T}_{\text{standard}}$ (no disentanglement) with single head attention with standard parameterization given by Query-Key-Value with learned relative positional and learned semantic embeddings. We sample a fixed $q$, $m$, $\bm{\pi}$ and at each iteration we sample a set of $\{\bm{\lambda_i}\}_{i=1}^B$ with $B=128$ the batch size and $\bm{\lambda_i} \sim \text{Dir}(\bm{\alpha})$ and generate $B$ sequences according to the MTD model. We train the model for $5 \times 10^5$ iterations using the Adam optimizer and MSE loss over the last token in the sequence for various sequence lengths. The choice of MSE over the standard cross-entropy loss is motivated by the fact that a softmax output layer is misspecified for the mixture model our construction implements (see App.~\ref{app:mse_vs_ce}), while restricting the loss to the last token isolates a single, well-defined learning rate $\beta$ rather than forcing the model to learn a compromise across positions (see App.~\ref{app:last_token_loss}). 

\textbf{Results for one-step MD:} We plot the KL divergence with the ground truth transition probabilities between the trained transformer $\tilde{\mathcal{T}}_{\text{disent.}}$ and $\mathcal{T}_{\text{standard.}}$ compared to our theoretical construction $\tilde{\mathcal{T}}_\text{constr.}$, the one-step MD estimator $\hat{\lambda}^{\text{MD}}$ and the optimal-Bayes estimator $\hat{\lambda}^{\text{Bayes}}$ (we compute it via MCMC; see App.~\ref{ssec:mcmc_approx}) for various sequence lengths in \Cref{fig:attention_maps} (right). We observe that both the disentangled and standard trained transformers match the performance of the theoretical construction and the one-step MD estimator: for small sequence lengths, the latter serves as a good proxy for the optimal Bayes estimator, validating our theoretical result in \Cref{thm:first_order_equivalence}, whereas for longer sequences it becomes suboptimal. By inspecting the attention maps of the trained transformers in \Cref{fig:attention_maps} (left) we can see that they learn to extract the responsibilities $\gamma(g)_i$ as expected from our construction (for all three models, the locations of low and high attention entries, and the diagonal structure induced by the MTD order, are closely aligned). To further validate if the attention matrix in the first layer actually learns the ground truth transition matrix $\pi^\star$ we plot the heatmap of the first attention $\text{softmax}(\bm{W}_1^{A\top})$ vs. $\pi^\star$ as well as  the average row-wise KL divergence between the two matrices \Cref{fig:attention_maps} (bottom-right) more results in App.~\ref{app:additional_experiments}. For both the $\hat{\lambda}^{\text{MD}}$ and $\tilde{\mathcal{T}}_\text{constr.}$ we tune the learning rate $\beta$ and $\eta$ via grid search to minimize the KL divergence (see \Cref{app:hyperparameters_tuning} for more details).

\begin{wrapfigure}[19]{r}{0.36\textwidth}
    \vspace{-4mm}
    \centering %
    \includegraphics[width=0.35\textwidth]{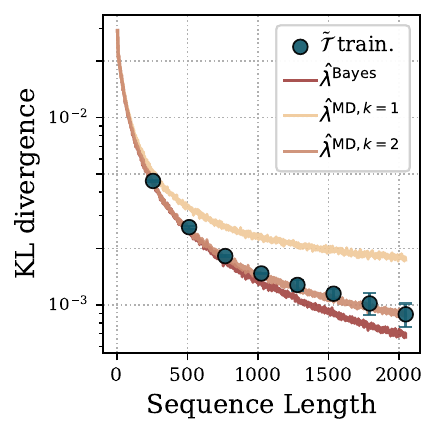} 
    \caption{\small\textbf{Multi-Step MD vs. 5-Layer Transformer.} KL divergence to the ground-truth transition probabilities for a 5-layer trained transformer, the $k$-step MD estimators and the Bayes-optimal estimator across sequence lengths.}
    \label{fig:kl_multi_step}
\end{wrapfigure}
\textbf{Results multi-step MD:} To investigate whether deeper Transformers can learn to implement multiple steps of Mirror Descent, we plot in \Cref{fig:kl_multi_step} the KL divergence for a 5-layer trained transformer $\tilde{\mathcal{T}}_{\text{train.}}$ compared to the multi-step MD estimator $\hat{\lambda}^{\text{MD},k=i}$ ($i$ denoting the number of steps) across various sequence lengths, averaged over 3 seeds with error bars. We observe that the trained transformer closely tracks the performance of the 2-step MD estimator. We emphasize that this is a performance comparison, not a convergence or optimality claim: we do not assert that the transformer converges to the 2-step MD solution. Rather, the experiment suggests that deeper transformers are capable of implementing estimators whose accuracy is at least comparable to that of multi-step MD. Notably, for longer sequences where the gap between the 2-step MD and the Bayes-optimal solution widens, the transformer performance exhibits some variance across seeds, occasionally falling below the 2-step curve. This hints that the learned estimator may not be tied to the 2-step MD and could, in principle, exploit additional structure. Extending our explicit construction to the multi-step setting is nontrivial and we leave it to future work; however, the empirical finding that a 5-layer transformer suffices to match 2-step performance suggests that intermediate representations (e.g., the responsibilities computed in earlier layers) may be reused across steps.

\vspace{-2mm}
\section{Conclusions}
\label{sec:conclusions}
\vspace{-2mm}
We set out to understand whether transformers can learn latent causal structures in-context, and what algorithm they implement to do so. To this end, we introduced a framework based on MTD that formalizes token-importance estimation as in-context inference of latent variables. Our central finding is that transformers solve this task by implementing Mirror Descent. We gave an explicit three-layer construction that exactly implements one step of the algorithm and proved that the resulting estimator is a first-order approximation of the Bayes-optimal predictor, with a learning-rate scaling $\eta = \Theta(1/T)$ matching the transformer's learned behavior. Empirically, transformers trained from scratch, both disentangled and standard, recover the constructed solution across predictive distributions and  attention patterns, while deeper models achieve performance comparable to multi-step Mirror Descent. Taken together, our results extend the gradient-based interpretation of in-context learning from regression to sequential domains over discrete tokens, offering a new algorithmic lens for understanding latent-variable inference in attention-based models. 
\clearpage

\section*{Acknowledgements}
This work was partially funded by an unrestricted gift from Coefficient Giving, and the grant number 212111 from the Swiss National Science Foundation. Francesco D’Angelo is supported by the Google PhD Fellowships.

\bibliography{iclr2026_conference}
\bibliographystyle{iclr2026_conference}
\clearpage
\appendix
\section{Notation}
\label{app:notation}
Throughout this paper, we use non-bold letters for scalars (e.g., $\eta, \alpha$), lowercase bold letters for vectors (e.g., $\bm{h}, \bm{\lambda}$), and uppercase bold letters for matrices (e.g., $\bm{W}, \bm{H}, \bm{\pi}$). The $i$-th element of a vector $\bm{v}$ is denoted by $v_i$, and the vector at position $i$ in a sequence of vectors $\bm{H}$ is written as $\bm{h}_i$. The element in the $i$-th row and $j$-th column of a matrix $\bm{A}$ is $A_{ij}$, and its $i$-th row vector is $\bm{A}_{i,:}$. We use $\bm{1}$ and $\bm{0}$ to denote vectors or matrices of ones and zeros, respectively, with dimensions inferred from context. We use $\bm{e}_k$ to denote a one-hot vector with a one at the $k$-th position; its dimensionality is specified or clear from context. The set of integers $\{1, \dots, m\}$ is denoted by $[m]$. The probability simplex in $\mathbb{R}^m$ is denoted by $\Delta_{m-1}$. The operator $\Concat(\cdot, \cdot)$ denotes the vertical concatenation of vectors or matrices. For vectors $\bm{a} \in \mathbb{R}^{d_a}$ and $\bm{b} \in \mathbb{R}^{d_b}$, their concatenation results in a vector in $\mathbb{R}^{d_a+d_b}$. For matrices $\bm{A} \in \mathbb{R}^{d_A \times T}$ and $\bm{B} \in \mathbb{R}^{d_B \times T}$ with the same number of columns, $\Concat(\bm{A}, \bm{B})$ is the block matrix $\begin{psmallmatrix} \bm{A} \\ \bm{B} \end{psmallmatrix} \in \mathbb{R}^{(d_A+d_B) \times T}$. Superscripts in parentheses, such as $\bm{H}^{(l)}$, are used to index the layers of the Transformer. In the context of Transformer relative positional encodings, we use 1-based indexing for token positions $i, j \in [T]$. The relative position is mapped to a lookup table index $k=i-j+1$. The approximations will be expressed using Landau notation, where a vector function $\bm{f}(\bm{g}) = O(\|\bm{g}\|^p)$ signifies that $\|\bm{f}(\bm{g})\| \leq C \|\bm{g}\|^p$ for some constant $C$ in a neighborhood of $\bm{g}=\bm{0}$.

\subsection{Dimensionality of the Transformer Construction}
\label{app:dimensionality}
The disentangled Transformer architecture results in a hidden state that grows with each layer. The following table provides a summary of the dimensions at each stage of the construction. Note that the input to layer $l$ is $\bm{h}^{(l-1)}$, and its output is $\bm{h}^{(l)}$, which is formed by concatenating the input with the result of the attention mechanism, $\hat{\bm{h}}^{(l)}$. We use $q$ for the alphabet size and $m$ for the MTD model order. The RPE value dimension is $d_R = m$ throughout the construction. Layer~1 uses the value RPE to store lag indices, while Layers~2 and~3 set $\bm{R}_V^{(l)} = \mathbf{0}$, so the concatenation appends $m$ zero dimensions.
\begin{table}[h]
\centering
\caption{Dimensionality of Hidden States in the Disentangled Transformer}
\label{tab:dimensionality}
\renewcommand{\arraystretch}{1.4} %
\begin{tabular}{@{}clll@{}}
\toprule
\textbf{Layer} & \textbf{Description} & \textbf{Attention Output $\hat{\bm{h}}^{(l)}$} & \textbf{Concatenated Hidden State $\bm{h}^{(l)}$} \\
\midrule
0 (Input) & Token Embeddings & --- & $\bm{h}^{(0)} \in \mathbb{R}^{q}$ \\
1 & Responsibilities & $\hat{\bm{h}}^{(1)} \in \mathbb{R}^{q+m}$ & $\bm{h}^{(1)} = \Concat(\bm{h}^{(0)}, \hat{\bm{h}}^{(1)}) \in \mathbb{R}^{2q+m}$ \\
2 & Summation & $\hat{\bm{h}}^{(2)} \in \mathbb{R}^{2q+2m}$ & $\bm{h}^{(2)} = \Concat(\bm{h}^{(1)}, \hat{\bm{h}}^{(2)}) \in \mathbb{R}^{4q+3m}$ \\
3 & Weighting & $\hat{\bm{h}}^{(3)} \in \mathbb{R}^{4q+3m}$ & $\bm{h}^{(3)} = \Concat(\bm{h}^{(2)}, \hat{\bm{h}}^{(3)}) \in \mathbb{R}^{8q+6m}$ \\
\midrule
\multicolumn{2}{l}{\textbf{Final Output Matrix}} & \multicolumn{2}{l}{$\widetilde{\bm{W}}_O \in \mathbb{R}^{q \times (8q+6m)}$} \\
\bottomrule
\end{tabular}
\end{table}

\section{Experimental Details}
\label{app:experimental_details}

This section provides additional details on the experimental setup, including the hyperparameter settings for all models and estimators used in our empirical validation.

\subsection{Hyperparameter Tuning}
\label{app:hyperparameters_tuning}
For the one-step Mirror Descent estimator ($\hat{\lambda}^{\text{MD}}$) and our theoretical Transformer construction ($\tilde{\mathcal{T}}_{\text{constr.}}$), the learning rate parameters $\eta$ and $\beta$ were not fixed but were tuned to optimize performance. For each sequence length evaluated, we performed a grid search over a range of potential values for $\eta$ and $\beta$. The value that minimized the KL divergence to the true Bayesian posterior mean was selected for the final comparison plots. This ensures that both methods were evaluated under their optimal conditions.

\subsection{Parameter Summary}
The following table summarizes the key parameters used in our experiments.
\begin{table}[H]
\centering
\caption{Summary of Experimental Parameters}
\label{tab:hyperparameters}
\renewcommand{\arraystretch}{1.2}
\resizebox{0.8\textwidth}{!}{
\begin{tabular}{@{}lll@{}}
\toprule
\textbf{Component} & \textbf{Parameter} & \textbf{Value} \\
\midrule
\multicolumn{3}{l}{\textbf{Data Generation}} \\
& MTD model order ($m$) & 3,4,5 \\
& Vocabulary size ($q$) & 5 \\
& Sequence Length ($T$) & Varied (64 to 1984) \\
& Dirichlet Prior ($\bm{\alpha}$) & Uniform ($\alpha_g = 1$ for all $g$) \\
& Transition Matrix ($\bm{\pi}$) &  Rows from Dirichlet ($\bm{\alpha}=1$) \\
\midrule
\multicolumn{3}{l}{\textbf{Mirror Descent (MD) Estimator}} \\
& Learning Rate ($\eta$) & Log grid: [$10^{-5}, 10^{-1}$], 1000 points \\
\midrule
\multicolumn{3}{l}{\textbf{Constructed Transformer ($\tilde{\mathcal{T}}_{\text{constr.}}$)}} \\
& Large Constant ($\delta_1, \delta_2, \delta_3$) & [100,100,100] \\
& Scaled Learning Rate ($\beta$) & Log grid: [$10^{-5}, 10^{-1}$], 1000 points \\
\midrule
\multicolumn{3}{l}{\textbf{Trained Transformer ($\tilde{\mathcal{T}}_{\text{disent.}}$)}} \\
& Architecture & 3-layer \& 5-layer Disentangled Transformer \\
& Attention Heads & 1 \\
& Concatenation & True \\
& Semantic Embeddings & one-hot \\
& Relative Positional Encodings & Learned \\
& RPE value dimension & $m$ \\
& Embedding Dimension & $q$ \\
& QK parametrization & False \\
& Value matrix & False \\
& Head output projection matrix & False \\
& Optimizer & Adam \\
& Learning Rate & $1 \times 10^{-3}$ \\
& LR Schedule & constant \\
& Batch Size & 128 \\
& Training Iterations & $5 \times 10^5$ \\
& Loss Function & MSE on the last token prediction \\
\midrule
\multicolumn{3}{l}{\textbf{Trained Transformer ($\mathcal{T}_{\text{standard.}}$)}} \\
& Architecture & 3-layer Standard Transformer \\
& Attention Heads & 1 \\
& Concatenation & False \\
& Semantic Embeddings & Learned \\
& Relative Positional Encodings & Learned \\
& Embedding Dimension & 32 \\
& QK parametrization & True \\
& Value matrix & True \\
& Head output projection matrix & Tru   e \\
& Optimizer & Adam \\
& Learning Rate & $1 \times 10^{-3}$ \\
& LR Schedule & constant \\
& Batch Size & 128 \\
& Training Iterations & $5 \times 10^5$ \\
& Loss Function & MSE on the last token prediction \\
\midrule
\multicolumn{3}{l}{\textbf{MCMC for Bayes Estimator}} \\
& Sampler & Gibbs Sampling \\
& Burn-in Iterations & [200] \\
& Number of Samples (K) & [2000] \\
\bottomrule
\end{tabular}
}
\end{table}

\section{Related works}
\label{app:related_works}
\textbf{Induction heads and interpretability}
The emergence of ICL, as well as the more general ability of transformers to implement algorithms, has been linked to the formation of interpretable computational circuits \cite{elhage2021mathematical} such as induction heads \citep{olsson2022context}, which are woven into sparse attention patterns \citep{zucchet2025emergence}. The development of these circuits is not monolithic; rather, they emerge in phases through the interaction of simpler subcircuits. \citet{singh2024what}, for instance, use a causal framework to identify the key subcircuits whose interplay leads to the sudden formation of induction heads during training. This emergence itself also critically depends on the training data; specific distributional properties, such as burstiness and class imbalance, have been shown to be key drivers of this capability \cite{chan2022data,zucchet2025emergence}. While much of this understanding comes from reverse-engineering circuits in pretrained models \cite{conmy2023towards}, a parallel line of research aims to create transformers that are interpretable by design, for example by training models that can be directly decompiled into human-readable programs \cite{friedman2023learning}. 

\textbf{In-Context learning and gradient descent}
Following initial empirical observations of ICL in transformers~\citep{brown2020language}, a significant line of research has sought to understand its underlying mechanisms. Early work demonstrated that transformers can learn simple function classes like linear models in-context~\citep{garg2022can}. This led to the hypothesis that transformer layers effectively implement optimization algorithms, with several studies showing they can perform computations analogous to gradient descent for in-context linear regression~\citep{akyurek2022learning, bai2024transformers, von2023transformers, von2023uncovering}. This gradient-based view has been extended to higher-order algorithms~\citep{ahn2023transformers, fu2024transformers} and given theoretical grounding, with proofs that gradient flow converges to a transformer that has learned the in-context task~\citep{zhang2023trained}. From a statistical learning perspective, this process has been formalized as "algorithm learning", where generalization is guaranteed by the algorithmic stability of the learned procedure~\citep{li2023transformers}. From a Bayesian standpoint, \citet{zhang2024what} formalize ICL as Bayesian model averaging over latent concepts, providing generalization guarantees that complement the algorithmic view. Crucially, the emergence of this behavior is not guaranteed; it depends on sufficient pretraining task diversity~\citep{raventos2023pretraining}. This algorithmic paradigm also extends to linear autoregressive processes, where transformers have been shown to implement a gradient descent step to learn the transition matrix in-context~\citep{sander2024how}. In a complementary direction, \citet{lu2025asymptotic} develop an asymptotic theory for in-context learning by linear attention, providing exact analytical characterizations in the large-dimensional limit.

\textbf{In-Context learning, Markov chains and n-gram models}
Our work is closely related to the literature analyzing ICL for sequential probabilistic models like n-grams and Markov chains. Foundational work by \citet{yuksel2025sample} established formal generalization bounds for next-token prediction on Markovian data, analyzing its sample complexity. Regarding transformers, several mechanistic studies have investigated how they implement learning algorithms for these models. For bigrams, transformers have been shown to develop induction heads that function like associative memories \citep{bietti2024birth} and accurately compute posterior probabilities from statistical cues \citep{edelman2024evolution}. For first order Markov chains, \citet{nichani2024how} demonstrated that transformers learn the causal structure with gradient descent and implement induction heads to estimate the transition probabilities in-context, effectively implementing a Bayes-optimal estimator. This analysis was later extended to higher-order chains \citep{chen2024unveiling}, while the work of \citet{dangelo2025selective} shows that transformers can even learn to select the correct Markov causal structure at inference time. Further theoretical results have explored the transformer loss landscape in this setting \citep{makkuva2024attention}, characterized in-context n-grams as near-stationary points \citep{varre2025learning}, and shown that constant-depth transformers are sufficient to learn k-th order Markov chains \citep{rajaraman2024transformers}.

\textbf{Transformers and sequential models}
Beyond specific learning algorithms, a broader line of work has explored the fundamental capabilities and limitations of transformers as sequential models. On the foundational side, transformers have been shown to be universal approximators of sequence-to-sequence functions \citep{Yun2020Are} and even Turing-complete under certain assumptions \citep{perez2021attention}, while recent work has investigated the trade-off between depth and parallel computation \citep{Sanford2024TransformersPC}. In terms of representational power, transformers with sparse attention have been shown to be capable of exactly representing any n-gram model \citep{svete2024transformers}. However, this expressive power has limits; for instance, transformers may be less effective at learning certain Hidden Markov Models (HMMs) compared to RNNs \citep{hu2024limitation}. Interestingly, when investigating the internal representations that enable inference in HMMs, \citet{shai2024transformers} showed that transformers maintain interpretable belief states that are linearly encoded in the residual stream.
A strength of transformers is their ability to perform in-context model selection. It has been demonstrated that a single transformer can adaptively choose between different base algorithms or even qualitatively different tasks (e.g., regression vs. classification) based on the prompt \citep{bai2023transformers}, effectively selecting between different function classes in-context \citep{yadlowsky2023pretraining}. While high-level n-gram statistics can approximate transformer predictions, the mechanism for how the correct "rule" is selected in-context remains an open question \citep{nguyen2024understanding}.

\section{Training Setup: Loss Function and Token Supervision}
\label{app:experimental_design}
In this section we discuss the main experimental design choices we made in the main text and we provide additional details.
\subsection{MSE Loss vs.\ Cross-Entropy Loss}
\label{app:mse_vs_ce}

In standard next-token prediction tasks, the natural training objective is the cross-entropy loss (equivalently, the negative log-likelihood), which for a single prediction at position $t$ takes the form:
\begin{equation}
\label{eq:ce_loss}
\mathcal{L}_{\text{CE}} = -\sum_{j=1}^{q} \bm{e}_{y_t}(j) \log \hat{p}(j) = -\log \hat{p}(y_t),
\end{equation}
where $\hat{p}(j)$ denotes the model's predicted probability for token $j$ and $\bm{e}_{y_t}$ is the one-hot encoding of the target token $y_t$. In a typical transformer, the predicted distribution is obtained by applying a softmax to the output logits $\bm{z} \in \mathbb{R}^q$:
\begin{equation}
\label{eq:softmax_output}
\hat{p}(j) = \frac{\exp(z_j)}{\sum_{k=1}^{q} \exp(z_k)}.
\end{equation}
However, this standard setup is \textit{misspecified} for the MTD in-context learning task considered in this work. To see this, recall from Proposition~\ref{prop:transformer_implementation} that the optimal predictive distribution is a convex combination of rows of the transition matrix:
\begin{equation}
\label{eq:optimal_pred}
p^\star(Y_t = j \mid \bm{y}_1^{t-1}) = \sum_{g=1}^{m} \tilde{\lambda}_g \cdot \pi^\star(y_{t-g}, j),
\end{equation}
where $\tilde{\lambda}_g \geq 0$ and $\sum_{g=1}^m \tilde{\lambda}_g = 1$. This target distribution is a \textit{mixture} (i.e., a weighted average in probability space), and the set of achievable distributions $\mathcal{M} = \left\{ \sum_{g} \lambda_g \bm{\pi}^\star(y_{t-g}, :) : \bm{\lambda} \in \Delta_{m-1} \right\}$ is a convex subset of the probability simplex. 

The key issue is that the softmax function in \eqref{eq:softmax_output} maps $\mathbb{R}^q$ to the interior of the simplex via an \textit{exponential} (log-linear) parameterization:
\begin{equation}
\hat{p}(j) = \text{softmax}(\bm{z})_j = \frac{\exp(z_j)}{\sum_k \exp(z_k)}.
\end{equation}
The image of the softmax is the set of distributions with \textit{full support}, parameterized log-linearly. In contrast, the target distribution in \eqref{eq:optimal_pred} is a \textit{linear mixture} of fixed distributions. Unless the mixture happens to coincide with a log-linear (exponential family) distribution (which is not generically the case), the softmax output layer cannot exactly represent the target.

Concretely, suppose the transformer's output layer produces logits $\bm{z} \in \mathbb{R}^q$. For the softmax output to match the optimal predictor, we would need:
\begin{equation}
\label{eq:misspecification}
\frac{\exp(z_j)}{\sum_k \exp(z_k)} = \sum_{g=1}^{m} \tilde{\lambda}_g \cdot \pi^\star(y_{t-g}, j), \quad \forall j \in \{1, \dots, q\}.
\end{equation}
This requires the logits to satisfy $z_j = \log\left(\sum_{g=1}^m \tilde{\lambda}_g \cdot \pi^\star(y_{t-g}, j)\right) + C$ for some constant $C$. While such logits \textit{exist} in $\mathbb{R}^q$, the problem is that our construction produces the mixture directly in probability space through the attention mechanism (see Proposition~\ref{prop:transformer_implementation}): the output $\widetilde{\bm{W}}_O \bm{h}_T^{(3)} = \sum_g \tilde{\lambda}_g \bm{\pi}^\star(y_{T-g}, :)$ already \textit{is} the target probability vector. Composing this output with an additional softmax would yield:
\begin{equation}
\hat{p}(j) = \frac{\exp\left(\sum_g \tilde{\lambda}_g \pi^\star(y_{T-g}, j)\right)}{\sum_k \exp\left(\sum_g \tilde{\lambda}_g \pi^\star(y_{T-g}, k)\right)} \neq \sum_g \tilde{\lambda}_g \pi^\star(y_{T-g}, j),
\end{equation}
which does not recover the correct mixture distribution. The softmax distorts the linear mixture through a nonlinear transformation, making the model \textit{misspecified}: regardless of parameter tuning, the composed map \texttt{softmax} $\circ$ \texttt{linear} $\circ$ \texttt{attention} cannot implement the target estimator when the output is constrained to be log-linear.

One could, in principle, resolve this misspecification by using a standard transformer with MLP layers. Indeed, the softmax output can recover the correct mixture distribution if the logits are set to $z_j = \log\!\left(\sum_g \tilde{\lambda}_g \pi^\star(y_{t-g}, j)\right)$, which is a valid element-wise transformation of the mixture vector. An MLP applied after the final attention layer could learn this element-wise $\log(\cdot)$ mapping, effectively ``undoing'' the distortion introduced by the output softmax and allowing cross-entropy training. More generally, interleaving MLPs between attention layers gives the network additional expressive power to compose nonlinear transformations at each stage. However, this flexibility comes at a significant cost in interpretability: with three attention layers each followed by an MLP, the intermediate representations no longer have a transparent algebraic correspondence with the quantities in our construction (responsibilities, summed responsibilities, mixture weights). The clean separation between the computation of $\gamma_i(g)$, the averaging $\sum_i \gamma_i(g)$, and the softmax estimation of $\tilde{\lambda}_g$ would be entangled with learned nonlinear mappings at every layer, making it substantially harder to verify \textit{what} estimator the trained model has learned. Since a primary goal of this work is to establish a precise, interpretable link between transformers and the one-step MD estimator, we opt for the disentangled architecture without MLPs and leave the analysis of MLP-augmented transformers trained with cross-entropy to future work.

For this reason, in all experiments we train with the mean squared error (MSE) loss, which directly measures the discrepancy in probability space:
\begin{equation}
\mathcal{L}_{\text{MSE}} = \left\| \bm{e}_{y_t} - f(\bm{y}_1^{t-1}) \right\|_2^2,
\end{equation}
where $f(\bm{y}_1^{t-1}) \in \mathbb{R}^q$ is the model's raw output (without a softmax nonlinearity at the output). Under MSE, the Bayes-optimal predictor is the conditional expectation $\mathbb{E}[\bm{e}_{Y_t} \mid \bm{y}_1^{t-1}] = p(Y_t \mid \bm{y}_1^{t-1})$, which is exactly the target probability vector in \eqref{eq:optimal_pred}. Thus MSE is consistent with our construction: the global minimum of the MSE population risk is achieved when the model output equals the true predictive distribution, without any architectural mismatch.

\subsection{Loss over the Last Token Only}
\label{app:last_token_loss}

In our experiments, the training loss is computed only at the final position $T$ of each sequence, rather than being averaged over all positions $t \in \{m{+}1, \ldots, T\}$ as is standard in language modeling. While this reduces the number of gradient signals per sequence, it plays a crucial role in isolating the learning rate parameter $\beta$ of the Mirror Descent algorithm that our construction implements.

Recall from Proposition~\ref{prop:transformer_implementation} that the constructed transformer computes the one-step MD estimator at position $T$ with a scaled learning rate $\beta / (T-m)$, where $\beta$ is a learnable parameter (stored in $\bm{R}_A^{(3)}$). Crucially, the division by $T-m$ arises because the second-layer attention uniformly averages the responsibilities over positions $m{+}1$ through $T$, so the effective number of terms in the sum scales with $T-m$.

If we were to apply the loss at \textit{every} position $t > m$, then at each position the relevant MD estimator would involve a sum of responsibilities from $m{+}1$ to $t$, with the optimal scaling being $\beta_t / (t-m)$. However, the RPE table $\bm{R}_A^{(3)}$ stores a single scalar $\beta$ that is shared across all positions---the transformer has no mechanism to use a position-dependent $\beta_t$. Therefore, when training with loss over all tokens, the model is forced to learn a single compromise value of $\beta$ that performs reasonably well across all positions $t$, but is optimal for none.

In principle, one could envision a mechanism where the learning rate varies with position. With \textit{absolute} positional encodings, each position has a unique embedding, which could, in theory, encode position-specific scaling factors. However, our construction relies on \textit{relative} positional encodings, where the RPE vectors depend only on the relative distance $i - j$ between query and key positions, not on the absolute position $i$. This makes it fundamentally difficult to implement a position-dependent parameter within a single set of RPE tables.

By restricting the loss to the last token only, we ensure that the model needs to optimize a single, well-defined estimator, the one at position $T$, with a single, well-defined learning rate $\beta$. This cleanly isolates the learning rate as a single degree of freedom and avoids the bias introduced by the position-averaging compromise, making the experimental results more directly comparable to our theoretical construction.

\section{Additional Experiments}
\label{app:additional_experiments}
In this section we repeat the main-text experiments for different MTD orders, comparing the trained and constructed transformers: in addition to the \(m=4\) case in the main text, we report results for \(m=3\) and \(m=5\). In \Cref{fig:app_kl_divergence_lag_1_3_1_5}, we plot the KL divergence to the ground-truth transition probabilities across sequence lengths for the trained transformers, the constructed transformer, and the one-step MD estimator (orders 3 and 5). In \Cref{fig:app_layer1_softmax_orders3_5}, we instead compare the learned first-layer attention (softmax) to the true transition matrices for orders 3 and 5, with the average row-wise KL divergence reported directly in each panel. Finally, \Cref{fig:app_attention_grids_orders3_5} reports attention grids for the trained and constructed transformers (disentangled) at sequence length 64 for MTD orders \(m=3\) and \(m=5\), analogous to the attention maps shown in \Cref{fig:attention_maps} (left) in the main text.

\begin{figure}[h]
    \centering
    \includegraphics[width=0.4\textwidth]{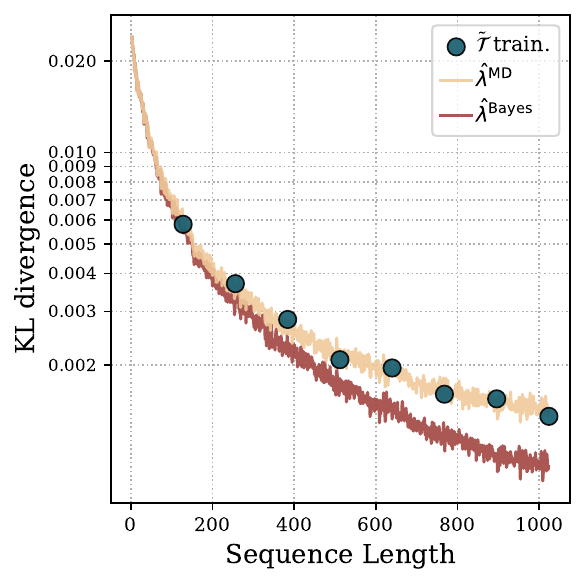} 
    \includegraphics[width=0.4\textwidth]{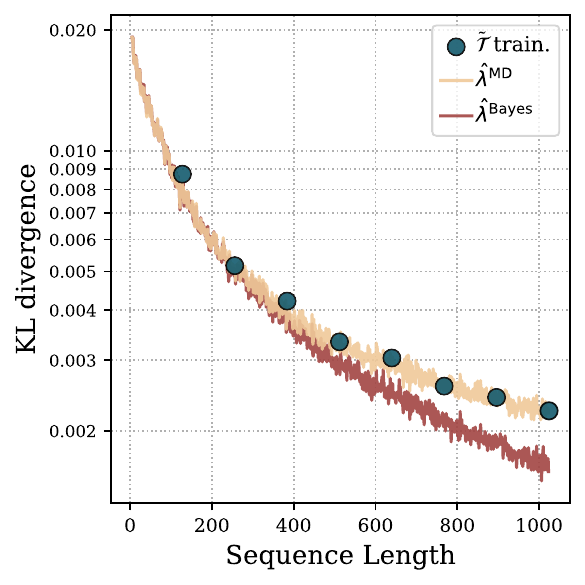}
    \caption{\small \textbf{KL divergence to the ground-truth} We report the KL divergence to the ground truth transition probabilities for the trained transformers, the constructed transformer, and the one-step MD estimator across sequence lengths \textbf{Left:} order 3. \textbf{Right:} order 5.}
    \label{fig:app_kl_divergence_lag_1_3_1_5}
\end{figure}
 
\begin{figure}[h]
    \centering
    \begin{minipage}[t]{0.48\textwidth}
        \centering
        \includegraphics[width=\textwidth]{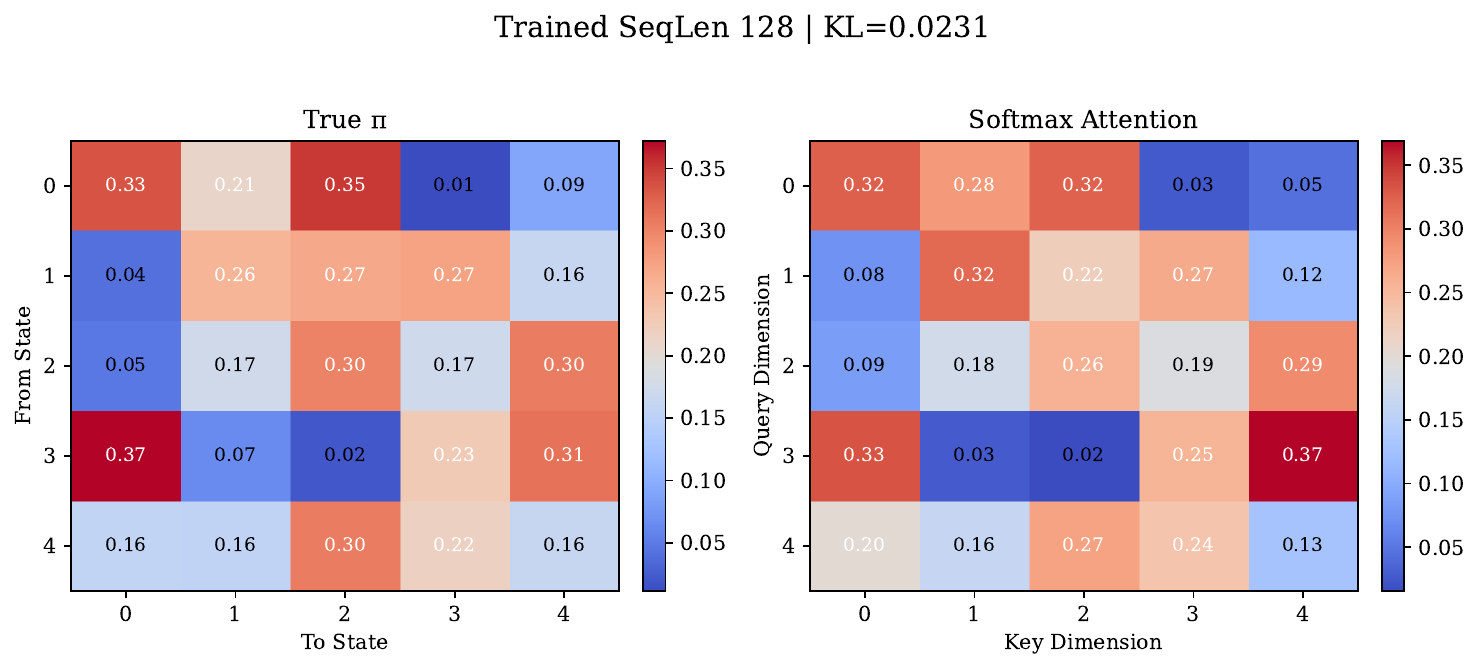}
    \end{minipage}\hfill
    \begin{minipage}[t]{0.48\textwidth}
        \centering
        \includegraphics[width=\textwidth]{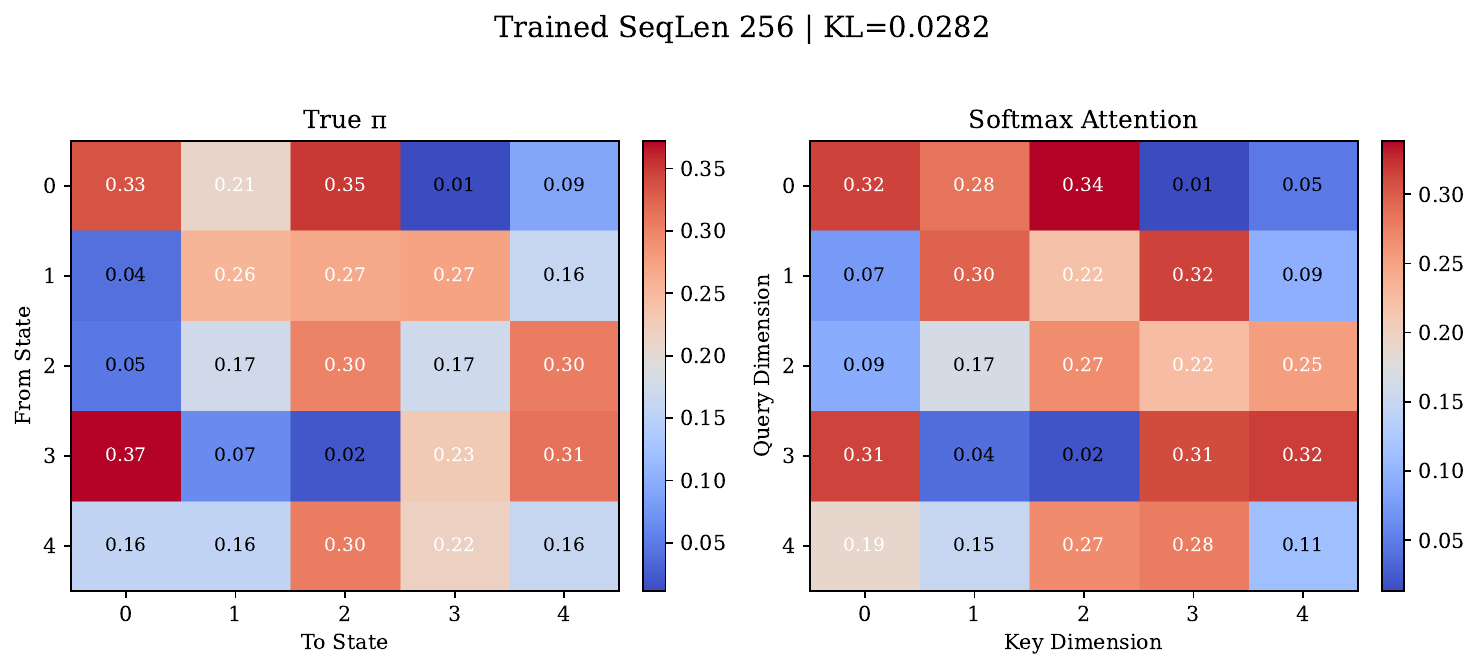}
    \end{minipage}

    \vspace{2mm}
    \begin{minipage}[t]{0.48\textwidth}
        \centering
        \includegraphics[width=\textwidth]{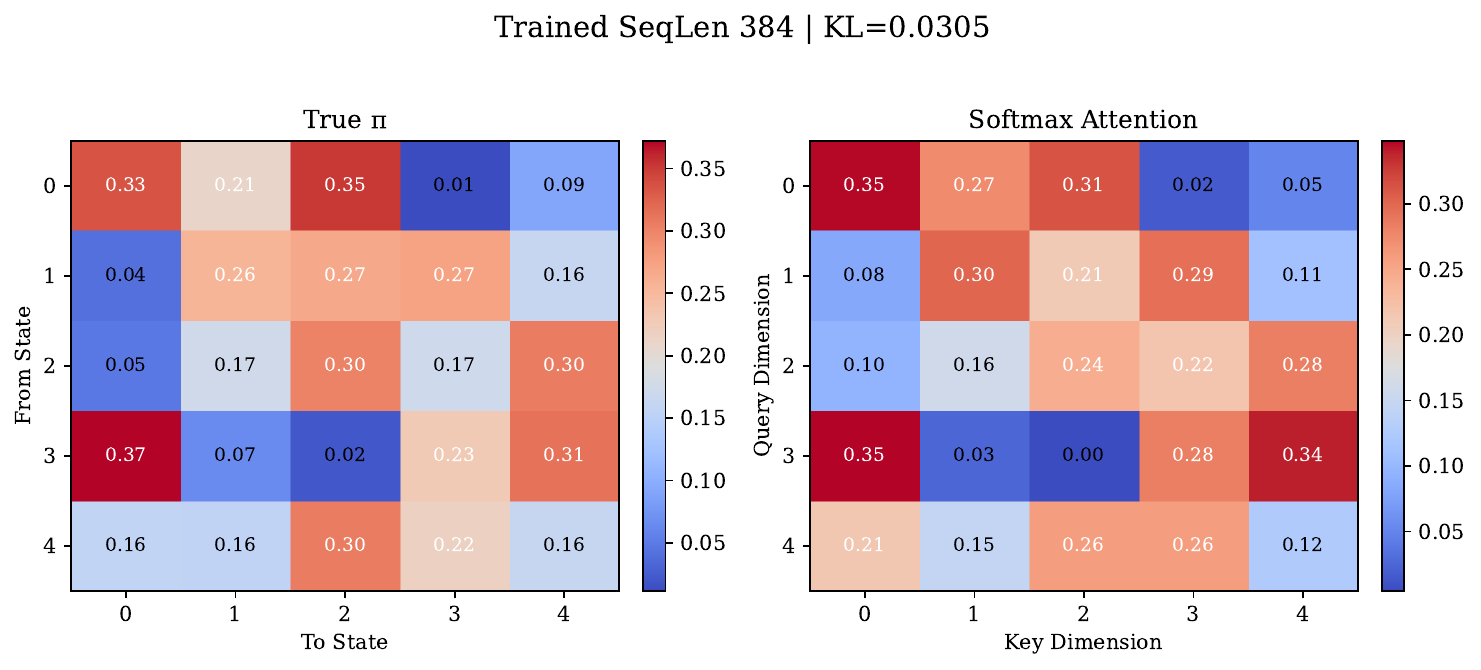}
    \end{minipage}\hfill
    \begin{minipage}[t]{0.48\textwidth}
        \centering
        \includegraphics[width=\textwidth]{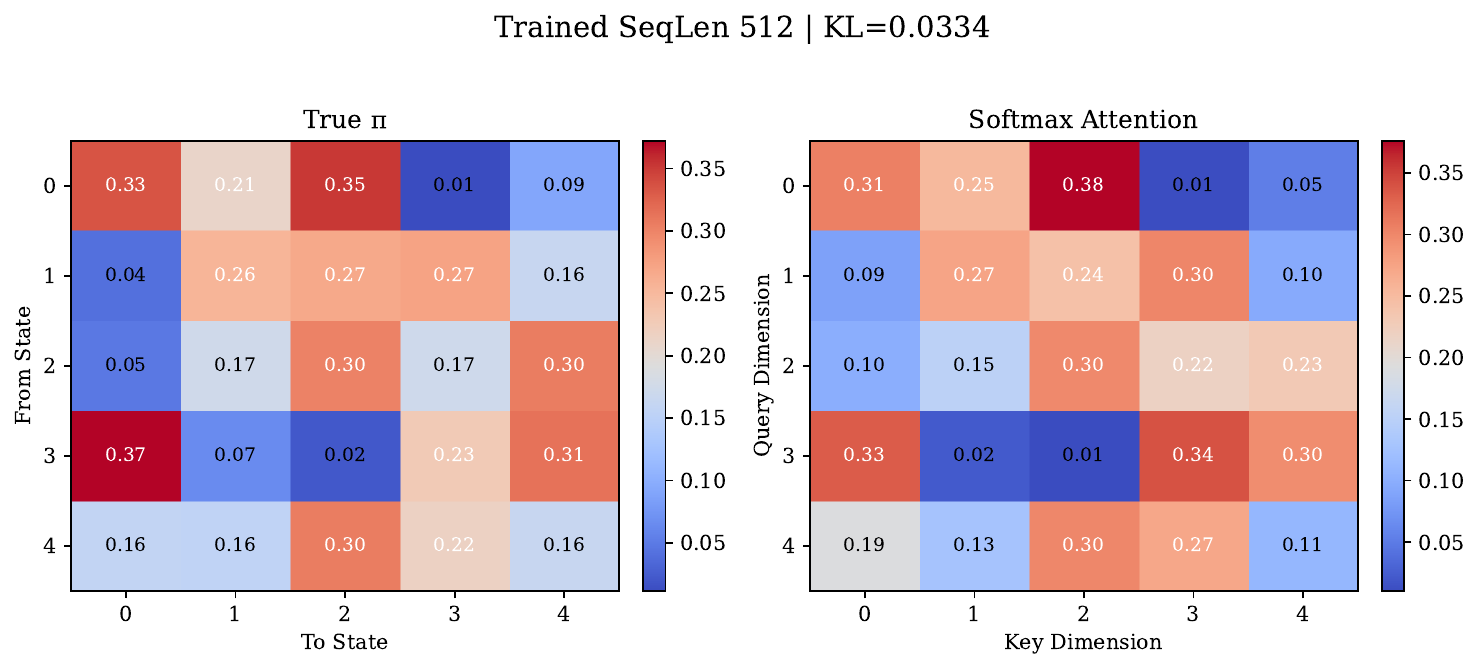}
    \end{minipage}

    \vspace{2mm}
    \begin{minipage}[t]{0.48\textwidth}
        \centering
        \includegraphics[width=\textwidth]{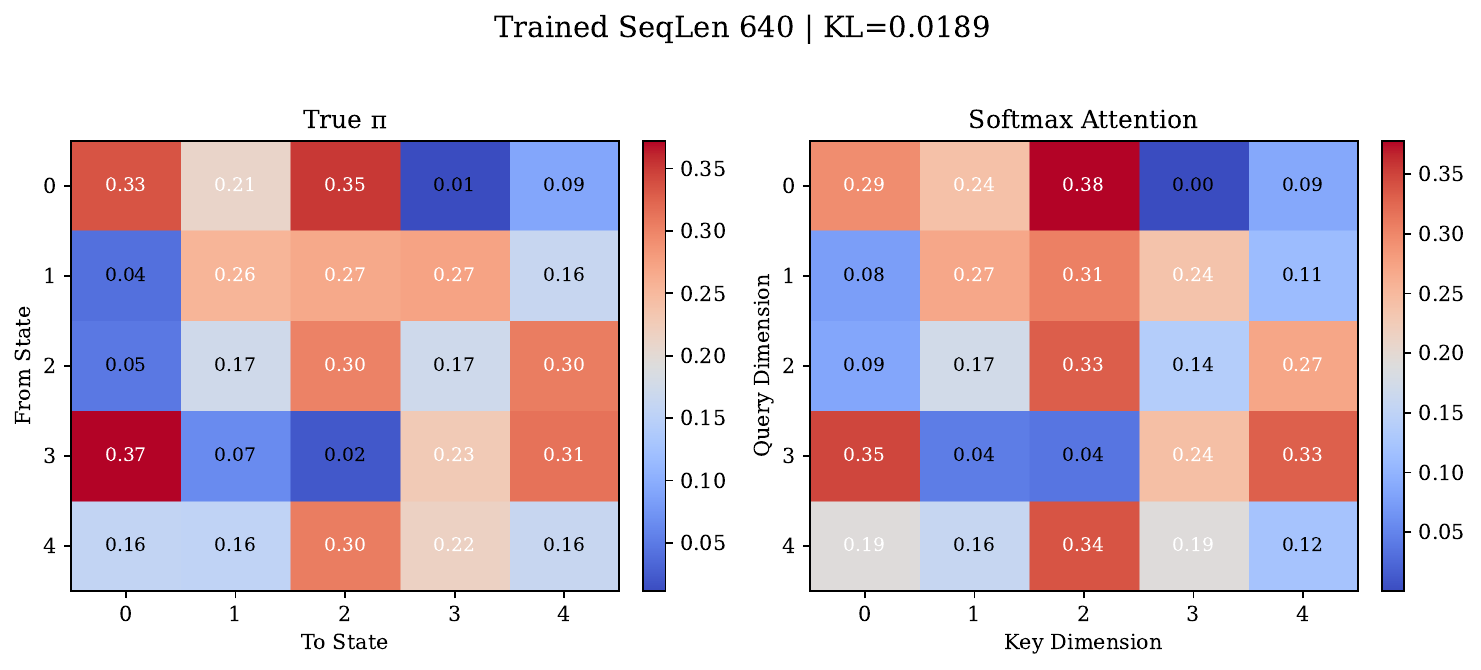}
    \end{minipage}\hfill
    \begin{minipage}[t]{0.48\textwidth}
        \centering
        \includegraphics[width=\textwidth]{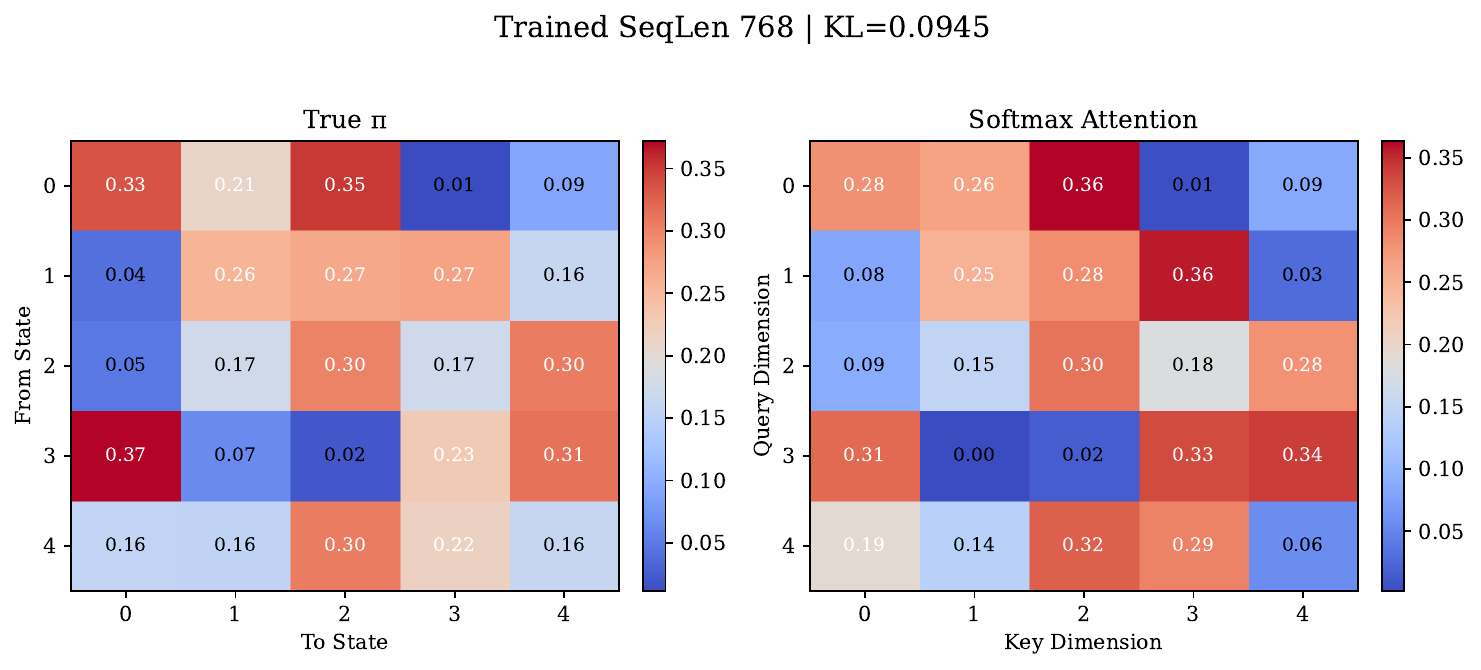}
    \end{minipage}

    \vspace{3mm}

    \begin{minipage}[t]{0.48\textwidth}
        \centering
        \includegraphics[width=\textwidth]{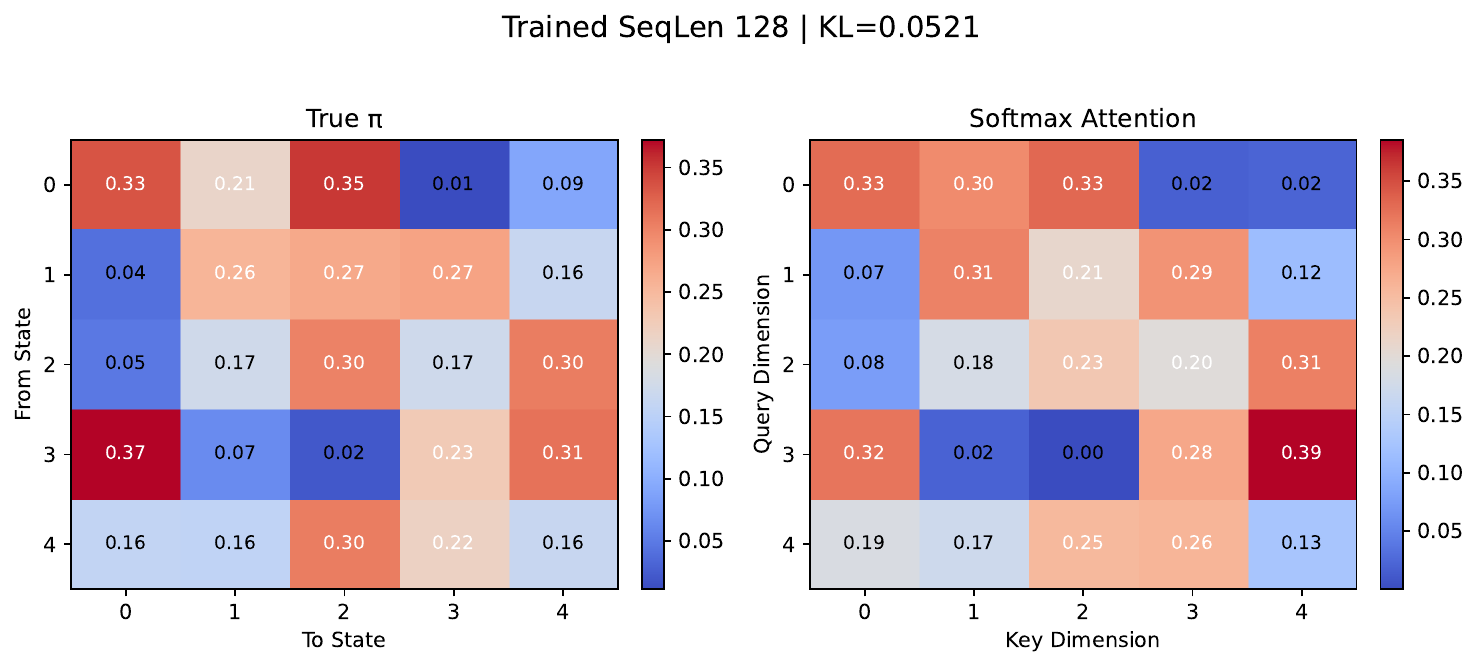}
    \end{minipage}\hfill
    \begin{minipage}[t]{0.48\textwidth}
        \centering
        \includegraphics[width=\textwidth]{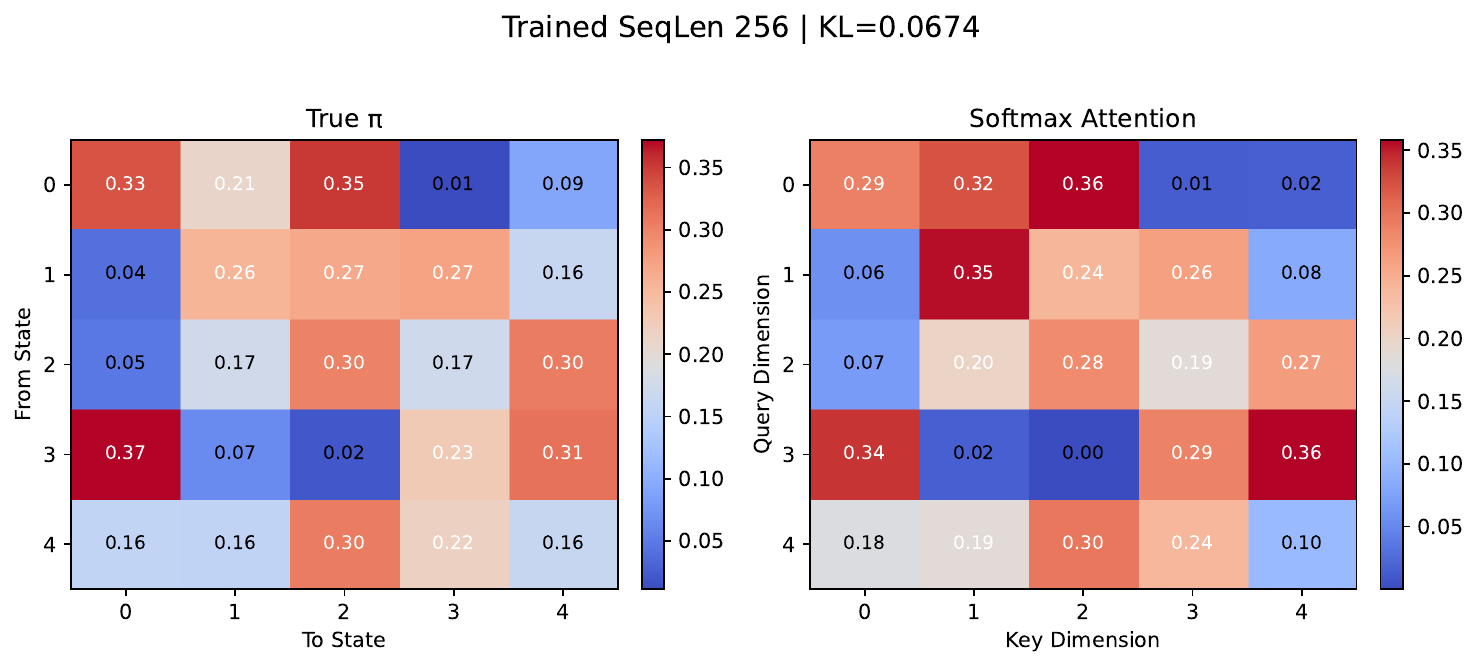}
    \end{minipage}

    \vspace{2mm}
    \begin{minipage}[t]{0.48\textwidth}
        \centering
        \includegraphics[width=\textwidth]{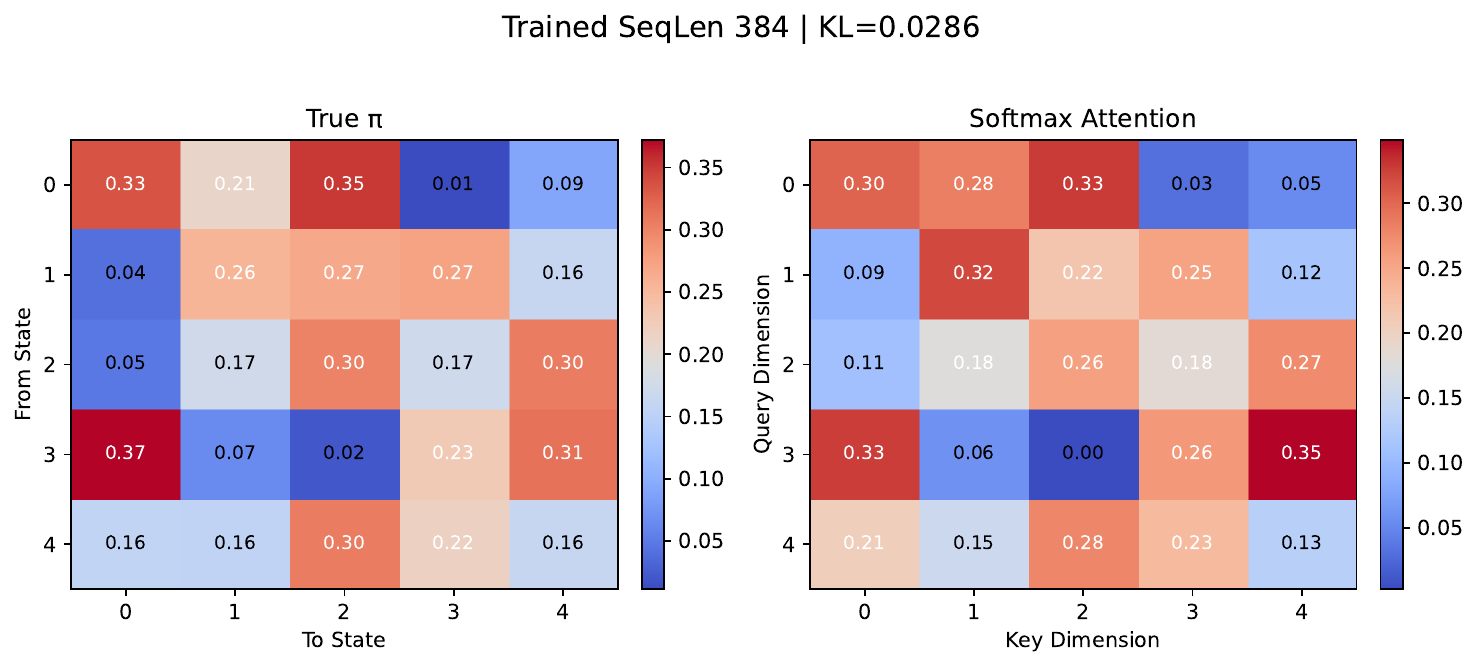}
    \end{minipage}\hfill
    \begin{minipage}[t]{0.48\textwidth}
        \centering
        \includegraphics[width=\textwidth]{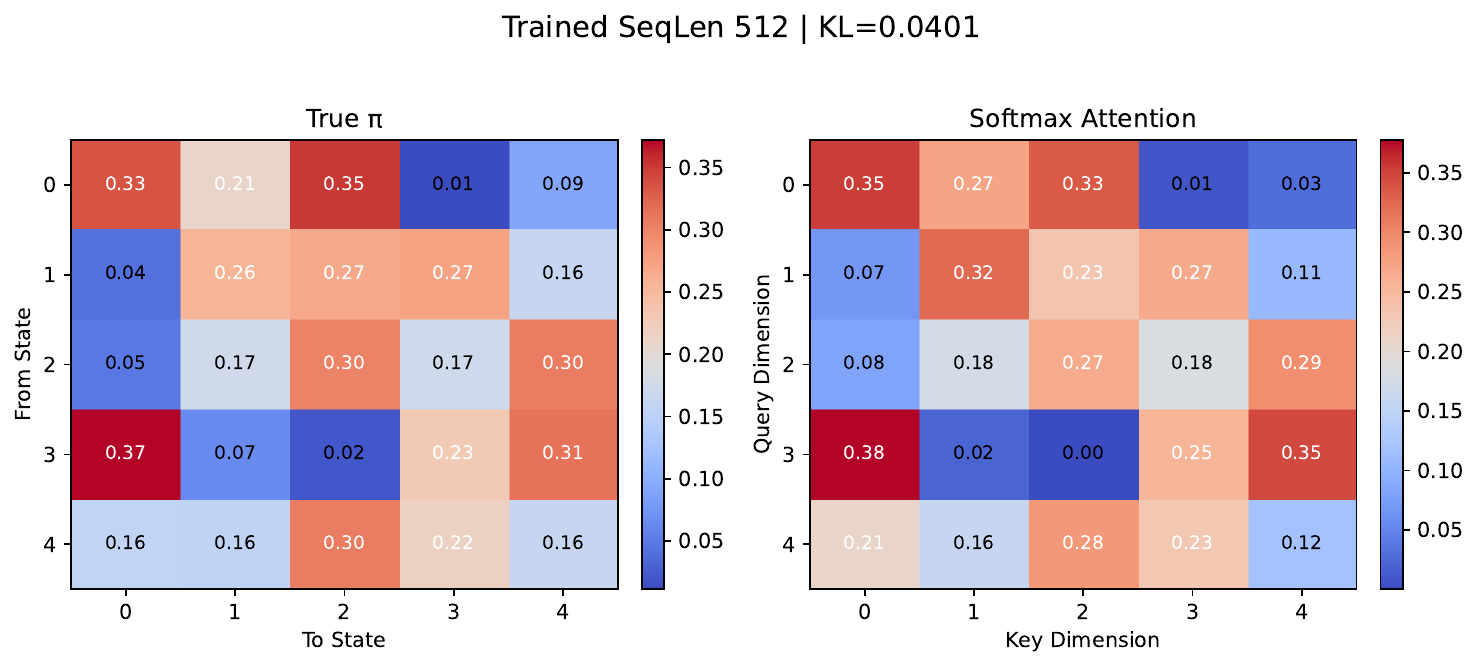}
    \end{minipage}

    \vspace{2mm}
    \begin{minipage}[t]{0.48\textwidth}
        \centering
        \includegraphics[width=\textwidth]{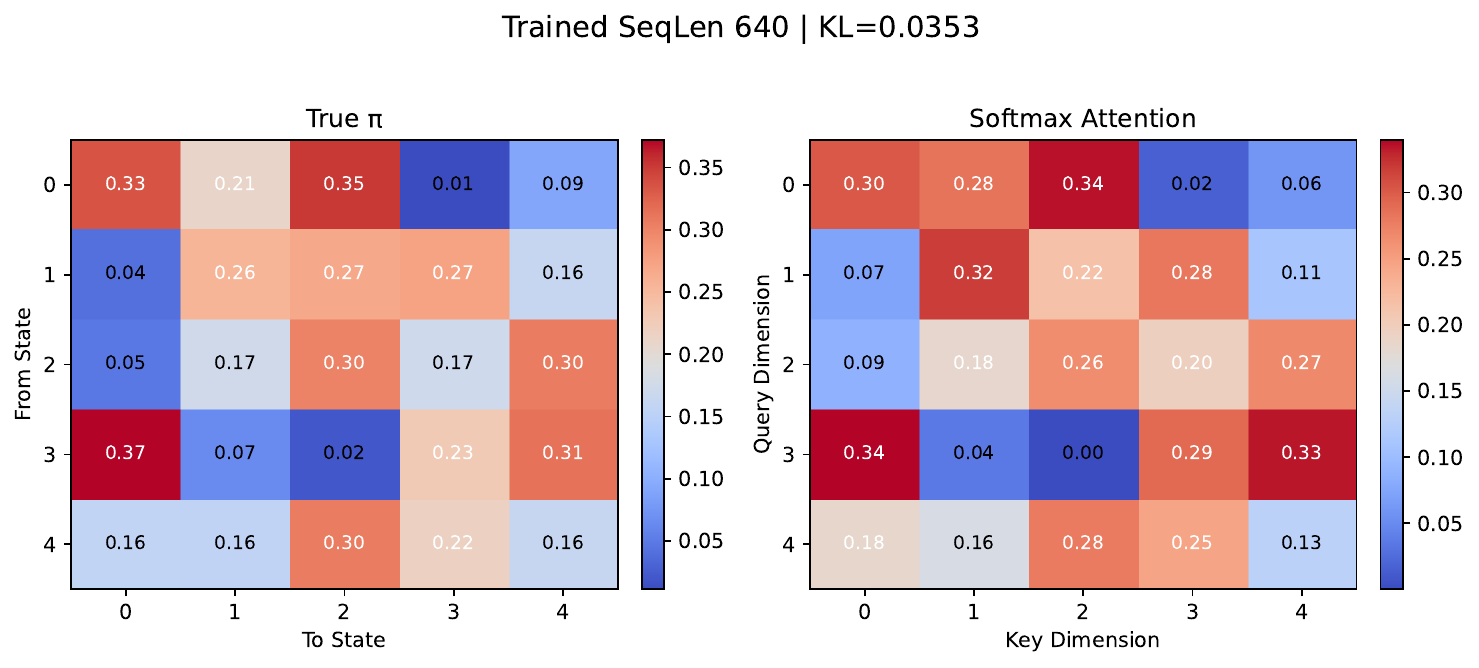}
    \end{minipage}\hfill
    \begin{minipage}[t]{0.48\textwidth}
        \centering
        \includegraphics[width=\textwidth]{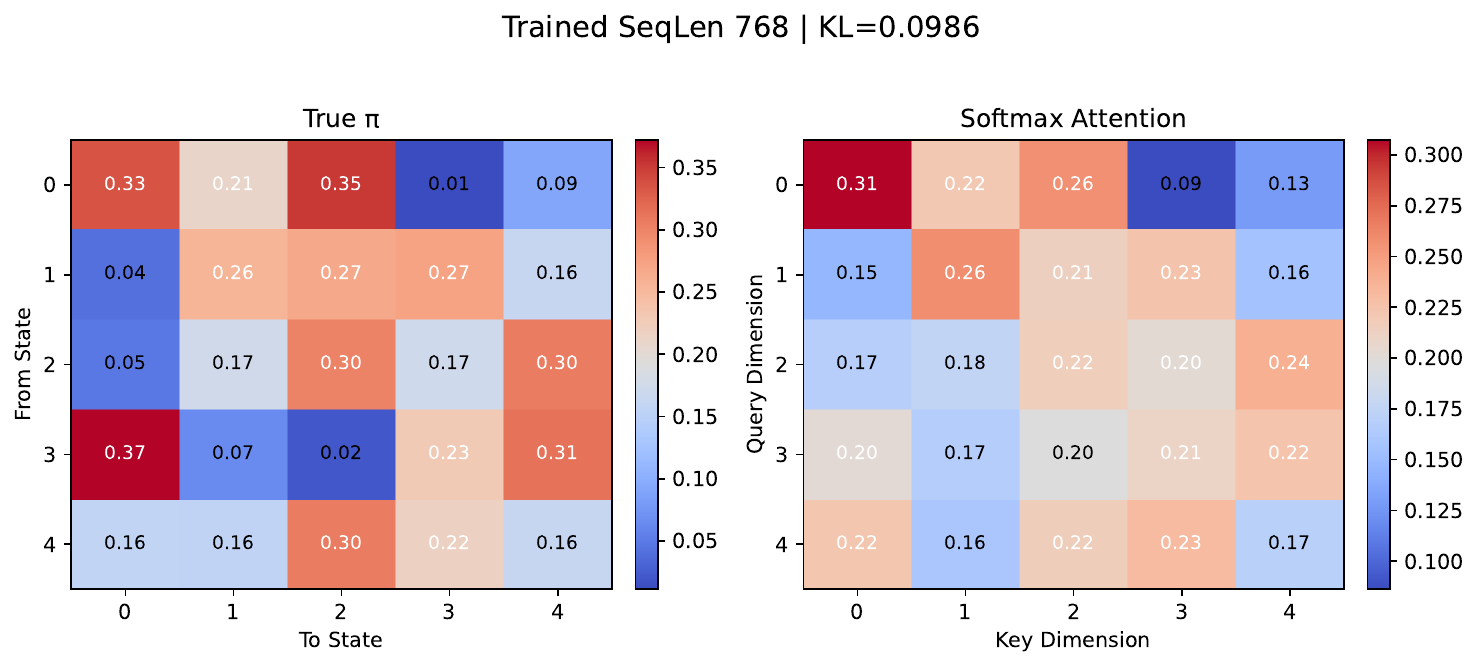}
    \end{minipage}

    \caption{\small \textbf{Layer-1 Attention Softmax vs. True Transition matrices for orders 3 and 5.} For both orders (\(m=3\) top 3 rows, \(m=5\) bottom 3 rows), each panel reports the learned first-layer attention (softmax of $\bm{W}_1^{A\top}$) alongside the true transition matrix; the average row-wise KL divergence is reported in the panel title. Sequence lengths increase left-to-right, top-to-bottom.}
    \label{fig:app_layer1_softmax_orders3_5}
\end{figure}

\begin{figure}[h]
    \centering
    \includegraphics[width=0.8\textwidth]{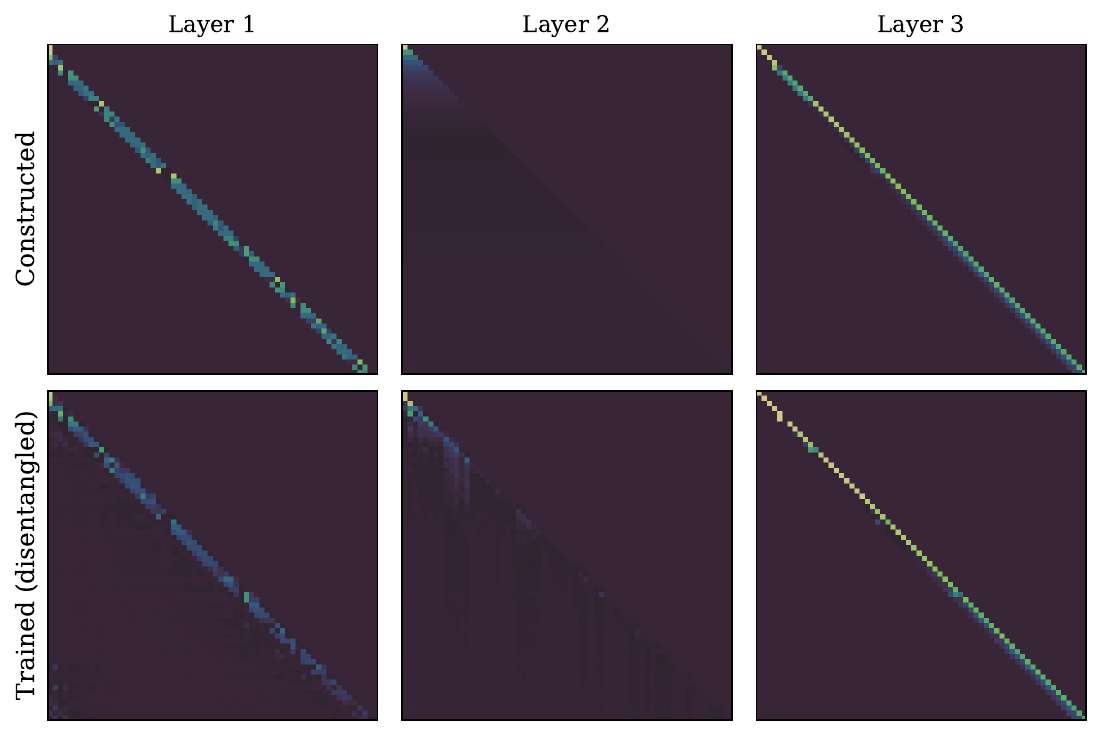}
    \includegraphics[width=0.8\textwidth]{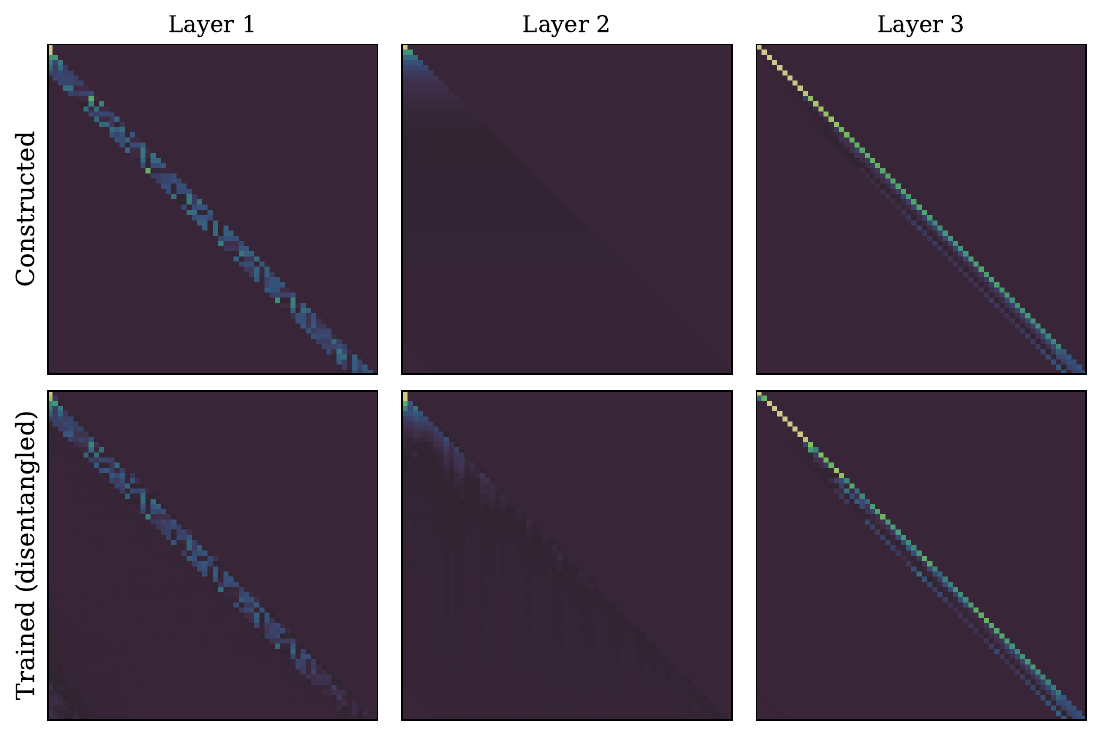}
    \caption{\small \textbf{Comparison of Trained and Constructed Transformers (attention grids).} \textbf{Top:} Attention maps of the trained transformer (disentangled) versus our theoretical construction (seq. length 64, MTD order \(m=3\)). \textbf{Bottom:} Same as top but for MTD order \(m=5\).}
    \label{fig:app_attention_grids_orders3_5}
\end{figure}

\clearpage               %

\section{Derivation of the Bayes-Optimal MTD Predictor}
\label{app:bayes_predictor_derivation}

Here, we provide the full derivation for the Bayes-optimal predictive distribution stated in Proposition \ref{prop:bayes_predictor}.

Our goal is to derive the predictive distribution for the next state, $p(Y_{t+1} | \bm{y}_1^t, \bm{\alpha})$, given an observed data prefix $\bm{y}_1^t = (y_1, \dots, y_t)$. The unknown mixture weights $\bm{\lambda} = (\lambda_1, \dots, \lambda_m)$ are assumed to be drawn from a Dirichlet prior distribution:
\begin{equation*}
p(\bm{\lambda} | \bm{\alpha}) = \text{Dirichlet}(\bm{\lambda} | \bm{\alpha}) = \frac{\Gamma(\sum_{g=1}^m \alpha_g)}{\prod_{g=1}^m \Gamma(\alpha_g)} \prod_{g=1}^m \lambda_g^{\alpha_g - 1}.
\label{eq:app_dirichlet_prior}
\end{equation*}
The likelihood of the observed data given the parameters $\bm{\lambda}$ is defined by the MTD model:
\begin{equation*}
p(\bm{y}_1^t \mid \bm{\lambda}) = \prod_{k=m+1}^t p(y_k \mid \bm{y}_1^{k-1}, \bm{\lambda}) = \prod_{k=m+1}^t \left( \sum_{h=1}^m \lambda_h \pi(y_{k-h}, y_k) \right).
\label{eq:app_likelihood}
\end{equation*}
Combining the likelihood and prior via Bayes' theorem yields the posterior distribution over the mixture weights:
\begin{equation*}
p(\bm{\lambda} \mid \bm{y}_1^t, \bm{\alpha}) \propto p(\bm{y}_1^t \mid \bm{\lambda}) \cdot p(\bm{\lambda} \mid \bm{\alpha}).
\label{eq:app_posterior}
\end{equation*}
The Bayesian predictive distribution is formulated by marginalizing the single-step prediction $p(Y_{t+1}=j \mid \bm{y}_1^t, \bm{\lambda})$ over this posterior distribution of $\bm{\lambda}$:
\begin{align*}
p(Y_{t+1}=j \mid \bm{y}_1^t, \bm{\alpha}) &= \int_{\Delta_{m-1}} p(Y_{t+1}=j \mid \bm{y}_1^t, \bm{\lambda}) \cdot p(\bm{\lambda} \mid \bm{y}_1^t, \bm{\alpha}) \, d\bm{\lambda},
\label{eq:app_bayes_predictive_integral}
\end{align*}
where $\Delta_{m-1}$ is the probability simplex. The single-step prediction is simply the MTD model definition:
\begin{equation*}
p(Y_{t+1}=j \mid \bm{y}_1^t, \bm{\lambda}) = \sum_{g=1}^m \lambda_g \pi(y_{t+1-g}, j).
\end{equation*}
Substituting this into the integral gives:
\begin{equation*}
p(Y_{t+1}=j \mid \bm{y}_1^t, \bm{\alpha}) = \int_{\Delta_{m-1}} \left( \sum_{g=1}^m \lambda_g \pi(y_{t+1-g}, j) \right) p(\bm{\lambda} \mid \bm{y}_1^t, \bm{\alpha}) \, d\bm{\lambda}.
\end{equation*}
By the linearity of expectation (and integration), the integral and the finite sum can be interchanged:
\begin{align*}
p(Y_{t+1}=j \mid \bm{y}_1^t, \bm{\alpha}) &= \sum_{g=1}^m \pi(y_{t+1-g}, j) \left( \int_{\Delta_{m-1}} \lambda_g \cdot p(\bm{\lambda} \mid \bm{y}_1^t, \bm{\alpha}) \, d\bm{\lambda} \right).
\end{align*}
We recognize the term in the parentheses as the definition of the posterior mean of the parameter $\lambda_g$:
\begin{equation*}
    \hat{\lambda}^{\text{Bayes}}_g \coloneqq \mathbb{E}[\lambda_g \mid \bm{y}_1^t, \bm{\alpha}] = \int_{\Delta_{m-1}} \lambda_g \cdot p(\bm{\lambda} \mid \bm{y}_1^t, \bm{\alpha}) \, d\bm{\lambda}.
\end{equation*}
This substitution yields the final form of the Bayes-optimal predictor, completing the proof:
\begin{equation*}
    p(Y_{t+1}=j \mid \bm{y}_1^t, \bm{\alpha}) = \sum_{g=1}^m\hat{\lambda}^{\text{Bayes}}_g \cdot \pi(y_{t+1-g}, j).
\end{equation*}

\subsection{The Structure of the Bayes-Optimal Estimator}
While the posterior mean is intractable to compute, its structure can be derived exactly. In particular it can be shown that the MTD posterior is a finite mixture of Dirichlet distributions, and from this, we can derive that the mean of the posterior preserves the classic 'add-constant' structure of conjugate Bayesian models, where the unobserved data counts are replaced by their posterior expectation.
\begin{prop}[Posterior as a Mixture of Dirichlets]
\label{prop:posterior_mixture}
Given the MTD likelihood \(L(\bm{\lambda}) = p(\bm{y}_1^t \mid \bm{\lambda})\) and a Dirichlet prior \(p(\bm{\lambda} \mid \bm{\alpha}) = \text{Dir}(\bm{\lambda} \mid \bm{\alpha})\), the posterior distribution is a finite mixture of Dirichlet distributions:
\begin{equation}
p(\bm{\lambda} \mid \bm{y}_1^t, \bm{\alpha}) = \sum_{z \in \{1,\dots,m\}^{t-m}} \pi(z) \cdot \text{Dir}(\bm{\lambda} \mid \bm{\alpha} + k(z)),
\end{equation}
where \(z = (z_{m+1}, \dots, z_t)\) is a latent assignment path, \(k(z)\) is the vector of counts of each lag in path \(\bm{z}\), and \(\pi(z) = p(Z=z \mid \bm{y}_1^t, \bm{\alpha})\) are the true posterior probabilities of the latent paths.
\end{prop}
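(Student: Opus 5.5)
The plan is to expand the MTD likelihood as a sum over latent lag paths and then apply Dirichlet--multinomial conjugacy path by path. First I write each factor of the likelihood as a marginal over the latent switch $Z_k$:
\begin{equation*}
\sum_{h=1}^m \lambda_h \pi(y_{k-h}, y_k) = \sum_{z_k=1}^m p(Z_k=z_k\mid\bm{\lambda})\, p(y_k \mid y_{k-z_k}, Z_k=z_k),
\end{equation*}
with $p(Z_k=z_k\mid\bm{\lambda})=\lambda_{z_k}$ and $p(y_k\mid y_{k-z_k},Z_k=z_k)=\pi(y_{k-z_k},y_k)$. Distributing the product over $k=m+1,\dots,t$ turns $L(\bm{\lambda})$ into a sum over all $z\in\{1,\dots,m\}^{t-m}$:
\begin{equation*}
L(\bm{\lambda}) = \sum_{z} w(z)\prod_{g=1}^m \lambda_g^{k_g(z)}, \qquad w(z) := \prod_{k=m+1}^t \pi(y_{k-z_k}, y_k),
\end{equation*}
where $k_g(z)=\#\{k: z_k=g\}$. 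The key point is that $w(z)$ does not depend on $\bm{\lambda}$.

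Next I multiply by the prior $\text{Dir}(\bm{\lambda}\mid\bm{\alpha})\propto\prod_g\lambda_g^{\alpha_g-1}$. Each summand becomes $w(z)\prod_g\lambda_g^{\alpha_g+k_g(z)-1}$, and I rewrite it as $w(z)\,\frac{B(\bm{\alpha}+k(z))}{B(\bm{\alpha})}\,\text{Dir}(\bm{\lambda}\mid\bm{\alpha}+k(z))$, where $B$ is the multivariate Beta function. Normalizing gives the posterior as a mixture of Dirichlets with weights
\begin{equation*}
c(z)=\frac{w(z)\,B(\bm{\alpha}+k(z))}{\sum_{z'} w(z')\,B(\bm{\alpha}+k(z'))}.
\end{equation*}
The constant $B(\bm{\alpha})$ cancels in this ratio. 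The weights are nonnegative and sum to one.

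The remaining step, and the only one needing care, is to identify $c(z)$ with the true path posterior $p(Z=z\mid\bm{y}_1^t,\bm{\alpha})$. I compute the joint $p(\bm{y}_1^t, Z=z\mid\bm{\alpha})$ by integrating out $\bm{\lambda}$. Conditional on $\bm{\lambda}$, the $Z_k$ are i.i.d.\ categorical with parameter $\bm{\lambda}$, and the emissions factor as $w(z)$; the first $m$ tokens are treated as fixed conditioning, as in the likelihood of Prop.~\ref{prop:bayes_predictor}. Hence
\begin{equation*}
p(\bm{y}_1^t, z\mid\bm{\alpha})=w(z)\int\prod_g\lambda_g^{k_g(z)}\,\text{Dir}(\bm{\lambda}\mid\bm{\alpha})\,d\bm{\lambda}=w(z)\,\frac{B(\bm{\alpha}+k(z))}{B(\bm{\alpha})}.
\end{equation*}
Dividing by the sum over $z$ gives exactly $c(z)$, which establishes the claim with $\pi(z)=c(z)$.

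The main obstacle is only bookkeeping: making sure the factorization of the joint over $(Z,\bm{y})$ is consistent with the likelihood used in the paper, i.e.\ that it is conditional on $y_1,\dots,y_m$. Once that is fixed, everything follows from conjugacy applied term by term.
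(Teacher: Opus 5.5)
Your proposal is correct and follows essentially the same route as the paper: you expand the likelihood over latent lag paths into $\sum_z w(z)\prod_g \lambda_g^{k_g(z)}$, multiply by the Dirichlet prior, and recognize each term as a multiple of $B(\bm{\alpha}+k(z))\,\text{Dir}(\bm{\lambda}\mid\bm{\alpha}+k(z))$. The one place you go further is the identification of the normalized weights with $p(Z=z\mid\bm{y}_1^t,\bm{\alpha})$: the paper only asserts it, while you verify it by integrating $\bm{\lambda}$ out of the joint to obtain $w(z)\,B(\bm{\alpha}+k(z))/B(\bm{\alpha})$.
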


\begin{proof}
We begin with Bayes' theorem for the posterior distribution:
\begin{equation*}
p(\bm{\lambda} \mid \bm{y}_1^t, \bm{\alpha}) \propto p(\bm{y}_1^t \mid \bm{\lambda}) \cdot p(\bm{\lambda} \mid \bm{\alpha}).
\end{equation*}
The central idea is to express the observed-data likelihood, \(p(\bm{y}_1^t \mid \bm{\lambda})\), by marginalizing over all possible latent assignment paths \(\bm{z}\). A path \(\bm{z} = (z_{m+1}, \dots, z_t)\) specifies which lag was used at each step \(k\).
\begin{equation*}
p(\bm{y}_1^t \mid \bm{\lambda}) = \sum_{\bm{z} \in \{1,\dots,m\}^{t-m}} p(\bm{y}_1^t, \bm{z} \mid \bm{\lambda}).
\end{equation*}
The joint probability of the data and a specific path \(\bm{z}\), known as the complete-data likelihood, is given by:
\begin{align*}
p(\bm{y}_1^t, z \mid \bm{\lambda}) &= \prod_{k=m+1}^t p(y_k, z_k \mid \bm{y}_1^{k-1}, \bm{\lambda}) \\
&= \prod_{k=m+1}^t p(z_k \mid \bm{\lambda}) \cdot p(y_k \mid \bm{y}_1^{k-1}, z_k, \bm{\lambda}) \\
&= \prod_{k=m+1}^t \lambda_{z_k} \cdot \pi(y_{k-z_k}, y_k).
\end{align*}
We can group the terms that depend on \(\bm{\lambda}\) and those that do not. Let \(k_g(z) = \sum_{k=m+1}^t \mathbb{I}(z_k = g)\) be the number of times lag \(g\) is used in path \(z\). Then:
\begin{equation*}
p(\bm{y}_1^t, z \mid \bm{\lambda}) = \left( \prod_{g=1}^m \lambda_g^{k_g(z)} \right) \left( \prod_{k=m+1}^t \pi(y_{k-z_k}, y_k) \right).
\end{equation*}
Let \(P(\bm{y} \mid z) \coloneqq \prod_{k=m+1}^t \pi(y_{k-z_k}, y_k)\), which is constant with respect to \(\bm{\lambda}\).
The prior is given by \(p(\bm{\lambda} \mid \bm{\alpha}) = \frac{1}{B(\bm{\alpha})} \prod_{g=1}^m \lambda_g^{\alpha_g-1}\), where \(B(\bm{\alpha})\) is the multivariate beta function.
Substituting these into the expression for the posterior:
\begin{align*}
p(\bm{\lambda} \mid \bm{y}_1^t, \bm{\alpha}) &\propto \left( \sum_z P(\bm{y} \mid z) \prod_{g=1}^m \lambda_g^{k_g(z)} \right) \left( \frac{1}{B(\bm{\alpha})} \prod_{g=1}^m \lambda_g^{\alpha_g-1} \right) \\
&\propto \sum_z P(\bm{y} \mid z) \prod_{g=1}^m \lambda_g^{\alpha_g + k_g(z) - 1}.
\end{align*}
We recognize that the term \(\prod_g \lambda_g^{(\alpha_g + k_g(z)) - 1}\) is the kernel of a Dirichlet distribution, \(\text{Dir}(\bm{\lambda} \mid \bm{\alpha} + k(z))\). We can write it as \(B(\bm{\alpha} + k(z)) \cdot \text{Dir}(\bm{\lambda} \mid \bm{\alpha} + k(z))\).
Thus, the posterior takes the form of a weighted sum of Dirichlet densities:
\begin{equation*}
p(\bm{\lambda} \mid \bm{y}_1^t, \bm{\alpha}) = \sum_z \pi(z) \cdot \text{Dir}(\bm{\lambda} \mid \bm{\alpha} + k(z)),
\end{equation*}
where the mixture weights \(\pi(z)\) are the normalized coefficients, which are precisely the true posterior probabilities of the latent paths, \(p(Z=z \mid \bm{y}_1^t, \bm{\alpha})\). This completes the proof.
\end{proof}

From this mixture structure, we can derive an exact identity for the posterior mean.
\begin{prop}[Bayes Mean as Add-Constant-to-Expected-Counts]
\label{prop:bayes_mean_structure}
The components of the Bayesian posterior mean, \(\hat{\bm{\lambda}}^{\text{Bayes}} = \mathbb{E}[\bm{\lambda} \mid \bm{y}_1^t, \bm{\alpha}]\), are given by:
\begin{equation}
\hat{\lambda}^{\text{Bayes}}_g = \frac{\alpha_g + \mathbb{E}_{Z \sim \pi}[k_g(Z)]}{\alpha_0 + (t-m)},
\end{equation}
where \(\alpha_0 = \sum_j \alpha_j\), and \(\mathbb{E}_{Z \sim \pi}[k_g(Z)]\) is the posterior expected number of times lag \(g\) was used, which is computationally intractable.
\end{prop}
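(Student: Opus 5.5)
We will derive the identity directly from Proposition~\ref{prop:posterior_mixture}, using linearity of the posterior mean over the mixture components together with the closed-form mean of a Dirichlet distribution.

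First, write
\[
\hat{\lambda}^{\text{Bayes}}_g=\int_{\Delta_{m-1}}\lambda_g\,p(\bm{\lambda}\mid\bm{y}_1^t,\bm{\alpha})\,d\bm{\lambda}.
\]
Substitute the mixture representation $p(\bm{\lambda}\mid\bm{y}_1^t,\bm{\alpha})=\sum_z \pi(z)\,\mathrm{Dir}(\bm{\lambda}\mid\bm{\alpha}+k(z))$. The sum over $z\in\{1,\dots,m\}^{t-m}$ is finite, so the sum and the integral may be interchanged. This gives
\[
\hat{\lambda}^{\text{Bayes}}_g=\sum_z\pi(z)\,\mathbb{E}_{\mathrm{Dir}(\bm{\alpha}+k(z))}[\lambda_g].
\]

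Second, recall that the mean of $\mathrm{Dir}(\bm{\beta})$ is $\beta_g/\sum_h\beta_h$. For each path $z$ the counts satisfy $\sum_h k_h(z)=t-m$, because every step $k\in\{m+1,\dots,t\}$ is assigned exactly one lag. Hence each component mean is
\[
\frac{\alpha_g+k_g(z)}{\alpha_0+(t-m)}.
\]
The key point is that the denominator is the same for every path $z$.

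Third, since the denominator is independent of $z$, it factors out of the sum. Using $\sum_z\pi(z)=1$, we obtain
\[
\hat{\lambda}^{\text{Bayes}}_g=\frac{\alpha_g+\sum_z\pi(z)k_g(z)}{\alpha_0+(t-m)},
\]
and the remaining sum is by definition $\mathbb{E}_{Z\sim\pi}[k_g(Z)]$. By linearity, this expectation also equals $\sum_{k=m+1}^t p(Z_k=g\mid\bm{y}_1^t,\bm{\alpha})$, which makes clear that it is a sum of posterior marginals. Its intractability follows because these marginals couple all steps through the shared $\bm{\lambda}$, and computing them exactly requires summing over $m^{t-m}$ paths.

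The only real obstacle is checking that the mixture weights $\pi(z)$ in Proposition~\ref{prop:posterior_mixture} are correctly normalized and equal the posterior path probabilities. This follows from the proof of that proposition. The coefficient of path $z$ is proportional to $P(\bm{y}\mid z)\,B(\bm{\alpha}+k(z))/B(\bm{\alpha})$, which is exactly $p(\bm{y}_1^t, z\mid\bm{\alpha})$ with $\bm{\lambda}$ integrated out. The posterior itself integrates to one and each Dirichlet density integrates to one, so the coefficients must sum to one. Everything else is routine.
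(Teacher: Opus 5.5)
Your proof is correct and matches the paper's proof. Both substitute the mixture-of-Dirichlets posterior, interchange the finite sum and the integral, use the Dirichlet mean with the path-independent normalizer $\alpha_0+(t-m)$, and recognize $\sum_z \pi(z)k_g(z)$ as $\mathbb{E}_{Z\sim\pi}[k_g(Z)]$. Your check that $\pi(z)$ is the normalized version of $p(\bm{y}_1^t,z\mid\bm{\alpha})=P(\bm{y}\mid z)\,B(\bm{\alpha}+k(z))/B(\bm{\alpha})$ makes explicit a step the paper leaves implicit.

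One caveat concerns the intractability remark, which is informal in both your proposal and the paper. Exact computation does not require enumerating all $m^{t-m}$ paths. Each path weight is a product of per-step factors times a function of the count vector $k(z)$, so a dynamic program over partial count vectors computes $\mathbb{E}_{Z\sim\pi}[k_g(Z)]$ in roughly $m\,t^{m}$ operations. That is polynomial in $t$ for fixed $m$.
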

\begin{proof}
The posterior mean is defined by the integral of \(\bm{\lambda}\) over its posterior distribution:
\begin{equation*}
\hat{\bm{\lambda}}^{\text{Bayes}} = \int_{\Delta_{m-1}} \bm{\lambda} \cdot p(\bm{\lambda} \mid \bm{y}_1^t, \bm{\alpha}) \, d\bm{\lambda}.
\end{equation*}
We substitute the mixture-of-Dirichlets form of the posterior from \Cref{prop:posterior_mixture}:
\begin{equation*}
\hat{\bm{\lambda}}^{\text{Bayes}} = \int_{\Delta_{m-1}} \bm{\lambda} \cdot \left( \sum_{z} \pi(z) \cdot \text{Dir}(\bm{\lambda} \mid \bm{\alpha} + k(z)) \right) \, d\bm{\lambda}.
\end{equation*}
Since the summation is over a finite set of paths \(z\), we can swap the integral and the summation:
\begin{equation*}
\hat{\bm{\lambda}}^{\text{Bayes}} = \sum_z \pi(z) \left( \int_{\Delta_{m-1}} \bm{\lambda} \cdot \text{Dir}(\bm{\lambda} \mid \bm{\alpha} + k(z)) \, d\bm{\lambda} \right).
\end{equation*}
The term inside the parentheses is the definition of the mean of a Dirichlet distribution with parameter vector \(\bm{\beta} = \bm{\alpha} + k(z)\). The mean of a \(\text{Dir}(\bm{\beta})\) distribution is the vector \(\bm{\beta} / \beta_0\), where \(\beta_0 = \sum_j \beta_j\). In our case, the sum of the parameters is:
\begin{equation*}
\sum_{g=1}^m (\alpha_g + k_g(z)) = \left(\sum_g \alpha_g\right) + \left(\sum_g k_g(z)\right) = \alpha_0 + (t-m).
\end{equation*}
Therefore, the inner integral evaluates to the vector \(\frac{\bm{\alpha} + k(z)}{\alpha_0 + (t-m)}\). Substituting this back:
\begin{equation*}
\hat{\bm{\lambda}}^{\text{Bayes}} = \sum_z \pi(z) \frac{\bm{\alpha} + k(z)}{\alpha_0 + (t-m)}.
\end{equation*}
This expression is an expectation over the posterior distribution of latent paths, \(Z \sim \pi(z)\). We can write it as:
\begin{equation*}
\hat{\bm{\lambda}}^{\text{Bayes}} = \mathbb{E}_{Z \sim \pi} \left[ \frac{\bm{\alpha} + k(Z)}{\alpha_0 + (t-m)} \right].
\end{equation*}
By the linearity of expectation, we can take the expectation inside for each component \(g\):
\begin{equation*}
\hat{\lambda}^{\text{Bayes}}_g = \frac{\mathbb{E}_{Z \sim \pi}[\alpha_g + k_g(Z)]}{\alpha_0 + (t-m)} = \frac{\alpha_g + \mathbb{E}_{Z \sim \pi}[k_g(Z)]}{\alpha_0 + (t-m)}.
\end{equation*}
This reveals the "add-constant-to-expected-counts" structure of the estimator, completing the proof.
\end{proof}

\section{Approximations for the mean of the posterior distribution}
In this section, we outline the methods used to approximate the mean of the posterior distribution over the mixture weights $\bm{\lambda}$.

\subsection{ Markov Chain Monte Carlo (MCMC)}
\label{ssec:mcmc_approx}

Since an analytical solution is unavailable, we approximate the Bayesian predictive distribution using samples from the posterior distribution generated by a Markov Chain Monte Carlo (MCMC) method, specifically \textbf{Gibbs sampling}. Gibbs sampling is well-suited for this problem because, while the full posterior is intractable, the conditional posteriors of the parameters and latent variables are simple to sample from.

The procedure involves augmenting the model with latent variables $\bm{Z}_1^t = (Z_{m+1}, \dots, Z_t)$, where $Z_k=g$ indicates that lag $g$ was used to generate the transition to $Y_k$. The Gibbs sampler iteratively draws from the two full conditional distributions:

\begin{enumerate}
    \item \textbf{Sample Latent Variables $\bm{Z}$ given Parameters $\bm{\lambda}$:}
    For a given parameter vector $\bm{\lambda}^{(k-1)}$ from the previous iteration, we sample each latent variable $Z_s$ for $s \in \{m+1, \dots, t\}$ from its categorical conditional posterior:
    \begin{equation}
    \mathbb{P}(Z_s=g \mid \bm{y}_1^t, \bm{\lambda}^{(k-1)}) = \frac{\lambda_g^{(k-1)} \pi(y_{s-g}, y_s)}{\sum_{h=1}^{m} \lambda_h^{(k-1)} \pi(y_{s-h}, y_s)}.
    \end{equation}
    This provides a complete sampled sequence of lags, $\bm{z}^{(k)} = (z_{m+1}^{(k)}, \dots, z_t^{(k)})$.
    
    \item \textbf{Sample Parameters $\bm{\lambda}$ given Latent Variables $\bm{Z}$:}
    Given the sampled lags $\bm{z}^{(k)}$, the Dirichlet prior is now conjugate to the complete-data likelihood. We first count the occurrences of each lag, $n_g = \sum_{s=m+1}^t \mathbb{I}(z_s^{(k)}=g)$. The full conditional posterior for $\bm{\lambda}$ is a Dirichlet distribution, from which we draw the next sample $\bm{\lambda}^{(k)}$:
    \begin{equation}
    p(\bm{\lambda} \mid \bm{y}_1^t, \bm{z}^{(k)}, \bm{\alpha}) = \text{Dirichlet}(\bm{\lambda} \mid \alpha_1+n_1, \dots, \alpha_m+n_m).
    \end{equation}
\end{enumerate}

After running the sampler for $K_{\text{total}}$ iterations and discarding an initial burn-in period, we obtain a set of $K$ samples, $\{\bm{\lambda}^{(1)}, \bm{\lambda}^{(2)}, \dots, \bm{\lambda}^{(K)}\}$, that are approximately drawn from the true posterior $p(\bm{\lambda} \mid \bm{y}_1^t, \bm{\alpha})$ shown in Equation \ref{eq:bayes_optimal_predictor_main}.

The integral in Equation \ref{eq:bayes_optimal_predictor_main} is then approximated via a Monte Carlo average:
\begin{align}
\hat{p}(Y_{t+1}=j \mid \bm{y}_1^t) &= \frac{1}{K} \sum_{k=1}^K p(Y_{t+1}=j \mid \bm{y}_1^t, \bm{\lambda}^{(k)}) \nonumber \\
&= \frac{1}{K} \sum_{k=1}^K \left( \sum_{g=1}^m \lambda_g^{(k)} \pi(y_{t+1-g}, j) \right).
\label{eq:mcmc_pred_approx}
\end{align}
This estimate converges to the true Bayes optimal predictive distribution as $K \to \infty$. It represents the theoretical performance limit for inference under the MTD model assumptions, providing a gold-standard benchmark against which other estimators can be compared.
\section{Algorithms for Maximum Likelihood Estimation of MTD}
\label{app:em_algorithm}
Maximum Likelihood Estimation (MLE) for the Mixture Transition Distribution (MTD) does not admit a closed-form solution, therefore iterative optimization algorithms are required. This section details two iterative algorithms suited for this task. We first review the Expectation-Maximization (EM) algorithm, a standard and widely-used method for latent variable models. We then present Mirror Descent (MD), an alternative optimization framework that is central to our work.

\subsection{Expectation-Maximization (EM) Algorithm}
The Expectation-Maximization (EM) algorithm is a widely-used iterative method for finding Maximum Likelihood estimates in statistical models with latent variables. In the context of the MTD model, the latent variables correspond to the specific mixture component responsible for generating each observation. The algorithm alternates between two steps: the Expectation (E) step, where it computes the expected log-likelihood with respect to the posterior distribution of the latent variables, and the Maximization (M) step, where it updates the model parameters to maximize this expected value.

Given the latent variables $\bm{Z} = (Z_{m+1}, \dots, Z_T)$, where each $Z_t \in \{1, \dots, m\}$. The variable $Z_t = g$ indicates that the $g^{th}$ mixture component (corresponding to lag $g$) was responsible for generating the transition to $Y_t$ at time $t$. The complete data are $(\bm{y}, \bm{z})$.

The likelihood of the complete data $(\bm{y}_{m+1}^T, \bm{z}_{m+1}^T)$, conditional on $\bm{y}_1^m$, is given by:
\begin{align*}
\mathbb{P}(\bm{y}_{m+1}^T, \bm{z}_{m+1}^T \mid \bm{y}_1^m; \bm{\lambda}) &= \prod_{t=m+1}^T \mathbb{P}(y_t, z_t \mid \bm{y}_1^{t-1}; \bm{\lambda}) \\
&= \prod_{t=m+1}^T \mathbb{P}(Z_t=z_t \mid \bm{y}_1^{t-1}; \bm{\lambda}) \mathbb{P}(y_t \mid Z_t=z_t, \bm{y}_1^{t-1}; \bm{\lambda}).
\end{align*}
Under the MTD model assumptions:
\begin{itemize}
    \item $\mathbb{P}(Z_t=g \mid \bm{y}_1^{t-1}; \bm{\lambda}) = \lambda_g$
    \item $\mathbb{P}(y_t \mid Z_t=g, \bm{y}_1^{t-1}; \bm{\lambda}) = \pi(y_{t-g}, y_t)$
\end{itemize}
Thus, $\mathbb{P}(y_t, Z_t=g \mid \bm{y}_1^{t-1}; \bm{\lambda}) = \lambda_g \pi(y_{t-g}, y_t)$. The complete data likelihood becomes:
\[
\mathbb{P}(\bm{y}_{m+1}^T, \bm{z}_{m+1}^T \mid \bm{y}_1^m; \bm{\lambda}) = \prod_{t=m+1}^T \lambda_{z_t} \pi(y_{t-z_t}, y_t).
\]
The complete data log-likelihood, $\ell_c(\bm{\lambda}; \bm{y}, \bm{z}) = \log \mathbb{P}(\bm{y}_{m+1}^T, \bm{z}_{m+1}^T \mid \bm{y}_1^m; \bm{\lambda})$, is:
\begin{align}
\ell_c(\bm{\lambda}; \bm{y}, \bm{z}) &= \sum_{t=m+1}^T \log (\lambda_{z_t} \pi_{z_t}(y_{t-z_t}, y_t)) \nonumber \\
&= \sum_{t=m+1}^T \sum_{g=1}^m \mathbb{I}(z_t=g) \log (\lambda_g \pi(y_{t-g}, y_t)),
\label{eq:complete_loglik}
\end{align}
where $\mathbb{I}(\cdot)$ is the indicator function.

\subsection{EM Algorithm Steps}

Let $\bm{\lambda}^{(k)}$ be the estimate of $\bm{\lambda}$ at iteration $k$.

\paragraph{E-Step (Expectation)}
The E-step computes the expectation of the complete-data log-likelihood \eqref{eq:complete_loglik} with respect to the conditional distribution of the latent variables $\bm{Z}$ given the observed data $\bm{y}$ and the current parameter estimate $\bm{\lambda}^{(k)}$. This expectation defines the $Q$ function:
\begin{align*}
Q(\bm{\lambda} \mid \bm{\lambda}^{(k)}) &= \mathbb{E}_{\bm{Z} \mid \bm{y}, \bm{\lambda}^{(k)}} [\ell_c(\bm{\lambda}; \bm{y}, \bm{Z})] \\
&= \mathbb{E} \left[ \sum_{t=m+1}^T \sum_{g=1}^m \mathbb{I}(Z_t=g) \log (\lambda_g \pi(y_{t-g}, y_t)) \;\middle|\; \bm{y}, \bm{\lambda}^{(k)} \right] \\
&= \sum_{t=m+1}^T \sum_{g=1}^m \mathbb{E}[\mathbb{I}(Z_t=g) \mid \bm{y}, \bm{\lambda}^{(k)}] \log (\lambda_g \pi(y_{t-g}, y_t)).
\end{align*}
The core computation is the posterior probability (responsibility) of $Z_t=g$:
\[
\gamma_t^{(k)}(g) := \mathbb{E}[\mathbb{I}(Z_t=g) \mid \bm{y}, \bm{\lambda}^{(k)}] = \mathbb{P}(Z_t=g \mid \bm{y}, \bm{\lambda}^{(k)}).
\]
Due to the MTD model structure, future observations $\bm{y}_{t+1}^T$ are conditionally independent of $Z_t$ given $\bm{y}_1^t$. Thus, the posterior probability simplifies:
\[
\mathbb{P}(Z_t=g \mid \bm{y}, \bm{\lambda}^{(k)}) = \mathbb{P}(Z_t=g \mid \bm{y}_1^t, \bm{\lambda}^{(k)}).
\]
Using Bayes' theorem:
\begin{align}
\gamma_t^{(k)}(g) &= \mathbb{P}(Z_t=g \mid \bm{y}_1^t, \bm{\lambda}^{(k)}) \nonumber \\
&= \frac{\mathbb{P}(y_t \mid Z_t=g, \bm{y}_1^{t-1}, \bm{\lambda}^{(k)}) \mathbb{P}(Z_t=g \mid \bm{y}_1^{t-1}, \bm{\lambda}^{(k)})}{\mathbb{P}(y_t \mid \bm{y}_1^{t-1}, \bm{\lambda}^{(k)})} \nonumber \\
&= \frac{\pi(y_{t-g}, y_t) \lambda_g^{(k)}}{\sum_{h=1}^{m} \mathbb{P}(y_t, Z_t=h \mid \bm{y}_1^{t-1}, \bm{\lambda}^{(k)})} \nonumber \\
&= \frac{\lambda_g^{(k)} \pi(y_{t-g}, y_t)}{\sum_{h=1}^{m} \lambda_h^{(k)} \pi(y_{t-h}, y_t)}.
\label{eq:gamma_calc}
\end{align}
The E-step involves calculating these responsibilities $\gamma_t^{(k)}(g)$ for all $t \in \{m+1, \dots, T\}$ and $g \in \{1, \dots, m\}$. The $Q$ function is then:
\begin{equation}
Q(\bm{\lambda} \mid \bm{\lambda}^{(k)}) = \sum_{t=m+1}^T \sum_{g=1}^m \gamma_t^{(k)}(g) (\log \lambda_g + \log \pi(y_{t-g}, y_t)).
\label{eq:q_function}
\end{equation}

\paragraph{M-Step (Maximization)}
The M-step finds the parameter values $\bm{\lambda}$ that maximize the $Q$ function \eqref{eq:q_function} subject to the constraints $\lambda_g \ge 0$ and $\sum_{g=1}^m \lambda_g = 1$. This gives the updated estimate $\bm{\lambda}^{(k+1)}$.
\[
\bm{\lambda}^{(k+1)} = \underset{\bm{\lambda}}{\arg\max} \, Q(\bm{\lambda} \mid \bm{\lambda}^{(k)}).
\]
Since the terms $\log \pi(y_{t-g}, y_t)$ do not depend on $\bm{\lambda}$, we maximize:
\[
f(\bm{\lambda}) = \sum_{t=m+1}^T \sum_{g=1}^m \gamma_t^{(k)}(g) \log \lambda_g = \sum_{g=1}^m \left( \sum_{t=m+1}^T \gamma_t^{(k)}(g) \right) \log \lambda_g.
\]
Let $C_g = \sum_{t=m+1}^T \gamma_t^{(k)}(g)$. We maximize $f(\bm{\lambda}) = \sum_{g=1}^m C_g \log \lambda_g$ subject to $\sum_{g=1}^m \lambda_g = 1$. We use a Lagrange multiplier $\mu$:
\[
\mathcal{L}(\bm{\lambda}, \mu) = \sum_{g=1}^m C_g \log \lambda_g - \mu \left( \sum_{g=1}^m \lambda_g - 1 \right).
\]
Setting partial derivatives to zero:
\begin{align*}
\frac{\partial \mathcal{L}}{\partial \lambda_g} &= \frac{C_g}{\lambda_g} - \mu = 0 \implies \lambda_g = \frac{C_g}{\mu} \\
\frac{\partial \mathcal{L}}{\partial \mu} &= -\left( \sum_{g=1}^m \lambda_g - 1 \right) = 0 \implies \sum_{g=1}^m \lambda_g = 1.
\end{align*}
Substituting $\lambda_g = C_g / \mu$ into the constraint yields $\mu = \sum_{h=1}^m C_h$. Therefore:
\[
\lambda_g = \frac{C_g}{\sum_{h=1}^m C_h} = \frac{\sum_{t=m+1}^T \gamma_t^{(k)}(g)}{\sum_{h=1}^m \sum_{t'=m+1}^T \gamma_{t'}^{(k)}(h)}.
\]
The denominator simplifies as $\sum_{h=1}^m \sum_{t'=m+1}^T \gamma_{t'}^{(k)}(h) = \sum_{t'=m+1}^T \sum_{h=1}^m \gamma_{t'}^{(k)}(h) = \sum_{t'=m+1}^T 1 = T-m$.
The M-step update rule is thus:
\begin{equation}
\lambda_g^{(k+1)} = \frac{\sum_{t=m+1}^T \gamma_t^{(k)}(g)}{T-m}.
\label{eq:lambda_update}
\end{equation}

\subsection{Summary of the EM Algorithm}

The EM algorithm for estimating the mixture weights $\bm{\lambda}$ in the MTD model, assuming known transition matrices $\bm{\pi}_g$, proceeds as follows:

\begin{enumerate}
    \item \textbf{Initialization:} Choose initial weights $\bm{\lambda}^{(0)} = (\lambda_1^{(0)}, \dots, \lambda_m^{(0)})$ such that $\lambda_g^{(0)} \ge 0$ for all $g \in \{1, \dots, m\}$ and $\sum_{g=1}^m \lambda_g^{(0)} = 1$. Set the iteration counter $k=0$.

    \item \textbf{E-Step (Expectation):} Compute the responsibilities $\gamma_t^{(k)}(g)$ for each time point $t \in \{m+1, \dots, T\}$ and each mixture component $g \in \{1, \dots, m\}$, using the current parameter estimates $\bm{\lambda}^{(k)}$:
    \begin{equation}
    \gamma_t^{(k)}(g) = \frac{\lambda_g^{(k)} \pi(y_{t-g}, y_t)}{\sum_{h=1}^{m} \lambda_h^{(k)} \pi(y_{t-h}, y_t)}.
    \end{equation}

    \item \textbf{M-Step (Maximization):} Update the mixture weights $\lambda_g^{(k+1)}$ for each component $g \in \{1, \dots, m\}$ using the computed responsibilities:
    \begin{equation}
    \lambda_g^{(k+1)} = \frac{\sum_{t=m+1}^T \gamma_t^{(k)}(g)}{T-m}.
    \end{equation}

    \item \textbf{Convergence Check:} If the change in the parameter estimates (e.g., $||\bm{\lambda}^{(k+1)} - \bm{\lambda}^{(k)}||$) or the change in the observed data log-likelihood (e.g., $\ell(\bm{\lambda}^{(k+1)}; \bm{y}) - \ell(\bm{\lambda}^{(k)}; \bm{y})$, where $\ell(\bm{\lambda}; \bm{y})$ is the observed data log-likelihood) is below a predefined tolerance $\epsilon$, stop the algorithm and return $\hat{\bm{\lambda}} = \bm{\lambda}^{(k+1)}$ as the estimated mixture weights. Otherwise, set $k \leftarrow k+1$ and repeat from Step 2.
\end{enumerate}

\subsection{Mirror Descent}
\label{app:md_derivation}

This appendix provides detailed derivations for \Cref{eq:eg_update} and \Cref{prop:one_step_md}.

\subsection{Derivation of the Exponentiated Gradient Algorithm}
\label{app:md_to_eg}
The Exponentiated Gradient (EG) algorithm is a specific instance of the Mirror Descent (MD) online optimization algorithm. We apply it to the problem of maximizing the MTD log-likelihood function, $\ell(\bm{\lambda})$, with respect to the mixture weights $\bm{\lambda} = (\lambda_1, \dots, \lambda_m)$. The weights are constrained to the probability simplex, $\Delta_{m-1} = \{ \bm{\lambda} \in \mathbb{R}^m \mid \sum_g \lambda_g = 1, \lambda_g \ge 0 \}$.

The general MD update step at iteration $k$ linearizes the objective function around the current estimate $\bm{\lambda}^{(k)}$ and adds a regularization term. The next iterate, $\bm{\lambda}^{(k+1)}$, is found by solving:
\begin{equation}
\bm{\lambda}^{(k+1)} = \underset{\bm{\lambda} \in \Delta_{m-1}}{\arg\max} \left\{ \langle \nabla \ell(\bm{\lambda}^{(k)}), \bm{\lambda} \rangle - \frac{1}{\eta} D_\Psi(\bm{\lambda}, \bm{\lambda}^{(k)}) \right\},
\label{eq:app_md_general_update}
\end{equation}
where $\eta > 0$ is the learning rate, $\nabla \ell(\bm{\lambda}^{(k)})$ is the gradient of the log-likelihood evaluated at $\bm{\lambda}^{(k)}$, and $D_\Psi$ is the Bregman divergence associated with a potential function $\Psi$. For optimization over the simplex, the standard choice is the negative entropy potential, $\Psi(\bm{\lambda}) = \sum_{g=1}^m \lambda_g \log \lambda_g$. The resulting Bregman divergence is the unnormalized Kullback-Leibler (KL) divergence:
\begin{equation}
D_\Psi(\bm{\lambda}, \bm{\lambda}^{(k)}) = \sum_{g=1}^m \lambda_g \log \frac{\lambda_g}{\lambda_g^{(k)}}.
\end{equation}
Therefore the optimization problem in \Cref{eq:app_md_general_update} becomes:
\begin{equation}
\bm{\lambda}^{(k+1)} = \underset{\bm{\lambda} \in \Delta_{m-1}}{\arg\max} \left\{ \langle \nabla \ell(\bm{\lambda}^{(k)}), \bm{\lambda} \rangle - \frac{1}{\eta} \sum_{g=1}^m \lambda_g \log \frac{\lambda_g}{\lambda_g^{(k)}} \right\}.
\end{equation}
To solve the optimization problem in Eq.~\ref{eq:app_md_general_update}, we form the Lagrangian with a multiplier $\mu$ for the constraint $\sum_g \lambda_g = 1$:
\begin{equation}
\mathcal{L}(\bm{\lambda}, \mu) = \eta \langle \nabla \ell(\bm{\lambda}^{(k)}), \bm{\lambda} \rangle - \sum_g \lambda_g \log \frac{\lambda_g}{\lambda_g^{(k)}} - \mu\left(\sum_g \lambda_g - 1\right).
\end{equation}
Setting the derivative $\partial \mathcal{L} / \partial \lambda_g$ to zero yields:
\begin{align*}
\eta \nabla_\lambda \ell(\bm{\lambda}^{(k)})_g - \left( \log \frac{\lambda_g}{\lambda_g^{(k)}} + 1 \right) - \mu &= 0 \\
\log \frac{\lambda_g}{\lambda_g^{(k)}} &= \eta \nabla_\lambda \ell(\bm{\lambda}^{(k)})_g - \mu - 1 \\
\lambda_g &= \lambda_g^{(k)} \exp(\eta \nabla_\lambda \ell(\bm{\lambda}^{(k)})_g) \exp(-\mu-1).
\end{align*}
The term $\exp(-\mu-1)$ serves as a normalization constant to ensure $\sum_g \lambda_g = 1$. This leads directly to the EG update rule presented in \Cref{eq:eg_update}:
\begin{equation}
\lambda_g^{(k+1)} = \frac{\lambda_g^{(k)} \exp(\eta \cdot \nabla_\lambda \ell(\bm{\lambda}^{(k)})_g)}{\sum_{h=1}^m \lambda_h^{(k)} \exp(\eta \cdot \nabla_\lambda \ell(\bm{\lambda}^{(k)})_h)}. \tag{\ref{eq:eg_update}, repeated}
\end{equation}

\subsection{Derivation of the One-Step MTD Estimator}
\label{app:one_step_derivation}
We now prove Proposition \ref{prop:one_step_md} by specializing the EG update to the MTD model and evaluating it at the uniform prior $\bm{\lambda}^{(0)} = (1/m, \dots, 1/m)$.

\paragraph{Step 1: MTD Log-Likelihood and its Gradient.}
Given an observed sequence prefix $\bm{y}_1^t$, the MTD log-likelihood is:
\begin{equation}
\ell(\bm{\lambda}) = \log p(\bm{y}_1^t \mid \bm{\lambda}) = \sum_{k=m+1}^t \log \left( \sum_{h=1}^m \lambda_h \pi(y_{k-h}, y_k) \right).
\end{equation}
The $g$-th component of its gradient is:
\begin{equation}
\nabla_\lambda \ell(\bm{\lambda})_g = \frac{\partial \ell(\bm{\lambda})}{\partial \lambda_g} = \sum_{k=m+1}^t \frac{\pi(y_{k-g}, y_k)}{\sum_{h=1}^m \lambda_h \pi(y_{k-h}, y_k)}.
\end{equation}

\paragraph{Step 2: Evaluate Gradient at the Uniform Prior.}
We evaluate this gradient at $\bm{\lambda}^{(0)} = (1/m, \dots, 1/m)$:
\begin{align}
\nabla_\lambda \ell(\bm{\lambda}^{(0)})_g &= \sum_{k=m+1}^t \frac{\pi(y_{k-g}, y_k)}{\sum_{h=1}^m (1/m) \pi(y_{k-h}, y_k)} \nonumber \\
&= m \sum_{k=m+1}^t \frac{\pi(y_{k-g}, y_k)}{\sum_{h=1}^m \pi(y_{k-h}, y_k)}.
\end{align}
We recognize the term inside the summation as the posterior responsibility of lag $g$ under the uniform model:
\begin{align}
\gamma_k(g) &:= p(Z_k=g \mid \bm{y}_1^k, \bm{\lambda}^{(0)})  \\
&= \frac{p(y_k \mid y_{k-g}) p(Z_k=g|\bm{\lambda}^{(0)})}{\sum_h p(y_k \mid y_{k-h}) p(Z_k=h|\bm{\lambda}^{(0)})} \\ 
&= \frac{\pi(y_{k-g}, y_k) \cdot (1/m)}{\sum_h \pi(y_{k-h}, y_k) \cdot (1/m)} \\ 
&= \frac{\pi(y_{k-g}, y_k)}{\sum_{h=1}^m \pi(y_{k-h}, y_k)}.
\end{align}
Thus, the gradient at the uniform prior is a scaled sum of these responsibilities:
\begin{equation}
\nabla_\lambda \ell(\bm{\lambda}^{(0)})_g = m \sum_{k=m+1}^t \gamma_k^{\text{unif}}(g).
\end{equation}

\paragraph{Step 3: Apply the EG Update Rule.}
Finally, we substitute $\lambda_g^{(0)} = 1/m$ and the derived gradient into the EG update rule (Eq. \ref{eq:eg_update}) to find $\lambda_g^{(1)}$:
\begin{align}
\hat{\bm{\lambda}}^{\text{MD}}_g = \lambda_g^{(1)} &= \frac{\lambda_g^{(0)} \exp\left(\eta \cdot \nabla_\lambda \ell(\bm{\lambda}^{(0)})_g\right)}{\sum_{j=1}^m \lambda_j^{(0)} \exp\left(\eta \cdot \nabla_\lambda \ell(\bm{\lambda}^{(0)})_j\right)} \nonumber \\
&= \frac{(1/m) \cdot \exp\left(\eta \cdot m \sum_{k=m+1}^{t} \gamma_k^{\text{unif}}(g)\right)}{\sum_{j=1}^m (1/m) \cdot \exp\left(\eta \cdot m \sum_{k=m+1}^{t} \gamma_k^{\text{unif}}(j)\right)} \nonumber \\
&= \frac{\exp\left( \eta m \sum_{k=m+1}^{t} \gamma_k^{\text{unif}}(g) \right)}{\sum_{j=1}^m \exp\left( \eta m \sum_{k=m+1}^{t} \gamma_k^{\text{unif}}(j) \right)}.
\end{align}
This completes the proof of \Cref{prop:one_step_md}.

\section{One-Step MD as a First-Order Bayesian Approximation}
\label{app:first_order_bayesian_approx}

In this section, we formally establish a theoretical connection between the one-step Mirror Descent (MD) estimator and the true Bayesian posterior mean. We demonstrate that their respective first-order Taylor expansions around a state of ``no evidence'' are identical up to a scalar constant. This result provides a rigorous basis for understanding the one-step MD estimator as a principled approximation to the Bayes-optimal predictor, particularly in a low-data or low-signal regime.

The analysis hinges on treating both estimators as functions of the log-likelihood gradient evaluated at the center of the simplex, $\bm{g} \coloneqq \nabla_{\bm{\lambda}} \ell(\bm{\lambda}^{(0)})$, and expanding them around the point of no evidence, $\bm{g}=\bm{0}$. The approximations will be expressed using Landau notation, where a vector function $\bm{f}(\bm{g}) = O(\|\bm{g}\|^p)$ signifies that $\|\bm{f}(\bm{g})\| \leq C \|\bm{g}\|^p$ for some constant $C$ in a neighborhood of $\bm{g}=\bm{0}$.

\begin{prop}[First-Order Approximation of the One-Step MD Estimator]
\label{prop:md_taylor}
Let $\hat{\bm{\lambda}}^{\text{MD}}(\bm{g})$ be the one-step MD estimator defined as $\hat{\bm{\lambda}}^{\text{MD}}(\bm{g}) = \softmax(\eta \bm{g})$, viewed as a function of the log-likelihood gradient $\bm{g}$. Its first-order Taylor expansion around $\bm{g}=\bm{0}$ is given by:
\begin{equation}
\hat{\lambda}^{\text{MD}}_k(\bm{g}) = \frac{1}{m} + \frac{\eta}{m} (g_k - \bar{g}) + O(\|\bm{g}\|^2),
\end{equation}
where $\bar{g} = \frac{1}{m}\sum_{j=1}^m g_j$. In vector form, this is $\hat{\bm{\lambda}}^{\text{MD}}(\bm{g}) = \bm{\lambda}^{(0)} + \frac{\eta}{m} \Proj_{\Delta}(\bm{g}) + O(\|\bm{g}\|^2)$, where $\bm{\lambda}^{(0)}=(1/m, \dots, 1/m)$ and $\Proj_{\Delta}$ is the operator that projects a vector onto the hyperplane of vectors that sum to zero.
\end{prop}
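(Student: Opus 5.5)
The plan is to Taylor-expand the smooth map $\bm{g} \mapsto \softmax(\eta\bm{g})$ at the origin. We view $\hat{\bm{\lambda}}^{\text{MD}}(\bm{g}) = \softmax(\eta \bm{g})$ as a composition of the linear map $\bm{g}\mapsto \eta\bm{g}$ with $\softmax:\mathbb{R}^m\to\Delta_{m-1}$. The softmax is $C^\infty$, in fact real-analytic, so Taylor's theorem with remainder applies. This gives $\hat{\bm{\lambda}}^{\text{MD}}(\bm{g}) = \softmax(\bm{0}) + \eta\, J_{\softmax}(\bm{0})\,\bm{g} + O(\|\bm{g}\|^2)$. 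Here the constant in the $O(\cdot)$ depends only on $\eta$ and $m$, since the second derivatives of softmax are uniformly bounded on any compact neighborhood of $\bm{0}$, and indeed globally.

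The first concrete step is to evaluate the zeroth-order term: $\softmax(\bm{0})_k = e^0/\sum_j e^0 = 1/m$, which is $\bm{\lambda}^{(0)}$. The second step is to compute the Jacobian of the softmax. Differentiating $s_k(\bm{z}) = e^{z_k}/\sum_j e^{z_j}$ gives $\partial s_k/\partial z_j = s_k(\delta_{kj} - s_j)$, that is, $J_{\softmax}(\bm{z}) = \operatorname{diag}(\bm{s}) - \bm{s}\bm{s}^\top$. At $\bm{z}=\bm{0}$ this becomes $\tfrac{1}{m}\big(I - \tfrac{1}{m}\bm{1}\bm{1}^\top\big)$. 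Applying the chain rule with inner derivative $\eta I$, the linear term is $\tfrac{\eta}{m}\big(I - \tfrac1m \bm{1}\bm{1}^\top\big)\bm{g}$, whose $k$-th entry is $\tfrac{\eta}{m}(g_k - \bar g)$. This is the componentwise claim.

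The third step is to identify $I - \tfrac1m\bm{1}\bm{1}^\top$ as the orthogonal projector onto $\{\bm{v}:\bm{1}^\top\bm{v}=0\}$. It is symmetric and idempotent, its kernel is $\operatorname{span}(\bm{1})$, and it fixes every zero-sum vector. This yields the vector form with $\Proj_\Delta$. As a sanity check, the linear correction sums to zero, consistent with $\hat{\bm{\lambda}}^{\text{MD}}$ staying on the simplex.

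There is no real obstacle here; the only point requiring care is the remainder. To make the $O(\|\bm{g}\|^2)$ bound explicit, I would use the integral form of the remainder with the Hessian of softmax. Its entries are polynomials in $\bm{s}\in\Delta_{m-1}$ and hence bounded by an absolute constant. This gives $C = c\,\eta^2$ for a numerical constant $c$, valid for all $\bm{g}$, not just near $\bm{0}$.
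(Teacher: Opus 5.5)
Your proposal is correct and follows essentially the same route as the paper: evaluate $\softmax(\bm{0}) = \bm{\lambda}^{(0)}$, compute the Jacobian entries $\eta\,\hat{\lambda}_k(\delta_{kj} - \hat{\lambda}_j)$, evaluate them at $\bm{0}$, and assemble the result. Your identification of $I - \tfrac{1}{m}\bm{1}\bm{1}^\top$ as $\Proj_{\Delta}$ and your explicit global $c\,\eta^2\|\bm{g}\|^2$ remainder bound go slightly beyond the paper, which only invokes analyticity for the $O(\|\bm{g}\|^2)$ term.
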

\begin{proof}
The one-step MD update is given by the softmax function, $\hat{\lambda}^{\text{MD}}_k(\bm{g}) = \frac{\exp(\eta g_k)}{\sum_{j=1}^m \exp(\eta g_j)}$. Since the exponential function is analytic, the softmax function is also analytic in $\bm{g}$, and its Taylor series expansion around $\bm{g}=\bm{0}$ exists. The expansion is given by:
$$
\hat{\bm{\lambda}}^{\text{MD}}(\bm{g}) = \hat{\bm{\lambda}}^{\text{MD}}(\bm{0}) + J_{\hat{\bm{\lambda}}^{\text{MD}}}(\bm{0})\bm{g} + O(\|\bm{g}\|^2),
$$
where $J_{\hat{\bm{\lambda}}^{\text{MD}}}(\bm{0})$ is the Jacobian matrix of $\hat{\bm{\lambda}}^{\text{MD}}(\bm{g})$ evaluated at $\bm{g}=\bm{0}$.

\textbf{Zeroth-Order Term:} At $\bm{g}=\bm{0}$, the estimator evaluates to the uniform distribution, which is the prior mean $\bm{\lambda}^{(0)}$:
$$
\hat{\lambda}^{\text{MD}}_k(\bm{0}) = \frac{\exp(0)}{\sum_{j=1}^m \exp(0)} = \frac{1}{m} = \lambda^{(0)}_k.
$$

\textbf{First-Order Term (Jacobian):} The entries of the Jacobian matrix, $J_{kj}(\bm{g}) = \frac{\partial}{\partial g_j} \hat{\lambda}^{\text{MD}}_k(\bm{g})$, are given by $\eta \cdot \hat{\lambda}^{\text{MD}}_k(\bm{g}) (\delta_{kj} - \hat{\lambda}^{\text{MD}}_j(\bm{g}))$. Evaluating at $\bm{g}=\bm{0}$, where $\hat{\lambda}^{\text{MD}}_j(\bm{0}) = 1/m$ for all $j$:
$$
\frac{\partial \hat{\lambda}^{\text{MD}}_k}{\partial g_j}\bigg|_{\bm{g}=\bm{0}} = \eta \cdot \frac{1}{m} \left( \delta_{kj} - \frac{1}{m} \right).
$$

\textbf{Assembling the Expansion:} The $k$-th component of the expansion is $\hat{\lambda}^{\text{MD}}_k(\bm{g}) = \hat{\lambda}^{\text{MD}}_k(\bm{0}) + \sum_{j=1}^m \frac{\partial \hat{\lambda}^{\text{MD}}_k}{\partial g_j}\big|_{\bm{0}} \cdot g_j + O(\|\bm{g}\|^2)$:
\begin{align*}
\hat{\lambda}^{\text{MD}}_k(\bm{g}) &= \frac{1}{m} + \sum_{j=1}^m \left[ \frac{\eta}{m} \left(\delta_{kj} - \frac{1}{m}\right) \right] g_j + O(\|\bm{g}\|^2) \\
&= \frac{1}{m} + \frac{\eta}{m} \left( \sum_{j=1}^m \delta_{kj}g_j - \frac{1}{m} \sum_{j=1}^m g_j \right) + O(\|\bm{g}\|^2) \\
&= \frac{1}{m} + \frac{\eta}{m} (g_k - \bar{g}) + O(\|\bm{g}\|^2).
\end{align*}
This completes the proof.
\end{proof}

Next, we derive the corresponding approximation for the Bayesian posterior mean. We linearize the log-likelihood around the prior mean, $\bm{\lambda}^{(0)}$, which allows for an analytical treatment of the posterior.

\begin{prop}[First-Order Approximation of the Bayesian Posterior Mean]
\label{prop:bayes_taylor}
Consider a Bayesian model with a uniform Dirichlet($\mathbf{1}$) prior over $\bm{\lambda} \in \Delta_{m-1}$ and a log-likelihood linearized around the prior mean, $\ell(\bm{\lambda}) = \ell(\bm{\lambda}^{(0)}) + \langle \bm{g}, \bm{\lambda} - \bm{\lambda}^{(0)} \rangle$. The resulting posterior mean, $\hat{\bm{\lambda}}^{\text{Bayes}}(\bm{g})$, has a first-order Taylor expansion around $\bm{g}=\bm{0}$ given by:
\begin{equation}
\hat{\lambda}^{\text{Bayes}}_k(\bm{g}) = \frac{1}{m} + \frac{1}{m(m+1)} (g_k - \bar{g}) + O(\|\bm{g}\|^2).
\end{equation}
In vector form, this is $\hat{\bm{\lambda}}^{\text{Bayes}}(\bm{g}) = \bm{\lambda}^{(0)} + \Cov_{\bm{\lambda}^{(0)}}(\bm{\lambda}) \bm{g} + O(\|\bm{g}\|^2)$, where $\Cov_{\bm{\lambda}^{(0)}}(\bm{\lambda})$ is the covariance matrix of the prior distribution.
\end{prop}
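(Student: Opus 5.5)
The plan is to write the posterior under the linearized likelihood as an exponential tilt of the Dirichlet($\mathbf{1}$) prior, and then differentiate its mean at $\bm{g}=\bm{0}$. Since $\ell(\bm{\lambda}) = \ell(\bm{\lambda}^{(0)}) + \langle \bm{g}, \bm{\lambda}-\bm{\lambda}^{(0)}\rangle$, the posterior density is proportional to $\exp(\langle \bm{g}, \bm{\lambda}\rangle)\, p(\bm{\lambda}\mid \mathbf{1})$. The constant factors $\exp(\ell(\bm{\lambda}^{(0)}) - \langle \bm{g},\bm{\lambda}^{(0)}\rangle)$ cancel in the normalization. Hence
\[
\hat{\bm{\lambda}}^{\text{Bayes}}(\bm{g}) = \frac{\mathbb{E}_{0}[\bm{\lambda}\, e^{\langle \bm{g},\bm{\lambda}\rangle}]}{\mathbb{E}_{0}[e^{\langle \bm{g},\bm{\lambda}\rangle}]} = \nabla_{\bm{g}} \log M(\bm{g}), \qquad M(\bm{g}) \coloneqq \mathbb{E}_{0}[e^{\langle \bm{g},\bm{\lambda}\rangle}],
\]
where $\mathbb{E}_0$ is expectation under the uniform Dirichlet prior. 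The integrand is bounded on the compact simplex, so $M$ is finite and analytic (entire) in $\bm{g}$, and differentiation under the integral is justified. The posterior mean is therefore a smooth function of $\bm{g}$, and a Taylor expansion with an $O(\|\bm{g}\|^2)$ remainder near $\bm{0}$ is legitimate.

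Next I apply the standard cumulant-generating-function identities to $\log M$. Its gradient at $\bm{0}$ is the prior mean $\mathbb{E}_0[\bm{\lambda}] = \bm{\lambda}^{(0)}$. Its Hessian at $\bm{0}$ is the prior covariance $\Cov_{\bm{\lambda}^{(0)}}(\bm{\lambda})$. This gives the vector form
\[
\hat{\bm{\lambda}}^{\text{Bayes}}(\bm{g}) = \bm{\lambda}^{(0)} + \Cov_{\bm{\lambda}^{(0)}}(\bm{\lambda})\,\bm{g} + O(\|\bm{g}\|^2).
\]
Alternatively, one can compute the Jacobian of the ratio directly: $\partial_{g_j}$ of the tilted mean equals the tilted covariance $\mathbb{E}_{\bm{g}}[\lambda_k\lambda_j] - \mathbb{E}_{\bm{g}}[\lambda_k]\mathbb{E}_{\bm{g}}[\lambda_j]$, evaluated at $\bm{g}=\bm{0}$.

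The remaining step is to plug in the Dirichlet($\bm{\alpha}$) covariance with $\alpha_0=\sum_g \alpha_g$:
\[
\Cov(\lambda_k,\lambda_j) = \frac{\tilde\alpha_k(\delta_{kj}-\tilde\alpha_j)}{\alpha_0+1}, \qquad \tilde\alpha_k = \alpha_k/\alpha_0.
\]
With $\bm{\alpha}=\mathbf{1}$ we have $\alpha_0=m$ and $\tilde\alpha_k=1/m$, so the covariance entries are $\frac{1}{m(m+1)}(\delta_{kj}-\frac1m)$. Multiplying this matrix by $\bm{g}$ gives the $k$-th entry $\frac{1}{m(m+1)}(g_k-\bar g)$, which is the claimed formula.

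I do not expect a serious obstacle; the main care points are two. First, the Dirichlet covariance formula should be recalled or quickly verified from the moments $\mathbb{E}[\lambda_k]=1/m$, $\mathbb{E}[\lambda_k^2]=2/(m(m+1))$, and $\mathbb{E}[\lambda_k\lambda_j]=1/(m(m+1))$ for $k\neq j$. Second, the remainder constant should be uniform in a neighborhood of $\bm{0}$, which follows from smoothness of $\nabla\log M$ with bounded third derivatives there. Comparing with Proposition~\ref{prop:md_taylor}, which has slope $\eta/m$, then yields Theorem~\ref{thm:first_order_equivalence} with $\eta = 1/(m+1)$.
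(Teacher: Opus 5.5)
Your proposal is correct and follows essentially the same route as the paper: write the posterior as an exponential tilt $\exp(\langle \bm{g},\bm{\lambda}\rangle)$ of the uniform prior, identify the Jacobian of the tilted mean at $\bm{g}=\bm{0}$ with the prior covariance, and substitute the Dirichlet($\mathbf{1}$) covariance $\frac{m\delta_{kj}-1}{m^2(m+1)}$. The only difference is packaging: you obtain the covariance as the Hessian of the cumulant generating function $\log M$, whereas the paper applies the quotient rule to $N_k/Z$ directly (your stated alternative).
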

\begin{proof}
Under the linearized likelihood, the posterior density is $p(\bm{\lambda}|\bm{g}) \propto p(\bm{\lambda}) \exp(\ell(\bm{\lambda})) \propto \exp(\langle \bm{g}, \bm{\lambda} \rangle)$, where terms constant in $\bm{\lambda}$ are absorbed into the normalization constant. The posterior mean is:
$$
\hat{\bm{\lambda}}^{\text{Bayes}}(\bm{g}) = \frac{\int_{\Delta_{m-1}} \bm{\lambda} \cdot \exp(\langle \bm{g}, \bm{\lambda} \rangle) \, d\mu(\bm{\lambda})}{\int_{\Delta_{m-1}} \exp(\langle \bm{g}, \bm{\lambda} \rangle) \, d\mu(\bm{\lambda})},
$$
where $d\mu(\bm{\lambda})$ is the uniform probability measure over the simplex $\Delta_{m-1}$. Let $N_k(\bm{g})$ be the numerator's $k$-th component and $Z(\bm{g})$ be the denominator. The function $\hat{\bm{\lambda}}^{\text{Bayes}}(\bm{g})$ is analytic, allowing a Taylor expansion.

\textbf{Zeroth-Order Term:} At $\bm{g}=\bm{0}$, the exponential term is 1. The posterior equals the prior, so the posterior mean is the prior mean:
$$
\hat{\bm{\lambda}}^{\text{Bayes}}(\bm{0}) = \frac{\int_{\Delta_{m-1}} \bm{\lambda} \, d\mu(\bm{\lambda})}{\int_{\Delta_{m-1}} 1 \, d\mu(\bm{\lambda})} = \E_{\bm{\lambda} \sim \text{Dir}(\mathbf{1})}[\bm{\lambda}] = \bm{\lambda}^{(0)}.
$$

\textbf{First-Order Term (Jacobian):} The Jacobian entries are $\frac{\partial \hat{\lambda}_k}{\partial g_j} = \frac{\partial}{\partial g_j} \left( \frac{N_k}{Z} \right)$. Using the quotient rule:
$$\frac{\partial \hat{\lambda}_k}{\partial g_j} = \frac{1}{Z^2} \left( Z \frac{\partial N_k}{\partial g_j} - N_k \frac{\partial Z}{\partial g_j} \right).$$
We find the required derivatives of $N_k(\bm{g})$ and $Z(\bm{g})$ differentiating under the integral sign, which is applicable here as the integrands are continuous on the compact domain $\Delta_{m-1}$:
\begin{align*}
    \frac{\partial Z}{\partial g_j} &= \frac{\partial}{\partial g_j} \int_{\Delta_{m-1}} \exp\left(\sum_i g_i \lambda_i\right) d\mu(\bm{\lambda}) = \int_{\Delta_{m-1}} \lambda_j \exp\left(\sum_i g_i \lambda_i\right) d\mu(\bm{\lambda}) \\
    \frac{\partial N_k}{\partial g_j} &= \frac{\partial}{\partial g_j} \int_{\Delta_{m-1}} \lambda_k \exp\left(\sum_i g_i \lambda_i\right) d\mu(\bm{\lambda}) = \int_{\Delta_{m-1}} \lambda_k \lambda_j \exp\left(\sum_i g_i \lambda_i\right) d\mu(\bm{\lambda})
\end{align*}
Now, we evaluate these components at $\bm{g}=\bm{0}$:
\begin{itemize}
    \item $Z(\bm{0}) = \int 1 \, d\mu(\bm{\lambda}) = 1$.
    \item $N_k(\bm{0}) = \int \lambda_k \, d\mu(\bm{\lambda}) = \E[\lambda_k] = 1/m$.
    \item $\frac{\partial Z}{\partial g_j}\bigg|_{\bm{g}=\bm{0}} = \int_{\Delta_{m-1}} \lambda_j \exp(\langle\mathbf{0}, \bm{\lambda}\rangle) \, d\mu(\bm{\lambda}) = \int \lambda_j \, d\mu(\bm{\lambda}) = \E[\lambda_j] = 1/m$.
    \item $\frac{\partial N_k}{\partial g_j}\bigg|_{\bm{g}=\bm{0}} = \int_{\Delta_{m-1}} \lambda_k \lambda_j \exp(\langle\mathbf{0}, \bm{\lambda}\rangle) \, d\mu(\bm{\lambda}) = \int \lambda_k \lambda_j \, d\mu(\bm{\lambda}) = \E[\lambda_k \lambda_j]$.
\end{itemize}
Substituting these evaluated terms into the quotient rule expression at $\bm{g}=\bm{0}$:
$$\frac{\partial \hat{\lambda}_k}{\partial g_j}\bigg|_{\bm{g}=\bm{0}} = \frac{1 \cdot \E[\lambda_k \lambda_j] - (1/m) \cdot (1/m)}{1^2} = \E[\lambda_k \lambda_j] - \E[\lambda_k]\E[\lambda_j] = \Cov_{\bm{\lambda}^{(0)}}(\lambda_k, \lambda_j).$$
This establishes that the Jacobian of the posterior mean at $\bm{g}=\bm{0}$ is the prior covariance matrix.

\textbf{Assembling the Expansion:} For a Dirichlet($\mathbf{1}$) distribution, the covariance matrix is $\Cov(\lambda_k, \lambda_j) = \frac{m \delta_{kj} - 1}{m^2(m+1)}$. The $k$-th component of the expansion is:
\begin{align*}
\hat{\lambda}^{\text{Bayes}}_k(\bm{g}) &= \frac{1}{m} + \sum_{j=1}^m \frac{m \delta_{kj} - 1}{m^2(m+1)} g_j + O(\|\bm{g}\|^2) \\
&= \frac{1}{m} + \frac{1}{m^2(m+1)} \left( m g_k - \sum_{j=1}^m g_j \right) + O(\|\bm{g}\|^2) \\
&= \frac{1}{m} + \frac{m}{m^2(m+1)} ( g_k - \bar{g} ) + O(\|\bm{g}\|^2) \\
&= \frac{1}{m} + \frac{1}{m(m+1)} (g_k - \bar{g}) + O(\|\bm{g}\|^2).
\end{align*}
This completes the proof of the proposition.
\end{proof}

By comparing the results of Proposition \ref{prop:md_taylor} and Proposition \ref{prop:bayes_taylor}, we arrive at our main result.

\begin{theorem}[First-Order Equivalence of the Estimators (restated)]
\label{thm:first_order_equivalence_app}
Let $\hat{\bm{\lambda}}^{\text{MD}}(\bm{g}; \eta)$ be the one-step MD estimator with learning rate $\eta$, and let $\hat{\bm{\lambda}}^{\text{Bayes}}(\bm{g})$ be the Bayesian posterior mean under the linearized likelihood. There exists a unique learning rate $\eta = \frac{1}{m+1}$ for which the two estimators are first-order equivalent at $\bm{g}=\bm{0}$.
\end{theorem}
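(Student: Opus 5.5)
The proof will compare the two first-order expansions already derived, Proposition~\ref{prop:md_taylor} and Proposition~\ref{prop:bayes_taylor}. Both estimators are analytic in $\bm{g}$ near $\bm{0}$ (the softmax is analytic, and the posterior mean under the linearized likelihood is a ratio of integrals of $\exp(\langle \bm{g},\bm{\lambda}\rangle)$ over a compact set with nonvanishing denominator). So first-order equivalence at $\bm{g}=\bm{0}$ means exactly two things: equal values at $\bm{g}=\bm{0}$ and equal Jacobians there. Zeroth order is immediate, since both propositions give $\bm{\lambda}^{(0)}=(1/m,\dots,1/m)$ for every $\eta$.

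For first order, I will write both Jacobians in matrix form. From Proposition~\ref{prop:md_taylor}, $J_{\text{MD}}(\bm{0})=\frac{\eta}{m}\big(\bm{I}-\frac{1}{m}\bm{1}\bm{1}^\top\big)$. From Proposition~\ref{prop:bayes_taylor}, $J_{\text{Bayes}}(\bm{0})=\Cov_{\bm{\lambda}^{(0)}}(\bm{\lambda})=\frac{1}{m(m+1)}\big(\bm{I}-\frac{1}{m}\bm{1}\bm{1}^\top\big)$, using the Dirichlet$(\bm{1})$ covariance $\frac{m\delta_{kj}-1}{m^2(m+1)}$. Both are scalar multiples of the same centering projector $\bm{P}=\bm{I}-\frac1m\bm{1}\bm{1}^\top$, so they agree if and only if $\frac{\eta}{m}\bm{P}=\frac{1}{m(m+1)}\bm{P}$. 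Since $\bm{P}\neq \bm{0}$ for $m\ge 2$, this holds iff $\eta=\frac{1}{m+1}$. That gives both existence and uniqueness. Then $\hat{\bm{\lambda}}^{\text{MD}}(\bm{g};\tfrac{1}{m+1})-\hat{\bm{\lambda}}^{\text{Bayes}}(\bm{g})=O(\|\bm{g}\|^2)$, with the remainder constants taken from the two propositions on a common neighborhood of $\bm{0}$.

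There is one subtlety. If equivalence were only required for $\bm{g}$ in the sum-zero hyperplane, the same argument still applies, because $\bm{P}$ acts as the identity there. For general $\bm{g}$, both expansions depend only on $\bm{P}\bm{g}$: softmax is invariant to adding constants, and the linearized posterior is invariant too, since $\langle c\bm{1},\bm{\lambda}\rangle=c$ on the simplex. So no directional case split is needed.

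The main obstacle is making the uniqueness claim airtight. The step most worth double-checking is the Dirichlet$(\bm{1})$ covariance formula, which I will verify from $\E[\lambda_k^2]=\frac{2}{m(m+1)}$ and $\E[\lambda_k\lambda_j]=\frac{1}{m(m+1)}$ for $k\ne j$, together with the observation that $\bm{P}$ is nonzero for $m\ge2$ (for $m=1$ the statement is trivial). I also need to note that the $m=1$ case makes "unique" vacuous, so the theorem should be read for $m\ge 2$.
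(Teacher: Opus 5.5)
Your proposal is correct and follows the paper's proof: zeroth-order terms agree for every $\eta$, and both Jacobians at $\bm{g}=\bm{0}$ are scalar multiples of the centering projector $\bm{I}-\frac{1}{m}\bm{1}\bm{1}^\top$, so matching them forces $\eta/m = 1/(m(m+1))$. One small wording fix: for $m=1$ the uniqueness claim is not vacuous but false, since both Jacobians vanish and every $\eta$ works, which is exactly why the restriction to $m\ge 2$ is needed.
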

\begin{proof}
Two estimators are first-order equivalent at $\bm{g}=\bm{0}$ if their values and their Jacobian matrices are identical at that point. As shown in Propositions \ref{prop:md_taylor} and \ref{prop:bayes_taylor}, the first condition, $\hat{\bm{\lambda}}^{\text{MD}}(\bm{0}; \eta) = \hat{\bm{\lambda}}^{\text{Bayes}}(\bm{0})$, holds for any $\eta$. The equivalence thus depends on matching their Jacobians.

From the proof of Proposition \ref{prop:md_taylor}, the Jacobian of the MD estimator at $\bm{g}=\bm{0}$ is:
$$
J_{\hat{\bm{\lambda}}^{\text{MD}}}(\bm{0}) = \frac{\eta}{m} \left(\bm{I} - \frac{1}{m}\bm{1}\bm{1}^T\right).
$$
From the proof of Proposition \ref{prop:bayes_taylor}, the Jacobian of the Bayesian posterior mean is:
$$
J_{\hat{\bm{\lambda}}^{\text{Bayes}}}(\bm{0}) = \frac{1}{m(m+1)} \left(\bm{I} - \frac{1}{m}\bm{1}\bm{1}^T\right).
$$
Equating the two Jacobians, $J_{\hat{\bm{\lambda}}^{\text{MD}}}(\bm{0}) = J_{\hat{\bm{\lambda}}^{\text{Bayes}}}(\bm{0})$, requires their scalar coefficients to be equal, since the matrix factor is non-zero for $m>1$:
$$
\frac{\eta}{m} = \frac{1}{m(m+1)}.
$$
Solving for $\eta$ yields the unique solution $\eta = \frac{1}{m+1}$.
\end{proof}

This theorem provides a theoretical justification for the performance of the one-step MD estimator. It demonstrates that this simple, non-iterative update is not merely a heuristic but a principled, first-order approximation of the Bayesian posterior mean under a linearized likelihood.

\paragraph{Remark (The Role of $\eta$ and the Small-Gradient Assumption).}
The first-order equivalence established in Theorem~\ref{thm:first_order_equivalence} holds in the regime where $\|\bm{g}\| \to 0$, as this is where the higher-order terms, $O(\|\bm{g}\|^2)$, are negligible. For many models, the gradient's magnitude, $\|\bm g\|$, scales with the amount of data (e.g., sequence length $T$), which appears to invalidate the approximation when the data size is large.

However, the one-step MD estimator, $\hat{\bm{\lambda}}^{\text{MD}}=\softmax(\eta\bm g)$, can remain a well-behaved estimator even for large $\|\bm g\|$ if $\eta$ is scaled appropriately. The term $\eta \bm{g}$ determines the softmax behavior. If we set the learning rate to be inversely proportional to the signal strength, for instance $\eta(T) = \Theta(1/T)$, the norm of the argument, $\|\eta\bm g\|$, can remain bounded. This scaling prevents the softmax output from saturating and allows the estimator to remain sensitive to the information in $\bm g$.

On the Bayesian side, under the linearized likelihood, the posterior mean $\hat{\bm{\lambda}}^{\text{Bayes}}(\bm g)$ lacks a scaling parameter analogous to $\eta$. For large $\|\bm g\|$, the posterior density $\exp(\langle \bm g, \bm \lambda \rangle)$ concentrates sharply at the vertex of the simplex maximizing the inner product with $\bm g$. In this case, the first-order approximation from Proposition~\ref{prop:bayes_taylor} breaks down as the neglected $O(\|\bm{g}\|^2)$ term becomes dominant.

The equivalence in Theorem~\ref{thm:first_order_equivalence} should therefore be interpreted as a local consistency result at $\bm g = \bm 0$. It shows that for low-signal scenarios, the MD update is a principled approximation to the Bayesian one, and it provides a theoretically grounded value for $\eta$ in that regime. The empirical success of the one-step estimator for large $T$ suggests that while the functional forms of the two estimators diverge beyond the first order, a properly tuned MD estimator may still serve as an effective proxy for the Bayesian posterior mean, motivating analytical frameworks beyond local Taylor expansions.

\section{Learning-rate scaling via a Lipschitz (smoothness) constant}
\label{sec:lipschitz_scaling_rigorous}

In our main analysis, we established a first-order equivalence between the one-step MD estimator and the Bayesian posterior mean in the regime of "no evidence," where the log-likelihood gradient $\bm{g} = \bm{0}$. However, for a sequence of length $T$, the magnitude of the gradient, $\|\bm{g}\|$, typically scales with $T$. This raises the question of how the learning rate of the one-step MD estimator, $\hat{\bm{\lambda}}^{\text{MD}}=\softmax(\eta\bm{g})$ should be scaled with $T$ to maintain good performance.

In this section, we provide a theoretical justification for scaling the learning rate as $\eta = \Theta(1/T)$. We demonstrate that the negative log-likelihood function, viewed as a loss function over the simplex, has a gradient that is Lipschitz continuous with a constant $L$ that grows linearly with the sequence length $T$. Standard optimization theory suggests setting the learning rate inversely proportional to this Lipschitz constant, i.e., $\eta \propto 1/L$, to ensure stable updates. 
We use the same notation as the before: 
$$c_{t,g}=\pi(y_{t-g},y_t),\qquad t=m+1,\dots,T,\qquad g=1,\dots,m,$$
and $m$ is the number of lags.

\vspace{1ex}
\noindent\textbf{Assumption.}  We assume there exists a constant
$$
c_{\min}>0
$$
such that for every $t\in\{m+1,\dots,T\}$ and every $g\in\{1,\dots,m\}$,
\begin{equation}
\label{eq:positive_transitions}
c_{t,g} = \pi(y_{t-g},y_t) \ge c_{\min}.
\end{equation}
Because each $c_{t,g}$ is a conditional probability, we also have the trivial upper bound
\begin{equation}
\label{eq:cmax_one}
c_{t,g}\le 1 \qquad\text{for all }t,g.
\end{equation}
Under these assumptions the denominators that appear in derivatives are uniformly bounded away from zero, and global, uniform bounds on the Hessian are valid.

\begin{lemma}[Hessian decomposition]
For $\ell(\bm\lambda)=\sum_{t=m+1}^T \log\!\big(\sum_{g=1}^m \lambda_g c_{t,g}\big)$ the Hessian satisfies
$$
\nabla^2 \ell(\bm\lambda)
= -\sum_{t=m+1}^T \frac{\mathbf c_t \mathbf c_t^\top}{\big(\sum_{g=1}^m \lambda_g c_{t,g}\big)^2},
\qquad \mathbf c_t:=(c_{t,1},\dots,c_{t,m})^\top.
$$
Hence the Hessian of the negative log-likelihood $f(\bm\lambda)=-\ell(\bm\lambda)$ is
$$
\nabla^2 f(\bm\lambda)
= \sum_{t=m+1}^T \frac{\mathbf c_t \mathbf c_t^\top}{\big(\sum_{g=1}^m \lambda_g c_{t,g}\big)^2}.
$$
\end{lemma}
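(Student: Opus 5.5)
The plan is a direct term-by-term computation. Write $\ell(\bm\lambda)=\sum_{t=m+1}^T \ell_t(\bm\lambda)$ with $\ell_t(\bm\lambda)=\log s_t(\bm\lambda)$ and $s_t(\bm\lambda):=\langle \mathbf c_t,\bm\lambda\rangle=\sum_{g=1}^m \lambda_g c_{t,g}$. By linearity of differentiation, it suffices to compute the Hessian of a single summand $\ell_t$ and sum over $t$.

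Each $\ell_t$ is the composition of the scalar function $\log$ with the linear map $\bm\lambda\mapsto\langle \mathbf c_t,\bm\lambda\rangle$. First I check that $\ell_t$ is well defined and smooth on a neighbourhood of the simplex. By the standing assumption (\ref{eq:positive_transitions}), for every $\bm\lambda\in\Delta_{m-1}$ we have
\[
s_t(\bm\lambda)\ge c_{\min}\sum_g\lambda_g=c_{\min}>0 .
\]
By continuity, $s_t$ stays positive on an open neighbourhood of $\Delta_{m-1}$ in $\mathbb{R}^m$. So $\ell_t$ is $C^\infty$ there, and the Hessian can be taken as the ordinary Euclidean Hessian of the function extended off the simplex. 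This is the only step needing care: it guarantees the Hessian exists and that the denominators are bounded below by $c_{\min}^2$, which is what the subsequent bound on $L_{\text{rel}}$ will rely on.

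Next I apply the chain rule. The gradient is
\[
\nabla \ell_t(\bm\lambda)=\frac{\mathbf c_t}{s_t(\bm\lambda)},
\]
which matches the gradient formula already derived in Appendix~\ref{app:one_step_derivation}. Differentiating again, only the scalar factor $1/s_t$ depends on $\bm\lambda$, and $\partial_{\lambda_h}\big(1/s_t\big)=-c_{t,h}/s_t^2$. Hence
\[
\frac{\partial^2 \ell_t}{\partial\lambda_g\,\partial\lambda_h}=-\frac{c_{t,g}\,c_{t,h}}{s_t(\bm\lambda)^2},
\qquad\text{i.e.}\qquad
\nabla^2\ell_t(\bm\lambda)=-\frac{\mathbf c_t\mathbf c_t^\top}{s_t(\bm\lambda)^2}.
\]
Summing over $t=m+1,\dots,T$ gives the stated expression for $\nabla^2\ell$. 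Negating it gives the stated expression for $\nabla^2 f$, since $f=-\ell$.

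As a by-product, $\nabla^2 f$ is a sum of rank-one positive semidefinite matrices, so $f$ is convex. I would record this remark because it is used in the relative-smoothness argument that follows.
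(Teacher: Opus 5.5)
Your proposal is correct and follows the same route as the paper, whose proof is just ``direct differentiation'': your chain-rule computation $\nabla\ell_t=\mathbf c_t/s_t$, $\nabla^2\ell_t=-\mathbf c_t\mathbf c_t^\top/s_t^2$, summed over $t$, is exactly that. Your extra smoothness check via the positivity assumption is a harmless addition the paper leaves implicit; the formula itself holds wherever $\ell$ is defined, i.e.\ wherever every $s_t>0$.
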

\begin{proof}
Direct differentiation of $\ell(\bm\lambda)$ yields the displayed formulas.
\end{proof}
Write $s_t(\bm\lambda):=\sum_{g=1}^m \lambda_g c_{t,g}$ and consequently the Hessian of the loss is
\begin{equation}
\label{eq:hessian_f_decomp}
\nabla^2 f(\bm\lambda)
= \sum_{t=m+1}^{T} \frac{\mathbf c_t \mathbf c_t^\top}{s_t(\bm\lambda)^2}.
\end{equation}
Each summand in \eqref{eq:hessian_f_decomp} is a rank-one positive semi-definite matrix.

\subsection{Uniform operator-norm bound on the Hessian}

We now derive a uniform bound on the spectral (operator) norm of $\nabla^2 f(\bm\lambda)$ that depends linearly on $T-m$.

\begin{lemma}[Operator-norm bound]
\label{lem:hessian_op_bound_positive}
Under the standing assumption \eqref{eq:positive_transitions} (and \eqref{eq:cmax_one}), for every $\bm\lambda$ in the full simplex $\Delta_{m-1}=\{\bm\lambda\ge 0,\ \sum_g\lambda_g=1\}$ we have
\begin{equation}
\label{eq:op_bound}
\big\|\nabla^2 f(\bm\lambda)\big\|_{\mathrm{op}}
\le (T-m)\,\frac{m}{c_{\min}^2}.
\end{equation}
\end{lemma}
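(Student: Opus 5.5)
The plan is to bound the operator norm of each rank-one summand in the Hessian decomposition \eqref{eq:hessian_f_decomp} uniformly over the simplex, and then sum the $T-m$ bounds with the triangle inequality. Each summand is $\mathbf c_t\mathbf c_t^\top/s_t(\bm\lambda)^2$, a positive semi-definite rank-one matrix. Its operator norm is exactly $\|\mathbf c_t\|_2^2/s_t(\bm\lambda)^2$, since $\mathbf c_t\mathbf c_t^\top$ has the single nonzero eigenvalue $\|\mathbf c_t\|_2^2$. This gives
\[
\big\|\nabla^2 f(\bm\lambda)\big\|_{\mathrm{op}} \le \sum_{t=m+1}^{T} \frac{\|\mathbf c_t\|_2^2}{s_t(\bm\lambda)^2},
\]
which is valid because the operator norm is subadditive. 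The sum of PSD matrices would even allow equality through the trace, but we do not need that.

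Two elementary estimates handle the numerator and the denominator. For the numerator, \eqref{eq:cmax_one} gives $c_{t,g}\le 1$, so $\|\mathbf c_t\|_2^2=\sum_{g=1}^m c_{t,g}^2\le m$. For the denominator, every $\bm\lambda$ in the closed simplex has $\lambda_g\ge0$ and $\sum_g\lambda_g=1$. Using \eqref{eq:positive_transitions} we get $s_t(\bm\lambda)=\sum_g\lambda_g c_{t,g}\ge c_{\min}\sum_g\lambda_g=c_{\min}$. This holds even on the boundary, where some $\lambda_g=0$, because the lower bound uses only the total mass. Hence $s_t(\bm\lambda)^{-2}\le c_{\min}^{-2}$.

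Combining the two estimates, each summand is bounded by $m/c_{\min}^2$. Summing over the $T-m$ indices $t=m+1,\dots,T$ yields \eqref{eq:op_bound}.

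There is no real obstacle. The only point needing care is that the bound must be uniform over the full closed simplex, including its faces. This is exactly where the assumption $c_{\min}>0$ enters: it keeps $s_t$ bounded away from zero regardless of how $\bm\lambda$ distributes its mass. A sharper bound $\|\mathbf c_t\|_2^2\le m\max_g c_{t,g}^2$ is available, but the stated constant only requires the crude estimate $c_{t,g}\le1$.
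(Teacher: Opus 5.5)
Your proposal is correct and follows the paper's proof essentially line by line. Both use subadditivity of the operator norm over the rank-one summands, the bound $\|\mathbf c_t\mathbf c_t^\top\|_{\mathrm{op}}=\|\mathbf c_t\|_2^2\le m$ from $c_{t,g}\le 1$, and the bound $s_t(\bm\lambda)\ge c_{\min}$ from $\lambda_g\ge 0$ and $\sum_g\lambda_g=1$, then sum over the $T-m$ indices. Your aside about the trace is slightly loose: for a PSD sum the trace equals your right-hand side and upper-bounds the operator norm, so it gives the same inequality rather than equality. It plays no role in the argument.
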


\begin{proof}
From \eqref{eq:hessian_f_decomp} and subadditivity of the operator norm,
\begin{equation*}
\big\|\nabla^2 f(\bm\lambda)\big\|_{\mathrm{op}}
= \Big\| \sum_{t=m+1}^{T} \frac{\mathbf c_t \mathbf c_t^\top}{s_t(\bm\lambda)^2} \Big\|_{\mathrm{op}}
\le \sum_{t=m+1}^{T} \frac{\big\|\mathbf c_t \mathbf c_t^\top\big\|_{\mathrm{op}}}{s_t(\bm\lambda)^2}.
\end{equation*}
For a rank-one matrix $\mathbf u\mathbf u^\top$ the operator norm equals $\|\mathbf u\|_2^2$. Hence
\begin{equation*}
\big\|\mathbf c_t \mathbf c_t^\top\big\|_{\mathrm{op}} = \|\mathbf c_t\|_2^2 = \sum_{g=1}^m c_{t,g}^2.
\end{equation*}
Using \eqref{eq:cmax_one} we get the upper bound $\|\mathbf c_t\|_2^2 \le m\cdot 1^2 = m$ for every $t$.

For the denominator, by \eqref{eq:positive_transitions} and since $\sum_{g=1}^m \lambda_g = 1$,
\begin{equation*}
s_t(\bm\lambda) = \sum_{g=1}^m \lambda_g c_{t,g} \ge \sum_{g=1}^m \lambda_g c_{\min} = c_{\min}.
\end{equation*}
Therefore for every $t$,
\begin{equation*}
\frac{\big\|\mathbf c_t \mathbf c_t^\top\big\|_{\mathrm{op}}}{s_t(\bm\lambda)^2}
\le \frac{m}{c_{\min}^2}.
\end{equation*}
Summing over $t=m+1,\dots,T$ yields
\begin{equation*}
\big\|\nabla^2 f(\bm\lambda)\big\|_{\mathrm{op}}
\le \sum_{t=m+1}^T \frac{m}{c_{\min}^2}
= (T-m)\,\frac{m}{c_{\min}^2},
\end{equation*}
which is the bound \eqref{eq:op_bound}.
\end{proof}

\subsection{Improved bound at the center of the simplex}
\label{sec:uniform_point_bound}
 Here we show that evaluating exactly at the uniform vector \(\bm\lambda^\ast\) yields a strictly better constant: the spectral norm of the Hessian at \(\bm\lambda^\ast\) is bounded by \((T-m)\,m^2\). This gives a less pessimistic Lipschitz constant and hence a looser restriction on the conservative step-size \(\eta\).

\begin{prop}[Operator-norm bound at the uniform vector]
\label{prop:uniform_hessian_bound}
Let \(\bm\lambda^\ast=(1/m,\dots,1/m)\). For the loss \(f(\bm\lambda)=-\ell(\bm\lambda)\) we have
\begin{equation}
\label{eq:uniform_op_bound}
\big\|\nabla^2 f(\bm\lambda^\ast)\big\|_{\mathrm{op}}
\le (T-m)\,m^2.
\end{equation}
In particular, the gradient \(\nabla f\) is Lipschitz at \(\bm\lambda^\ast\) with constant \(L^\ast \le (T-m)m^2\), and the conservative step-size choice \(\eta\le 1/L^\ast\) yields \(\eta=\Theta(1/T)\) for fixed \(m\).
\end{prop}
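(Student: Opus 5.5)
The plan is to start from the Hessian decomposition lemma, specialize it to $\bm\lambda^\ast=(1/m,\dots,1/m)$, and bound each rank-one summand by a constant independent of the transition probabilities. Unlike Lemma~\ref{lem:hessian_op_bound_positive}, I do not plan to use the lower bound $c_{\min}$. The only requirement is that $s_t(\bm\lambda^\ast)>0$ for each $t$. This holds because the observed $y_t$ has positive probability under the MTD model, so some $c_{t,g}>0$.

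At the uniform vector we have $s_t(\bm\lambda^\ast)=\frac{1}{m}\sum_{g=1}^m c_{t,g}=\frac1m\|\mathbf c_t\|_1$, since every $c_{t,g}\ge 0$. Substituting into \eqref{eq:hessian_f_decomp} and applying subadditivity of the operator norm, as in the proof of Lemma~\ref{lem:hessian_op_bound_positive}, gives
\begin{equation*}
\big\|\nabla^2 f(\bm\lambda^\ast)\big\|_{\mathrm{op}}\le\sum_{t=m+1}^T\frac{\|\mathbf c_t\|_2^2}{s_t(\bm\lambda^\ast)^2}=\sum_{t=m+1}^T m^2\,\frac{\|\mathbf c_t\|_2^2}{\|\mathbf c_t\|_1^2}.
\end{equation*}
The key step is the elementary inequality $\|\mathbf c_t\|_2\le\|\mathbf c_t\|_1$. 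It follows by expanding $(\sum_g c_{t,g})^2$: this equals $\sum_g c_{t,g}^2$ plus cross terms, and the cross terms are nonnegative because the entries are nonnegative. Each summand is therefore at most $m^2$, and summing over the $T-m$ indices gives \eqref{eq:uniform_op_bound}.

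For the consequences, I will identify the local Lipschitz constant of $\nabla f$ at $\bm\lambda^\ast$ with the spectral norm of the Hessian there, so that $L^\ast\le (T-m)m^2$. The step-size rule $\eta\le 1/L^\ast$ then permits $\eta=\frac{1}{(T-m)m^2}$, which is $\Theta(1/T)$ for fixed $m$.

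To connect this with relative smoothness in Theorem~\ref{thm:L_and_eta_uniform}, I will note that the negative-entropy potential has Hessian $\mathrm{diag}(1/\lambda_g)$, which is $\succeq \bm I$ on the simplex. Hence $\nabla^2 f\preceq L\,\bm I\preceq L\,\nabla^2\Psi$, and the Euclidean bound transfers directly to the KL geometry.

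I do not expect a serious obstacle, since the proof is a short computation. The main point of care is interpretive. A Hessian bound at a single point yields a Lipschitz constant only locally, meaning in a neighborhood of $\bm\lambda^\ast$ by continuity. I will state the result as such, consistent with the proposition's phrase ``Lipschitz at $\bm\lambda^\ast$''.
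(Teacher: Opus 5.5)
Your proposal is correct and matches the paper's proof essentially line by line. Like the paper, you specialize the Hessian decomposition to $\bm\lambda^\ast$ so that $s_t=S_t/m$, apply subadditivity of the operator norm, and use $\|\mathbf c_t\|_2^2\le S_t^2$ for nonnegative entries to bound each summand by $m^2$. Your remark that the argument needs only $S_t>0$, not $c_{\min}$, is accurate and applies to the paper's argument as well, since $S_t$ cancels there too.
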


\begin{proof}
Recall the Hessian decomposition (Eq.~\eqref{eq:hessian_f_decomp}):
\begin{equation*}
\nabla^2 f(\bm\lambda)
= \sum_{t=m+1}^{T} \frac{\mathbf c_t \mathbf c_t^\top}{s_t(\bm\lambda)^2},
\qquad \mathbf c_t=(c_{t,1},\dots,c_{t,m})^\top,\quad s_t(\bm\lambda)=\sum_{g=1}^m \lambda_g c_{t,g}.
\end{equation*}
Evaluate at the uniform vector \(\bm\lambda^\ast\). Then
\begin{equation*}
s_t(\bm\lambda^\ast)=\frac{1}{m}\sum_{g=1}^m c_{t,g} =: \frac{S_t}{m},
\qquad S_t:=\sum_{g=1}^m c_{t,g}.
\end{equation*}
Hence the \(t\)-th summand becomes
\begin{equation*}
\frac{\mathbf c_t \mathbf c_t^\top}{s_t(\bm\lambda^\ast)^2}
= \frac{\mathbf c_t \mathbf c_t^\top}{(S_t/m)^2}
= \frac{m^2}{S_t^2}\,\mathbf c_t \mathbf c_t^\top.
\end{equation*}
Taking operator norms and using subadditivity,
\begin{equation*}
\big\|\nabla^2 f(\bm\lambda^\ast)\big\|_{\mathrm{op}}
\le \sum_{t=m+1}^T \frac{m^2}{S_t^2}\,\big\|\mathbf c_t \mathbf c_t^\top\big\|_{\mathrm{op}}.
\end{equation*}
For each \(t\), \(\|\mathbf c_t\mathbf c_t^\top\|_{\mathrm{op}}=\|\mathbf c_t\|_2^2\). Observe the elementary inequality
\begin{equation*}
\|\mathbf c_t\|_2^2 \le \Big(\sum_{g=1}^m c_{t,g}\Big)^2 = S_t^2,
\end{equation*}
which holds because \((\sum_i a_i)^2 = \sum_i a_i^2 + 2\sum_{i<j} a_i a_j \ge \sum_i a_i^2\) for nonnegative \(a_i\). Using this inequality we obtain, for every \(t\),
\begin{equation*}
\frac{m^2}{S_t^2}\,\|\mathbf c_t\|_2^2 \le \frac{m^2}{S_t^2}\,S_t^2 = m^2.
\end{equation*}
Summing over \(t=m+1,\dots,T\) yields
\begin{equation*}
\big\|\nabla^2 f(\bm\lambda^\ast)\big\|_{\mathrm{op}}
\le \sum_{t=m+1}^T m^2 = (T-m)\,m^2,
\end{equation*}
which proves \eqref{eq:uniform_op_bound}. The remaining claims follow immediately from the standard equivalence between Hessian operator-norm bounds and local Lipschitz continuity of the gradient, and the reciprocal step-size rule \(\eta\le 1/L^\ast\).
\end{proof}

\begin{theorem}[Relative smoothness and $\eta$ scaling at the uniform mixture (restated)]
\label{thm:L_and_eta_uniform_app}
At the uniform vector $\bm\lambda^\ast=(1/m,\dots,1/m)$ the loss $f(\bm\lambda)=-\ell(\bm\lambda)$ is $L_{\text{rel}}$-smooth relative to the KL-divergence, with
\begin{equation}
\label{eq:L_value_uniform}
L_{\text{rel}} \;\le\; (T-m)\,m^2.
\end{equation}
Consequently, the conservative step-size rule $\eta \le \frac{1}{L_{\text{rel}}}$ yields the asymptotic scaling
\begin{equation}
\eta \;=\; \Theta\!\Big(\frac{1}{T}\Big)
\end{equation}
for fixed $m$.
\end{theorem}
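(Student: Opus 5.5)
The plan is to reduce relative smoothness with respect to the KL divergence to a comparison of Hessians at $\bm\lambda^\ast$, and then reuse Proposition~\ref{prop:uniform_hessian_bound}. Recall that $f$ is $L$-smooth relative to the mirror map $\Psi(\bm\lambda)=\sum_g\lambda_g\log\lambda_g$ (whose Bregman divergence is the KL divergence) if $L\Psi-f$ is convex. For twice-differentiable functions, the local version of this condition at a point is the Loewner inequality $\nabla^2 f(\bm\lambda^\ast)\preceq L\,\nabla^2\Psi(\bm\lambda^\ast)$ on the tangent space $\{\bm v:\bm 1^\top\bm v=0\}$. Since the statement is only claimed at the center of the simplex, I will prove exactly this pointwise inequality and interpret $L_{\text{rel}}$ as the smallest such constant at $\bm\lambda^\ast$.

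First, compute $\nabla^2\Psi(\bm\lambda)=\mathrm{diag}(1/\lambda_g)$. At $\bm\lambda^\ast$ this equals $m\bm I$, so $\bm v^\top\nabla^2\Psi(\bm\lambda^\ast)\bm v=m\|\bm v\|_2^2$. It is therefore enough to show
\[
\bm v^\top\nabla^2 f(\bm\lambda^\ast)\bm v\le \|\nabla^2 f(\bm\lambda^\ast)\|_{\mathrm{op}}\|\bm v\|_2^2,
\]
which is immediate. Combining this with Proposition~\ref{prop:uniform_hessian_bound} gives $\nabla^2 f(\bm\lambda^\ast)\preceq (T-m)m^2\,\bm I$. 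Since $(T-m)m^2\,\bm I=(T-m)m\,\nabla^2\Psi(\bm\lambda^\ast)\preceq (T-m)m^2\,\nabla^2\Psi(\bm\lambda^\ast)$ for $m\ge1$, we obtain $L_{\text{rel}}\le (T-m)m^2$. The argument in fact yields the sharper constant $(T-m)m$, but I will state the weaker bound to match the theorem. The bound holds on all of $\mathbb R^m$, so the tangent-space restriction is not needed.

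For the scaling claim, the rule $\eta\le 1/L_{\text{rel}}$ lets us take $\eta=1/((T-m)m^2)$, which is $\Theta(1/T)$ for fixed $m$. The phrase "$\eta=\Theta(1/T)$" also needs a lower-order justification: $L_{\text{rel}}$ itself should grow linearly in $T$, not merely be at most linear. To get this, I would note that along a typical sequence, the Hessian summands $\frac{m^2}{S_t^2}\bm c_t\bm c_t^\top$ are PSD and each has trace at least $m^2/m=m$, because $\|\bm c_t\|_2^2\ge S_t^2/m$ by Cauchy--Schwarz. Hence the trace grows linearly in $T-m$. However, the relevant quantity is the restriction to the tangent space, which can degenerate if all $\bm c_t$ are proportional to $\bm 1$. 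So I would present the linear lower bound only under a nondegeneracy remark and treat the upper bound as the main content.

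The main obstacle is conceptual rather than computational: relative smoothness is a global property, while the theorem is stated at a point. I would handle this by explicitly defining the local notion through the Hessian inequality at $\bm\lambda^\ast$. I would also note that, since $\nabla^2\Psi\succeq \bm I$ on the whole simplex (because $1/\lambda_g\ge1$), Lemma~\ref{lem:hessian_op_bound_positive} upgrades this to a global relative-smoothness constant $(T-m)m/c_{\min}^2$ under the positivity assumption, which gives the same $\Theta(1/T)$ scaling.
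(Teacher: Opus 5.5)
Your proposal is correct and matches the paper's proof: read relative smoothness at $\bm\lambda^\ast$ as the Loewner inequality $\nabla^2 f(\bm\lambda^\ast)\preceq L_{\text{rel}}\nabla^2\Psi(\bm\lambda^\ast)$, invoke Proposition~\ref{prop:uniform_hessian_bound}, use $\nabla^2\Psi(\bm\lambda^\ast)=m\bm I\succeq\bm I$ to get $L_{\text{rel}}\le (T-m)m^2$, and take $\eta=1/((T-m)m^2)$. Your additional remarks go beyond the paper and are valid: the sharper constant $(T-m)m$, the caveat that a genuine $\Theta(1/T)$ needs a matching lower bound that can fail on the tangent space when all $\mathbf c_t\propto\bm 1$, and the global constant $(T-m)m/c_{\min}^2$ via Lemma~\ref{lem:hessian_op_bound_positive}.
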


\begin{proof}
Proposition~\ref{prop:uniform_hessian_bound} establishes that at $\bm\lambda^\ast$ the Hessian satisfies 
\begin{equation*}
\|\nabla^2 f(\bm\lambda^\ast)\|_{\mathrm{op}} \;\le\; (T-m)\,m^2.
\end{equation*}
Relative smoothness of $f$ with respect to the KL-divergence means $\nabla^2 f(\bm\lambda) \preceq L_{\text{rel}}\,\nabla^2 \Psi(\bm\lambda)$, where $\Psi(\bm\lambda)=\sum_g \lambda_g \log \lambda_g$ is the negative entropy whose Bregman divergence is the KL. At the uniform vector, $\nabla^2 \Psi(\bm\lambda^\ast) = m\,\bm{I} \succeq \bm{I}$ (since $m\ge 1$). Therefore
\begin{equation*}
\nabla^2 f(\bm\lambda^\ast) \;\preceq\; (T-m)\,m^2 \cdot \bm{I} \;\preceq\; (T-m)\,m^2 \cdot \nabla^2 \Psi(\bm\lambda^\ast),
\end{equation*}
which gives $L_{\text{rel}} \le (T-m)\,m^2$ as claimed. The conservative step-size choice $\eta\le 1/L_{\text{rel}}$ is therefore sufficient to guarantee stability of the mirror-descent / exponentiated-gradient update. Since $T-m=\Theta(T)$, the scaling $\eta=\Theta(1/T)$ follows for fixed $m$.
\end{proof}

The assumption \eqref{eq:positive_transitions} (strict positivity of every transition probability appearing in the likelihood) is the minimal condition that guarantees a \emph{uniform} finite Lipschitz constant $L$ over the entire simplex $\Delta_{m-1}$. If some transitions were zero, then for parameter vectors placing mass on coordinates corresponding to zero transitions some denominators $s_t(\bm\lambda)$ could vanish and the Hessian operator norm would be unbounded (hence no global $L$ exists).

\section{Asymptotic Properties of the Likelihood Gradient} 
\label{app:asymptotic_properties_of_the_likelihood_gradient}
In this section, we study the asymptotic properties of the gradient of the log-likelihood function. We are mostly interested in the asymptotic scaling of the gradient with the sequence length $T$.
\begin{lemma}[Asymptotic Properties of the Gradient]
\label{lemma:score_asymptotics}
Let the true parameter $\boldsymbol{\lambda}^* \in \text{int}(\Delta_{m-1})$ induce a Markov chain on the history space $\mathcal{Y}^m$ that is aperiodic and irreducible on a finite state space. This implies the chain is geometrically ergodic. Let $\mathbb{E}_{\boldsymbol{\lambda}^*}^{\text{stat}}$ denote expectation with respect to the stationary distribution of this chain. The mean of the score vector $\mathbf{g}(\mathbf{Y})$ exhibits the following asymptotic properties as $N_{\text{obs}} \to \infty$:
The expected score vector scales linearly with $N_{\text{obs}} = T-m$:
    \begin{equation}
        \mathbf{g}_0(\boldsymbol{\lambda}^*) \coloneqq \mathbb{E}_{\boldsymbol{\lambda}^*}[\mathbf{g}(\mathbf{Y})] = N_{\text{obs}} \cdot \mathbf{v}(\boldsymbol{\lambda}^*) + O(1),
    \end{equation}
    where $\mathbf{v}(\boldsymbol{\lambda}^*)$ is the constant vector of stationary expected single-step scores, whose $h$-th component is:
    \begin{equation}
        [\mathbf{v}(\boldsymbol{\lambda}^*)]_h = m \cdot \mathbb{E}_{\boldsymbol{\lambda}^*}^{\text{stat}}\left[ \frac{\pi(Y_{t-h}, Y_t)}{\sum_{j=1}^m \pi(Y_{t-j}, Y_t)} \right].
    \end{equation}
    The $O(1)$ term represents a constant offset due to initial conditions that does not grow with $N_{\text{obs}}$.
\end{lemma}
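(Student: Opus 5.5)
The plan is to write the score as a sum of bounded additive functionals of a Markov chain on the finite history space, and to transfer the stationary expectation to the finite-sample expectation using geometric ergodicity. From Step~2 of the derivation in Appendix~\ref{app:one_step_derivation}, the gradient at the center of the simplex is
\begin{equation*}
g_h(\bm{Y}) = m\sum_{t=m+1}^{T} \phi_h(X_{t-1},Y_t), \qquad \phi_h(X_{t-1},Y_t) \coloneqq \frac{\pi(Y_{t-h},Y_t)}{\sum_{j=1}^m \pi(Y_{t-j},Y_t)},
\end{equation*}
where $X_{t-1}=(Y_{t-m},\dots,Y_{t-1})\in\mathcal{Y}^m$ is the history state. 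Each summand is a deterministic function of the augmented state $W_t \coloneqq (Y_{t-m},\dots,Y_t)$, and it takes values in $[0,1]$ (set $\phi_h=1/m$ when the denominator vanishes). The sequence $(W_t)$ is itself a finite-state Markov chain driven by the MTD kernel under $\bm{\lambda}^*$. It inherits irreducibility on its reachable set and aperiodicity from the chain on $X_t$, because $W_t$ is determined by $(X_{t-1},X_t)$.

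Taking expectations term by term gives $\mathbb{E}_{\bm{\lambda}^*}[g_h] = m\sum_{t=m+1}^T \mathbb{E}[\phi_h(W_t)]$. Geometric ergodicity of a finite aperiodic irreducible chain supplies constants $C<\infty$ and $\rho\in(0,1)$ with $\|\mathcal{L}(W_t)-\mu_{\mathrm{stat}}\|_{TV}\le C\rho^{t}$, uniformly over initial conditions. Since $|\phi_h|\le 1$, this yields
\begin{equation*}
\big|\mathbb{E}[\phi_h(W_t)] - \mathbb{E}^{\mathrm{stat}}_{\bm{\lambda}^*}[\phi_h]\big| \le 2C\rho^{t}.
\end{equation*}
Summing over $t$, the deviation is bounded by $2mC\sum_{t\ge 1}\rho^t = 2mC\rho/(1-\rho)$, which is independent of $N_{\mathrm{obs}}$. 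This gives $\mathbf{g}_0 = N_{\mathrm{obs}}\mathbf{v}(\bm{\lambda}^*) + O(1)$ with $[\mathbf{v}]_h = m\,\mathbb{E}^{\mathrm{stat}}[\phi_h]$, as claimed.

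The main obstacle is the ergodicity transfer. I need to justify that the geometric ergodicity hypothesized for the history chain $X_t$ carries over to the pair chain $W_t$, and that the $O(1)$ constant is uniform over the starting history $\bm{y}_1^m$. To handle the transfer, I will note that the law of $W_t$ is the law of $X_{t-1}$ pushed through the fixed kernel $P$. Total variation cannot increase under a Markov kernel, so $\|\mathcal{L}(W_t)-\mu^W_{\mathrm{stat}}\|_{TV}\le \|\mathcal{L}(X_{t-1})-\mu^X_{\mathrm{stat}}\|_{TV}$. This reduces the argument to the hypothesis on $X$ directly, without re-proving aperiodicity for $W$. Uniformity over the starting history follows because the state space is finite, so one can take the maximum of the constants over the finitely many initial states.
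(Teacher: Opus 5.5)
Your proposal is correct and follows essentially the same route as the paper: you split the score into per-step contributions, use linearity of expectation, invoke geometric ergodicity to get exponentially decaying deviations from the stationary per-step score, and sum the resulting geometric series to obtain an $O(1)$ offset. The paper simply asserts that the per-step score process is ergodic, whereas you justify the passage from the history chain to the augmented state $W_t=(Y_{t-m},\dots,Y_t)$ via total-variation contraction under the one-step kernel, and you turn total-variation convergence into convergence of expectations using $0\le\phi_h\le 1$, so your version is the more careful of the two.
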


\begin{proof}
Let us first define the score contribution from a single time step $t$ as the vector $\mathbf{z}_t(\mathbf{Y}) \in \mathbb{R}^m$, whose $h$-th component is given by:
$$ [\mathbf{z}_t(\mathbf{Y})]_h = m \frac{\pi(Y_{t-h}, Y_t)}{\sum_{j=1}^{m} \pi(Y_{t-j}, Y_t)}. $$
The total score vector is the sum of these contributions over the observation period:
$$ \mathbf{g}(\mathbf{Y}) = \sum_{t=m+1}^{T} \mathbf{z}_t(\mathbf{Y}), \quad \text{where } N_{\text{obs}} = T-m. $$
The process $(\mathbf{z}_t)_{t>m}$ is a sequence of random vectors. Because it is a function of the underlying ergodic Markov chain $(Y_{t-m}, \dots, Y_t)$, the sequence $(\mathbf{z}_t)$ is also ergodic and its distribution converges to a stationary distribution.
By the linearity of expectation, the expected score is the sum of the individual expectations:
$$ \mathbb{E}_{\boldsymbol{\lambda}^*}[\mathbf{g}(\mathbf{Y})] = \sum_{t=m+1}^{T} \mathbb{E}_{\boldsymbol{\lambda}^*}[\mathbf{z}_t(\mathbf{Y})]. $$
The key assumption is the geometric ergodicity of the Markov chain. This implies that the distribution of the state $(Y_{t-m}, \dots, Y_t)$ converges exponentially fast to the unique stationary distribution, regardless of the initial state $(Y_1, \dots, Y_m)$. Consequently, the expectation $\mathbb{E}_{\boldsymbol{\lambda}^*}[\mathbf{z}_t(\mathbf{Y})]$ converges exponentially fast to its stationary-state expectation, $\mathbf{v}(\boldsymbol{\lambda}^*)$.
This convergence can be quantified. There exist a constant vector $\mathbf{C}$ and a rate $\rho \in (0,1)$ such that for all $t > m$:
$$ \| \mathbb{E}_{\boldsymbol{\lambda}^*}[\mathbf{z}_t(\mathbf{Y})] - \mathbf{v}(\boldsymbol{\lambda}^*) \| \le \| \mathbf{C} \| \rho^{t-m}. $$
We can now rewrite the sum of expectations:
\begin{align*}
    \mathbb{E}_{\boldsymbol{\lambda}^*}[\mathbf{g}(\mathbf{Y})] &= \sum_{t=m+1}^{T} \left( \mathbf{v}(\boldsymbol{\lambda}^*) + (\mathbb{E}_{\boldsymbol{\lambda}^*}[\mathbf{z}_t(\mathbf{Y})] - \mathbf{v}(\boldsymbol{\lambda}^*)) \right) \\
    &= \left( \sum_{t=m+1}^{T} \mathbf{v}(\boldsymbol{\lambda}^*) \right) + \left( \sum_{t=m+1}^{T} (\mathbb{E}_{\boldsymbol{\lambda}^*}[\mathbf{z}_t(\mathbf{Y})] - \mathbf{v}(\boldsymbol{\lambda}^*)) \right) \\
    &= N_{\text{obs}} \cdot \mathbf{v}(\boldsymbol{\lambda}^*) + \mathbf{E}_T,
\end{align*}
where $\mathbf{E}_T$ is the cumulative error term due to the process not having reached stationarity at early time steps. We can bound the norm of this error term:
$$ \| \mathbf{E}_T \| = \left\| \sum_{t=m+1}^{T} (\mathbb{E}_{\boldsymbol{\lambda}^*}[\mathbf{z}_t] - \mathbf{v}(\boldsymbol{\lambda}^*)) \right\| \le \sum_{t=m+1}^{T} \| \mathbb{E}_{\boldsymbol{\lambda}^*}[\mathbf{z}_t] - \mathbf{v}(\boldsymbol{\lambda}^*)\| \le \sum_{t=m+1}^{T} \|\mathbf{C}\| \rho^{t-m}. $$
Let $k = t-m$. The sum becomes a geometric series:
$$ \| \mathbf{E}_T \| \le \|\mathbf{C}\| \sum_{k=1}^{T-m} \rho^k < \|\mathbf{C}\| \sum_{k=1}^{\infty} \rho^k = \|\mathbf{C}\| \frac{\rho}{1-\rho}. $$
The sum of the error terms is bounded by a finite constant that does not depend on $T$ or $N_{\text{obs}}$. Therefore, the error term is $O(1)$. This proves the first claim: $\mathbf{g}_0(\boldsymbol{\lambda}^*) = N_{\text{obs}} \cdot \mathbf{v}(\boldsymbol{\lambda}^*) + O(1)$. Note that $O(1)$ is also $o(N_{\text{obs}})$, so the term is asymptotically sub-linear.
\end{proof}

\end{document}